\def\eqref#1{equation~\ref{#1}}
\def\1{\bm{1}}
\DeclareMathAlphabet{\mathsfit}{\encodingdefault}{\sfdefault}{m}{sl}
\SetMathAlphabet{\mathsfit}{bold}{\encodingdefault}{\sfdefault}{bx}{n}
\newcommand{\R}{\mathbb{R}}
\DeclareMathOperator*{\argmin}{arg\,min}
\newcommand{\Z}{{\bm Z}}
\newcommand{\eye}{{\bm I}}
\newcommand{\y}{{\bm y}}
\newcommand{\g}{{\bm g}}
\newcommand{\X}{{\bm X}}
\newcommand{\n}{{\bm n}}
\newcommand{\x}{{\bm x}}
\newcommand{\w}{{\bm w}}
\newcommand{\s}{{\bm s}}
\newcommand{\z}{{\bm z}}
\newcommand{\cmark}{\ding{51}}%
\newcommand{\xmark}{\ding{55}}%
\definecolor{myblue}{RGB}{0,0, 150}
\definecolor{mygray}{RGB}{90,90,110}
\crefname{theorem}{theorem}{theorems}
\Crefname{theorem}{Theorem}{Theorems}
\crefname{proposition}{proposition}{propositions}
\Crefname{proposition}{Proposition}{Propositions}
\renewcommand{\ALG@beginalgorithmic}{\small}
\title{Diffusion State-Guided Projected Gradient for Inverse Problems} 
\author{Rayhan Zirvi$^*$, Bahareh Tolooshams$^*$, \& Anima Anandkumar\\
Computing and Mathematical Sciences\\
California Institute of Technology\\
\texttt{\{rayhanzirvi,btoloosh,anima\}@caltech.edu} \\
$^*$ Equal contribution 
}
\begin{document}

\maketitle

\begin{abstract}
  Recent advancements in diffusion models have been effective in learning data priors for solving inverse problems. They leverage diffusion sampling steps for inducing a data prior while using a measurement guidance gradient at each step to impose data consistency. For general inverse problems, approximations are needed when an unconditionally trained diffusion model is used since the measurement likelihood is intractable, leading to inaccurate posterior sampling. In other words, due to their approximations, these methods fail to preserve the generation process on the data manifold defined by the diffusion prior, leading to artifacts in applications such as image restoration. To enhance the performance and robustness of diffusion models in solving inverse problems, we propose \textit{Diffusion State-Guided Projected Gradient} (DiffStateGrad), which projects the measurement gradient onto a subspace that is a low-rank approximation of an intermediate state of the diffusion process. DiffStateGrad, as a module, can be added to a wide range of diffusion-based inverse solvers to improve the preservation of the diffusion process on the prior manifold and filter out artifact-inducing components. We highlight that DiffStateGrad improves the robustness of diffusion models in terms of the choice of measurement guidance step size and noise while improving the worst-case performance. Finally, we demonstrate that DiffStateGrad improves upon the state-of-the-art on linear and nonlinear image restoration inverse problems. Our code is available at \url{https://github.com/Anima-Lab/DiffStateGrad}.
\end{abstract}

\section{Introduction}
Inverse problems are ubiquitous in science and engineering, playing a crucial role in simulation-based scientific discovery and real-world applications~\citep{groetsch1993inverse}. They arise in fields such as medical imaging, remote sensing, astrophysics, computational neuroscience, molecular dynamics simulations, systems biology, and generally solving partial differential equations (PDEs). Inverse problems aim to recover an unknown signal $\x^{\star} \in \R^n$ from noisy observations
\begin{equation}
\y = \mathcal{A}(\x^{\star}) + \n \in \R^{m},
\end{equation}
where $\mathcal{A}$ denotes the measurement operator, and $\n$ is the noise. Inverse problems are ill-posed, i.e., in the absence of a structure governing the underlying desired signal $\x$, many solutions can explain the measurements $\y$. In the Bayesian framework, this structure is translated into a \textit{prior} $p(\x)$, which can be combined with the \textit{likelihood term} $p(\y | \x)$ to define a \textit{posterior distribution} $p(\x | \y) \propto p(\y | \x) p(\x)$. Hence, solving the inverse problem translates into performing a Maximum a Posteriori (MAP) estimation or drawing high-probability samples from the posterior~\citep{stuart2010inverse}. Given the forward model $p(\y | \x)$, the critical step is to choose the prior $p(\x)$, which is often challenging; one needs domain knowledge to define a prior or a large amount of data to learn it.

\begin{figure}[t]
    \centering
    \begin{subfigure}[b]{0.99\linewidth}
        \centering
    \includegraphics[width=1.0\linewidth]{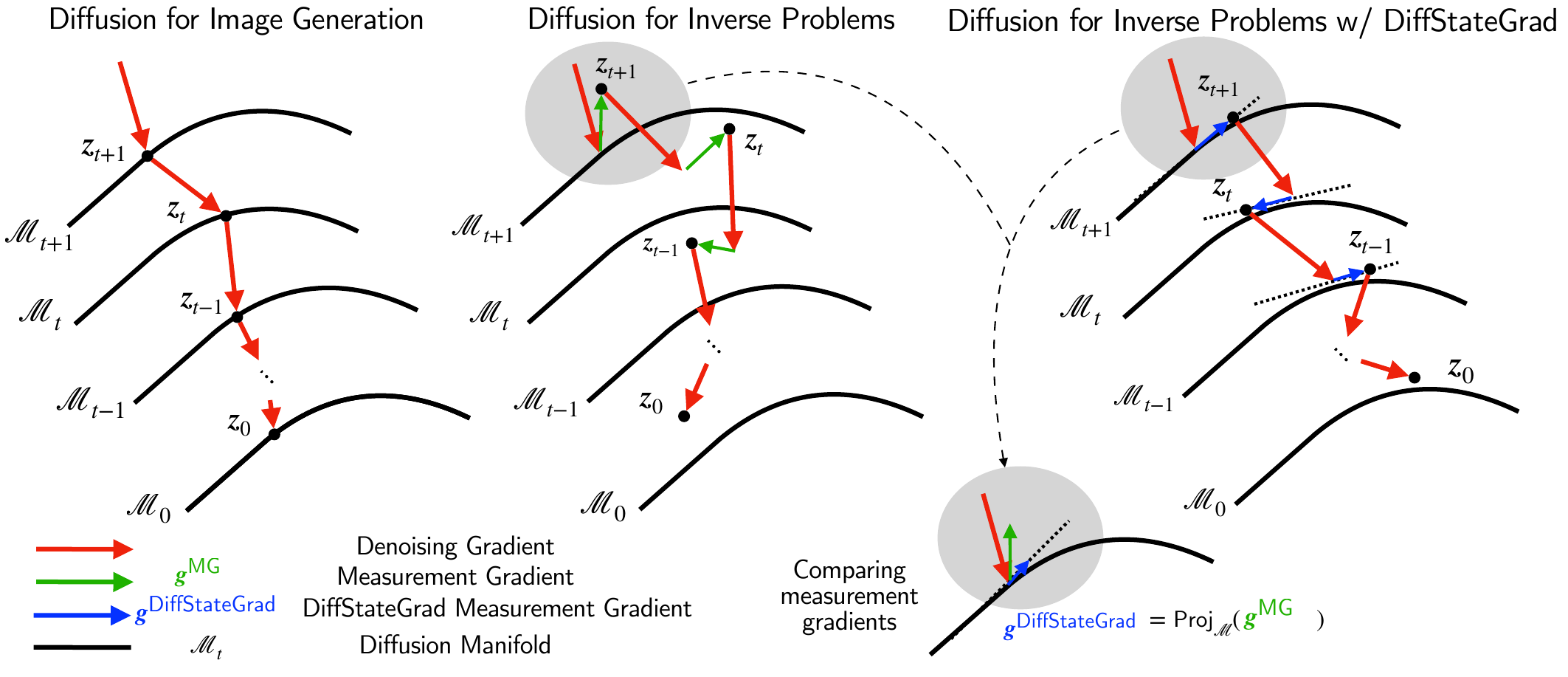}
    \end{subfigure}
    \vspace{-2mm}
\caption{\textbf{High-level interpretation of Diffusion State-Guided Projected Gradient (DiffStateGrad).} DiffStateGrad projects the measurement gradient onto a subspace defined to capture statistics of the diffusion state at time $t$ on which the gradient guidance is applied. This helps the process stay closer to the data manifold during the diffusion process, resulting in better posterior sampling. Without such projection, the measurement gradient pushes the process off the data manifold. For when the measurement gradient guidance is applied to  $\z_{0\mid t}$, the projection is defined to capture the structure of the tangent space of the clean data manifold. The dotted straight line conceptually visualizes the subspace to which the measurement gradient is projected.}
\label{fig:fig1}
\vspace{-6mm}
\end{figure}

Prior works consider sparse priors and provide a theoretical analysis of conditions for the unique recovery of data, a problem known as compressed sensing~\citep{donoho2006compressed, candes2006robust}. Sparse priors have shown usefulness in medical imaging~\citep{lustig2007sparse}, computational neuroscience~\citep{olshausen1997sparse}, and engineering applications. This approach is categorized into model-based priors where a structure is assumed on the signal instead of being learned.

\begin{wrapfigure}[30]{r}{0.45\textwidth}
 \begin{center}
 \vspace{-2mm}
\begin{flushright}
\begin{subfigure}[b]{0.95\linewidth}
    \includegraphics[width=0.999\linewidth]{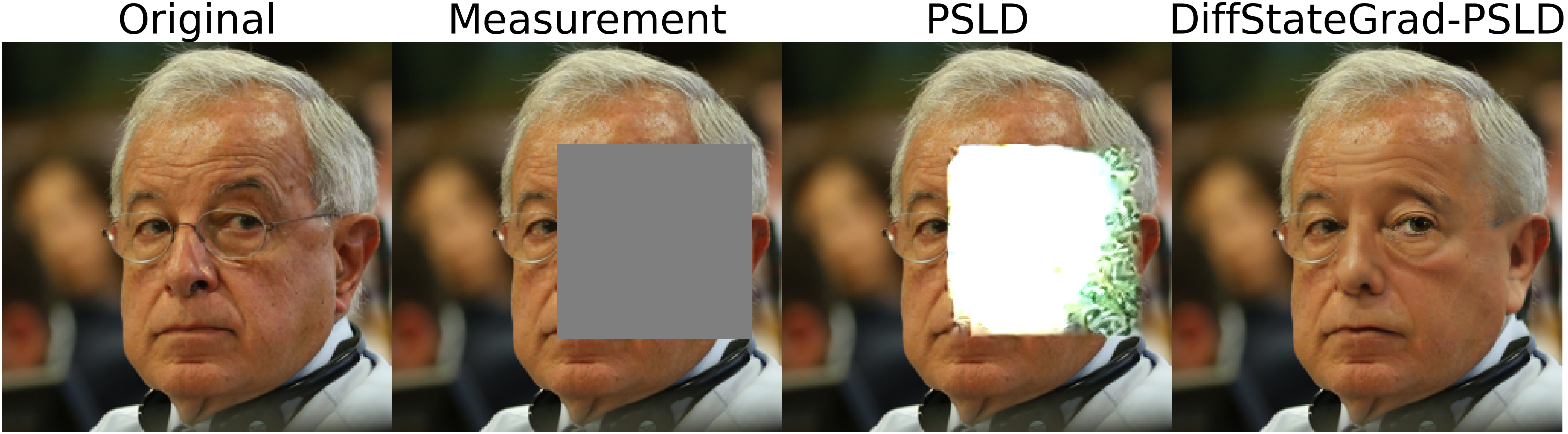}
    \vspace{-4mm}
    \caption{Image visualizations.}
    \label{fig:psld_default_highmeas}
\end{subfigure} 
\end{flushright}
\begin{subfigure}[b]{0.999\linewidth}
	\centering
\includegraphics[width=0.999\linewidth]{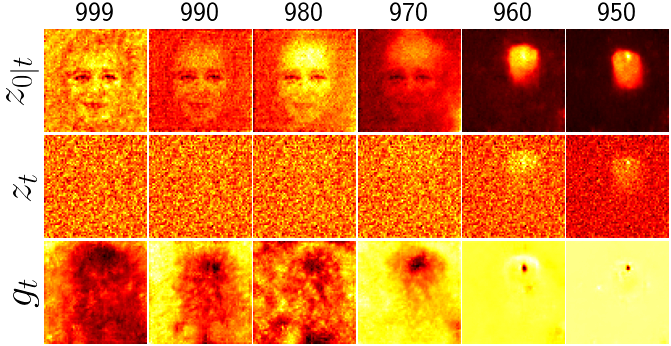}
\vspace{-4mm}
 \label{fig:psld_lowrank_lowmeas}
 \caption{PSLD diffusion dynamics.}
 \end{subfigure}
\begin{subfigure}[b]{0.999\linewidth}
	\centering
\includegraphics[width=0.999\linewidth]{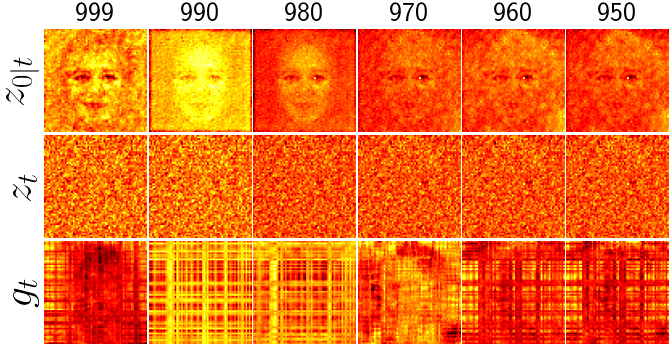}
\vspace{-4mm}
 \label{fig:psld_lowrank_highmeas}
 \caption{DiffStateGrad-PSLD diffusion dynamics.}
 \end{subfigure}
 \vspace{-5mm}
  \caption{\textbf{Visualization of DiffStateGrad in removing artifacts.} The large MG step size pushes the process away from the manifold in PSLD, while DiffStateGrad-PSLD is unaffected. The title refers to the diffusion steps.}
 \label{fig:vis_largestep}
\end{center}
\end{wrapfigure}

Recent literature goes beyond such model-based priors and leverages information from data. The latest works employ generative diffusion models~\citep{song2019generative, kadkhodaie2021stochastic}, which implicitly learn the data prior $p(\x)$ by learning a process that transforms noise into samples from a complex data distribution. For inverse problems, this reverse generation process is guided by the likelihood $p(\y | \x)$, forming a denoiser posterior, to generate data-consistent samples. While diffusion models are state-of-the-arts, they still face challenges in solving inverse problems.

The main challenge arises from the fact that the denoiser posterior, specifically the likelihood component $p(\y | \x)$, is intractable since the diffusion is trained unconditionally~\citep{song2021score}. Prior work addresses this challenge by proposing various approximations or projections to the gradient related to the measurement likelihood $p(\y | \x)$ to achieve likely solutions~\citep{kawar2022denoising}; when these approximations are not valid, it results in inaccurate posterior sampling and the introduction of ``artifacts'' in the reconstructed data~\citep{chung2023diffusion}. \emph{Latent} diffusion models (LDMs)~\citep{rombach2022high}, due to the nonlinearity of the latent-to-pixel deco
der, further exacerbate this challenge. Besides this approximation, the lack of robustness of diffusion models to the measurement gradient step size~\citep{peng2024improving} and the measurement noise, and the lack of guarantees for worst-case performance limits their practical applications for inverse problems.

\textbf{Our contributions:} We propose a \textit{Diffusion State-Guided Projected Gradient} (DiffStateGrad) to address the challenge of staying on the data manifold in solving inverse problems. We focus on gradient-based measurement guidance approaches that use the measurement as guidance to move the intermediate diffusion state $\x_t$ toward high-probability regions of the posterior. DiffStateGrad projects the measurement guidance gradient onto a low-rank subspace, capturing the data statistics of the learned prior  (\Cref{fig:fig1}). We visualize how the diffusion process is pushed off the manifold when the measurement step size is relatively large in a diffusion model and how the incorporation of DiffStateGrad alleviates this challenge (\Cref{fig:vis_largestep}). We define a projection step to preserve the measurement gradient on the tangent space of the state manifold. We achieve this projection by performing singular value decomposition (SVD) on the diffusion state of an image to which guidance is applied and use the $r$ highest contributing singular vectors as a choice of our projection matrix; by projecting the measurement gradient onto our proposed subspace, we aim to remove the directions orthogonal to the local manifold structure.

\begin{itemize}[leftmargin=5mm]
    \setlength\itemsep{0em}
    \item We show that the crucial factor is the choice of the subspace, not the low-rank nature of the subspace projection. We find that our DiffStateGrad enhances performance. Our projection defines the subspace based on the structure of the state to which the measurement guidance is applied. This is in contrast to random subspace projections or low-rank approximations (\Cref{tab:proj_random_comparison}).
    \item We theoretically prove how DiffStateGrad helps the samples remain on or close to the manifold, hence improving reconstruction quality (\Cref{prop:plorg}). 
    \item We demonstrate that DiffStateGrad increases the robustness of diffusion models to the measurement guidance gradient step size (\Cref{fig:combined_robustness}, \Cref{tab:psld_default_large_mg_step}) and the measurement noise (\Cref{fig:fig3}). For example, for a large step size, DiffStateGrad drastically improves the LPIPS of PSLD~\citep{rout2023solving} from $0.463$ to $0.165$ on random inpainting. For large measurement noise, DiffStateGrad improves the SSIM of DAPS~\citep{zhang2024daps} from $0.436$ to $0.705$ on box inpainting.
    \item We empirically show that DiffStateGrad improves the worst-case performance of the diffusion model, e.g., significantly reducing the failure rate (PSNR $<20$) from $26\%$ to $4\%$ on the phase retrieval task, increasing their reliability (\Cref{fig:combined_robustness_psnr}). DiffStateGrad consistently shows lower standard deviation across the test datasets than state-of-the-art methods.
    \item We demonstrate that DiffStateGrad significantly improves the performance of state-of-the-art (SOTA) methods, especially in challenging tasks such as phase retrieval and high dynamic range reconstruction. For example, DiffStateGrad improves the PSNR of ReSample~\citep{song2023solving} from $27.61 (8.07)$ to $31.19 (4.33)$ for phase retrieval, reporting mean (std). Our experiments cover a wide range of linear inverse problems of box inpainting, random inpainting, Gaussian deblur, motion deblur, and super-resolution (\Cref{tab:linear_tasks_ffhq,tab:linear_tasks_imagenet}) and nonlinear inverse problems of phase retrieval, nonlinear deblur, and high dynamic range (HDR) (\Cref{tab:nonlinear_tasks_ffhq}) for image restoration tasks.
\end{itemize}

\begin{wraptable}[10]{r}{0.40\textwidth}
\vspace{-5mm}
\centering
\fontsize{8}{10}\selectfont
\setlength{\tabcolsep}{2.5pt}
\resizebox{.999\linewidth}{!}{
\begin{tabular}{l*{3}{c}}
\toprule
Projection Subspace & LPIPS$\downarrow$ & SSIM$\uparrow$ & PSNR$\uparrow$ \\
\midrule
No Projection & 0.246 & \underline{0.809} & 29.05 \\
\midrule
Random matrix & 0.299 & 0.753 & 27.30  \\
Measurement gradient & \underline{0.242}  & 0.808 & \underline{29.21} \\
DiffStateGrad (ours) & \textbf{0.165}  & \textbf{0.898} & \textbf{31.68} \\
\bottomrule
\end{tabular}
}
\vspace{-2mm}
\caption{\textbf{Advantage of diffusion state-guided projection.} Results are from random inpainting on FFHQ $256 \times 256$.}
\label{tab:proj_random_comparison}
\end{wraptable}

\section{Background \& Related Works}

\textbf{Learning-based priors.}\quad These methods leverage data structures captured by a pre-trained denoiser~\citep{romano2017little} as plug-and-play priors~\citep{venkatakrishnan2013plug}, or deep generative models such as variational autoencoders (VAEs)
~\citep{kingma2013auto} and generative adversarial networks (GANs)~\citep{goodfellow2014generative} to solve inverse problems~\citep{bora2017compressed, ulyanov2018deep}. The state-of-the-art is based on generative diffusion models, which have shown promising performance in generating high-quality samples in computer vision \citep{song2023pseudoinverseguided}, solving PDEs~\citep{shu2023physics}, and high-energy physics~\citep{shmakov2024end}.

\textbf{Diffusion models.}\quad
Diffusion models conceptualize the generation of data as the reverse of a noising process, where a data sample $\x_t$ at time $t$ within the interval $[0, T]$ follows a specified stochastic differential equation (SDE). This SDE~\citep{song2021score} for the data noising process is described by
\begin{equation}
d\x = -(\nicefrac{\beta_t}{2}) \x \, dt + \sqrt{\beta_t} \, d\w,
\end{equation}
where $\beta_t \in (0, 1)$ is a positive, monotonically increasing function of time $t$, and $\w$ represents a standard Wiener process. The process begins with an initial data distribution $\x_0 \sim p_{\text{data}}$ and transitions to an approximately Gaussian distribution $\x_T \sim \mathcal{N}(\boldsymbol{0}, \eye)$ by time $T$. The objective of regenerating the original data distribution from this Gaussian distribution involves reversing the noising process through a reverse SDE of the form
\begin{equation}
d\x = [-(\nicefrac{\beta_t}{2}) \x - \beta_t \nabla_{\x_t} \log p_t(\x_t)]dt + \sqrt{\beta_t} d\bar{\w},
\end{equation}
where $dt$ indicates time moving backward and $\bar{\w}$ is the reversed Wiener process. To approximate $\nabla_{\x_t} \log p_t(\x_t)$, a neural network $s_\theta$ trained via denoising score matching \citep{vincent2011score} is used.

\textbf{Solving inverse problems with diffusion models.}\quad
Diffusion-based approaches to inverse problems seek to reconstruct the data $\x_0$ from the measurement $\y = \mathcal{A}(\x_0) + \n$ via the following reverse SDE
\begin{equation}
d\x = [-(\nicefrac{\beta_t}{2}) \x - \beta_t (\nabla_{\x_t} \log{p_t(\x_t)} + \nabla_{\x_t} \log p_t(\y | \x_t))]dt + \sqrt{\beta_t} d\bar{\w}.
\end{equation}
Conceptually, the learned score function $\nabla_{\x_t} \log{p_t(\x_t)}$ guides the reverse diffusion process from noise to the data distribution, and the likelihood-related term $\nabla_{\x_t} \log p_t(\y \mid \x_t)$ ensures measurement consistency. When the model is trained unconditionally, the main challenge is the intractable denoiser posterior due to the lack of an explicit analytical expression for $\nabla_{\x_t} \log{p_t(\y \mid \x_t)}$; the exact relationship between $\y$ and intermediate states $\x_t$ is not well-defined, except at the initial state $\x_0$.

\begin{wrapfigure}[16]{r}{0.45\textwidth}
    \begin{center}
    \vspace{-4mm}
    \centering
    \begin{subfigure}[b]{1.0\linewidth}
    \centering
    \includegraphics[width=\linewidth]{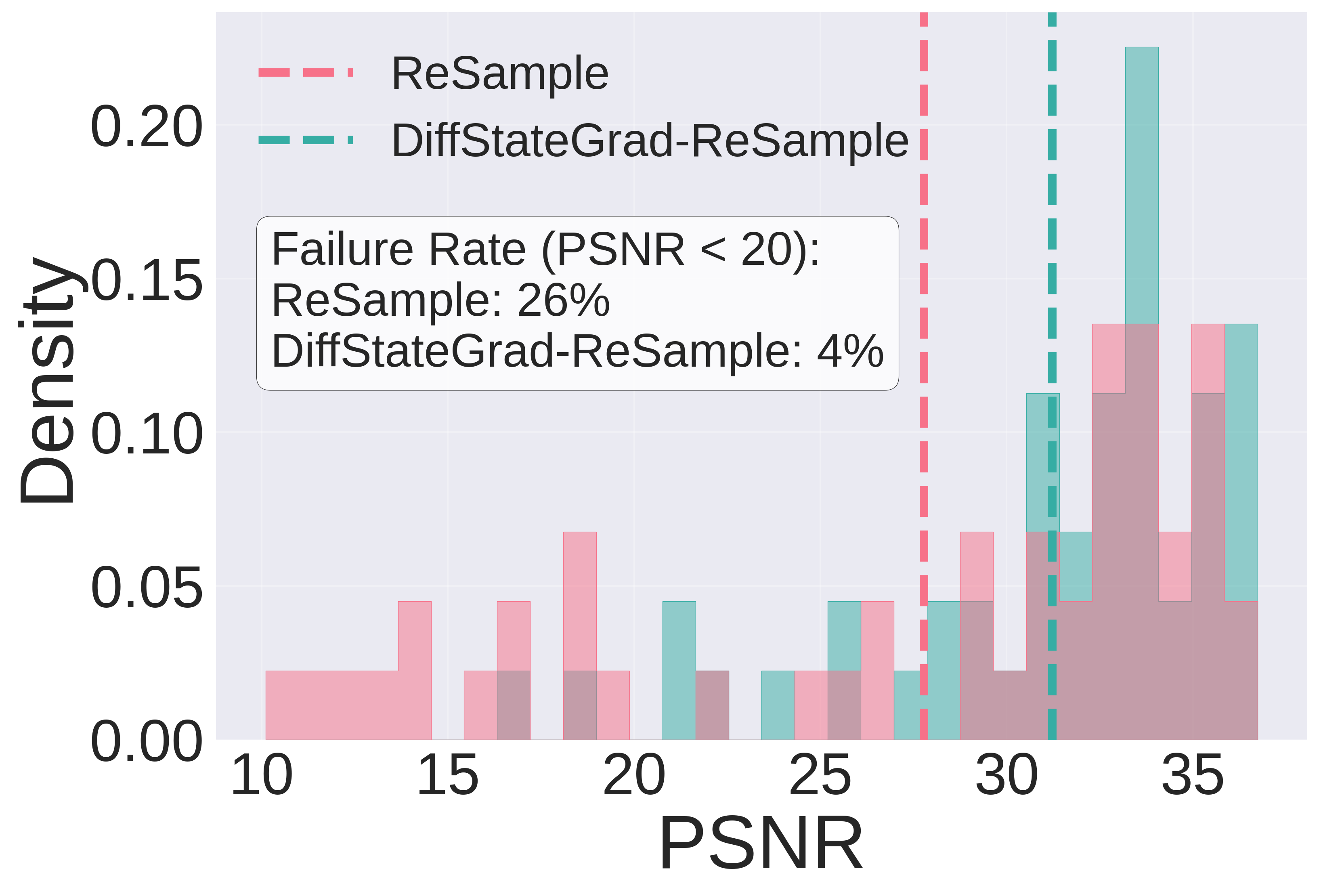}
    \end{subfigure}
    \vspace{-6mm}
    \caption{\textbf{DiffStateGrad improves the worst-case performance}. The PSNR histogram for phase retrieval shows that DiffStateGrad significantly lowers the failure rate.}
    \label{fig:combined_robustness_psnr}
    \end{center}
\end{wrapfigure}

\textbf{Solving inverse problems with latent diffusion models.}\quad For complex scenarios where direct application of pixel-based models is computationally expensive or ineffective, latent diffusion models (LDMs) offer a promising alternative \citep{rombach2022high}. Given data $\x \in \R^n$, the LDM framework utilizes an encoder $\mathcal{E}: \R^n \rightarrow \R^k$ and a decoder  $\mathcal{D}: \R^k \rightarrow \R^n$, with $k \ll n$, to work in a compressed latent space. $\x_T$ is encoded into a latent representation $\z_T = \mathcal{E}(\x_T)$ and serves as the starting point for the reverse diffusion process. Then, $\z_0$ is decoded to $\x_0 = \mathcal{D}(\z_0)$, the final image reconstruction. Using a \emph{latent} diffusion model introduces an additional complexity to solving inverse problems. The challenge arises from the nonlinear nature and non-uniqueness mapping of the encoder/decoder~\citet{rout2023solving}; PSLD proposed to improve performance by enforcing fixed-point properties on representations.

\textbf{Diffusion-based inverse problems addressing challenges of intractable denoiser posterior.}\quad To address the intractability of the gradient for the reverse diffusion, Diffusion Posterior Sampling (DPS)~\citep{chung2023diffusion}, approximates the probability $p(\y \mid \x_t) \approx p(\y | \hat{\x}_0 \coloneqq \mathbb{E}[\x_0 \mid \x_t])$ using the conditional expectation of the data. Extending to the latent case, \textit{Latent-DPS} uses $p(\y|\z_t) \approx p(\y|\hat{\x}_0 := \mathcal{D}(\mathbb{E}[\z_0|\z_t]))$~\citep{song2023solving}. Two intuitive drawbacks of this approach are that a) the image estimate $\hat{\x}_0$ is reconstructed using an expectation, which results in inaccurate estimations for multi-modal complex distributions, and b) the measurement gradient directly updates the noisy state $\x_t$, which may push away the state from the desired noise level at $t$. 

Prior works aim to address the first challenge by going beyond first-order statistics~\citep{rout2024beyond} or incorporating posterior covariance into the maximum likelihood estimation step~\citep{peng2024improving}. Other lines of work address the second issue by decoupling the measurement guidance from the sampling process; they update the data estimate $\hat{\x}_0$ at time $t$ using the measurement gradient guidance before resampling it to the noisy manifold at time $t-1$~\citep{song2023solving, zhang2024daps}. The above-discussed approaches are still highly sensitive to the measurement gradient step size~\citep{peng2024improving}. Indeed, balancing the measurement gradient with the unconditional score function remains a significant challenge to solving inverse problems using measurement-guided generation. \citet{wu2024principled} avoids the discussed approximations and samples from the posterior directly to resolve the need to find a balance between measurement guidance and the prior process.

\textbf{Projections in diffusion models.}\quad Manifold and subspace projections are used in various contexts in diffusion models. MPGD~\citep{hemanifold} uses a manifold-preserving approach to improve the efficiency of diffusion generation and solving inverse problems. While this method follows a similar sentiment as our proposed framework, it is only applicable when the measurement gradient is applied to $\hat \x_{0\mid t}$. Moreover, it requires the existence of an autoencoder for achieving manifold projection, and its performance is heavily dependent on the expressive power of the autoencoder. Unlike MPGD, DiffStateGrad is applicable to methods that apply the guidance to $\hat \x_{0\mid t}$/$\hat \z_{0\mid t}$ (i.e., ReSample and DAPS) and methods that apply the guidance to $\hat \x_{t}$/$\hat \z_{t}$ (i.e., PSLD and DPS). \citet{chung2022improving} proposes a manifold constraint to project the measurement gradient into the data manifold $\x_0$  while the guidance updates $\x_t$; our proposal is more effective since we project the measurement gradient on the noisy diffusion state related to $\x_t$ or $\z_t$, preserving the $t$ state on $\mathcal{M}_t$ rather than $\mathcal{M}_0$.

\begin{wraptable}[11]{r}{0.6\textwidth}
\vspace{-3mm}
\centering
\fontsize{9}{11}\selectfont
\setlength{\tabcolsep}{2.pt}
\resizebox{.999\linewidth}{!}{
\begin{tabular}{l*{5}{c}}
\toprule
 Method & \multicolumn{1}{c}{Gradient computation} & 
\multicolumn{2}{c}{ Gradient incorporation} & \multicolumn{2}{c}{ Projection}\\
\cmidrule(lr){2-2} \cmidrule(lr){3-4} \cmidrule(lr){5-6}
 &  $\x_{0\mid t}$/$\z_{0\mid t}$ &   $\x_t$/$\z_t$ &  $\x_{0\mid t}$/$\z_{0\mid t}$ &  $\x_t$/$\z_t$ &  $\x_{0\mid t}$/$\z_{0\mid t}$\\
\midrule
 DPS &  \cmark &  \cmark &  \xmark &  \xmark &  \xmark\\
 PSLD &  \cmark &  \cmark &  \xmark &  \xmark &  \xmark\\
 ReSample &   \cmark &  \xmark &  \cmark &  \xmark &  \xmark\\
 DAPS &  \cmark &  \xmark &  \cmark &  \xmark &  \xmark\\
 MCG &  \cmark &  \cmark &  \xmark &  \xmark &  \cmark\\
 MPGD &  \cmark &  \xmark &  \cmark &  \xmark &  \cmark\\
 DiffStateGrad (ours) &  \cmark &  \cmark &  \cmark &  \cmark &  \cmark\\
\bottomrule
\end{tabular}
}
\vspace{-2mm}
\caption{\textbf{Gradient guidance computation, incorporation, and projection for diffusion-based inverse problems.}}
\label{tab:compared_method_gradient_guidance}
\end{wraptable}

\textbf{Gradient guidance incorporation.}\quad Prior works differ from one another in two key aspects: (a) how the loss for the gradient is computed and (b) how the gradient is used to update the diffusion state. \Cref{tab:compared_method_gradient_guidance} categorizes prior works based on these characteristics. While a few approaches, such as diffusion-based MRI~\citep{chung2022score}, compute the gradient using $\x_{t}$, most recent literature has shifted toward using $\x_{0|t}$ for gradient computation. Regarding gradient incorporation, the literature is further subdivided. For instance, methods like DPS and PSLD use the measurement gradient to update the state at time $t$, whereas ReSample, DAPS, and MPGD apply the guidance to the conditional state at $0|t$ before resampling. Additionally, while the projections in MCG~\citep{chung2022improving} and MPGD~\citep{hemanifold} are restricted to $\x_{0|t}$, DiffStateGrad applies to both types of methods.

\textbf{Conditional diffusion models for inverse problems.}\quad We focus on unconditional diffusion models as learned priors to solve general inverse problems. This approach leverages \emph{already trained} diffusion models, which is useful for domains with abundant data. Another approach is to train conditional diffusion models where $\nabla_{\x_t} \log p_t(\y \mid \x_t)$ is directly captured by the score function, or where the diffusion directly transforms the measurement into the underlying data (e.g., image-to-image diffusion)~\citep{saharia2022palette, liu20232, chung2024direct}. This latter approach is problem-specific; hence, it is not generalizable across inverse tasks. Finally, we note that while this work focuses on gradient-based guidance, prior work such as RePaint~\citep{lugmayr2022repaint} introduces a gradient-free masking strategy to solve inverse problems. Although RePaint is appealing, it is limited to inpainting tasks and scenarios where measurement noise is negligible.


\section{Diffusion State-Guided Projected Gradient (DiffStateGrad)}\label{sec:methods}

We propose a \textit{Diffusion State-Guided Projected Gradient} (DiffStateGrad) to solve inverse problems. DiffStateGrad can be incorporated into a wide range of diffusion models to improve guidance-based diffusion models. Without loss of generality, we explain DiffStateGrad in the context of Latent-DPS~\citep{chung2023diffusion} hich applies the measurement guidance to $\z_t$. We note that DiffStateGrad applies to a wide range of pixel and latent diffusion-based inverse solvers (see \Cref{sec:results}).

%
Given $\z_{t+1}$, we sample $\z_{t}$ from the unconditional reverse process, and then compute the estimate $\hat \z_0(\z_{t}) \coloneqq \mathcal{D}(\mathbb{E}[\z_0 \mid \z_{t}])$. Then, the data-consistency guidance term can be incorporated as follows.
\begin{equation}
\z_{t} \leftarrow \z_{t} - \eta_t \mathcal{P}_{\mathcal{S}_t}(\bm{g}_t),
\end{equation}
where $\bm{g}_t = \nabla_{\z_{t+1}} \log p(\y \mid \hat \z_0(\z_t))$ is the measurement gradient (MG), $\eta_t$ is the step size, and $\mathcal{P}_{\mathcal{S}_t}$ is a projection step onto the low-rank subspace $\mathcal{S}_t$. The main contribution of this paper is a) to highlight that the measurement gradient should be projected onto a subspace imposed by the state being updated by the gradient (see gradient incorporation column in~\Cref{tab:compared_method_gradient_guidance}) and b) to define this subspace so it results in better posterior sampling; in other words, to define a subspace such that when the measurement gradient is projected onto, the diffusion process is not disturbed and pushed away from the data manifold. In \Cref{tab:proj_random_comparison}, we show for PSLD that indeed the subspace $\mathcal{S}_t$, defined by the intermediate diffusion state, results in an improved posterior sampling, unlike a subspace that is constructed based on a random matrix or the low-rank structure of the measurement gradient. Hence, we choose the diffusion state $\z_t$ to define $\mathcal{S}_t$. Finally, for methods where the measurement gradient guidance is being applied to $\z_{0 \mid t}$, we define the low-rank subspace based on $\z_{0 \mid t}$ (see~\Cref{tab:compared_method_gradient_guidance}).

We focus on images as our data modality and implement the projection $\mathcal{P}_{\mathcal{S}_t}$ by computing the SVD of $\z_t$ in its image matrix form, denoted by $\bm{Z}_t$ (i.e., $\bm{U}, \bm{S}, \bm{V} \gets \text{SVD}(\bm{Z}_t)$). Then, we compute an adaptive rank $r \gets \underset{k}{\argmin} \{\nicefrac{\sum_{j=1}^k s_j^2}{\sum_{j} s_j^2} \geq \tau\}$ leveraging a fixed variance retention threshold $\tau$. The gradient $\bm{g}_t$, which takes a matrix form for images, is projected onto a subspace defined by the highest $r$ singular values of $\Z_t$ as follows:
\begin{equation}
\bm{G}_t \leftarrow \bm{U}_r \bm{U}_r^{\text{T}} \bm{G}_t \bm{V}_r^{\text{T}} \bm{V}_r,
\end{equation}
where $\bm{G}_t$ is the measurement gradient in image matrix form, and $\bm{U}_r$ and $\bm{V}_r$ contain the first $r$ left and right singular vectors, respectively (\Cref{algo:plorg_main}). While we use the \textit{full} SVD projection (i.e., combining both left and right projection), in practice, one may choose to do either left or right projection. Next, we provide mathematical intuitions (\Cref{prop:plorg}) on the effectiveness of subspace projections in preserving $\z_t$, particularly for high-dimensional data with low-rank structure, after the MG update on the manifold $\mathcal{M}_t$. Finally, we note that while DiffStateGrad can significantly improve the runtime and computational efficiency of diffusion frameworks that use Adam optimizers for data consistency~\citep{song2023solving, zhaogalore}, the current implementation and this paper does not explore this aspect and, instead, focuses on the property of the proposed subspace.

\begin{algorithm}[t]\label{algo:plorg_main}
    \caption{Diffusion State-Guided Projected Gradient (DiffStateGrad) for Latent Diffusion-based Inverse Problems (Image Restoration Tasks)}
    \begin{algorithmic}[1]
        \Require Normal input + variance retention threshold $\tau$
        \State Let $T$ = number of total iterations of sampling algorithm and assume we calculate latent image representation $\Z_t$ for each iteration. Note that $\Z_t$ is a matrix.
        \For{$t = T-1$ \textbf{to} $0$}
            \State Compute measurement gradient $\bm{G}_t$ according to sampling algorithm 
            \State $\bm{U}, \bm{S}, \bm{V} \gets \text{SVD}(\Z_t)$ \Comment{Perform SVD on current diffusion state}
            \State $\lambda_j \gets s_j^2$ (where $s_j$ are the singular values of $\bm{S}$) \Comment{Calculate eigenvalues}
            \State $c_k \gets \frac{\sum_{j=1}^k \lambda_j}{\sum_{j} \lambda_j}$ \Comment{Cumulative sum of eigenvalues}
            \State $r \gets \underset{k}{\argmin} \{c_k \geq \tau\}$ \Comment{Determine rank $r$ based on threshold $\tau$}
            \State $\bm{A}_t \gets \bm{U}_r$ \Comment{Get first $r$ left singular vectors}
            \State $\bm{B}_t \gets \bm{V}_r$ \Comment{Get first $r$ right singular vectors}
            \State $\bm{R}_t \gets \bm{A}_t^T \bm{G}_t \bm{B}_t^T$ \Comment{Project gradient}
            \State $\bm{G}'_t \gets \bm{A}_t \bm{R}_t \bm{B}_t$ \Comment{Reconstruct approximated gradient}
            \State Use updated gradient $\bm{G}'_t$ in sampling algorithm
        \EndFor
        \State \Return $\mathcal{D}(\hat{\z}_0)$
    \end{algorithmic}
\end{algorithm}

\begin{restatable}[]{proposition}{propplog}\label{prop:plorg}
Let $\mathcal{M}$ be a smooth $m$-dimensional submanifold of a $d$-dimensional Euclidean space $\mathbb{R}^d$, where $m < d$. Assume that for each state $\z_t \in \mathcal{M}$, the tangent space $T_{\z_t} \mathcal{M}$ is well-defined, and the projection operator $\mathcal{P}_{\mathcal{S}_{\z_t}}$ onto an approximate subspace $\mathcal{S}_{\z_t}$ closely approximates the projection onto $T_{\z_t} \mathcal{M}$. For the state $\z_t \in \mathcal{M}$ and measurement gradient $\g_t \in \mathbb{R}^d$, consider two update rules:
\begin{equation}
\begin{aligned}
\z_{t-1} &= \z_t - \eta \g_t \quad \text{(standard update)}, \\
\z_{t-1}' &= \z_t - \eta \mathcal{P}_{\mathcal{S}_{\z_t}}(\g_t) \quad \text{(projected update)},
\end{aligned}
\end{equation}
where $\eta > 0$ is a small step size. Then, for sufficiently small $\eta$, the projected update $\z_{t-1}'$ stays closer to the manifold $\mathcal{M}$ than the standard update $\z_{t-1}$. That is,
\begin{equation}
\text{dist}(\z_{t-1}', \mathcal{M}) < \text{dist}(\z_{t-1}, \mathcal{M}).
\end{equation}
\end{restatable}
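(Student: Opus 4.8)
The plan is to reduce the comparison of the two distances to a comparison of the \emph{normal} components of the two displacement vectors, exploiting the fact that, to first order in $\eta$, the distance from a nearby point to $\mathcal{M}$ is controlled entirely by how far that point is pushed off the tangent space. First I would fix $\z_t \in \mathcal{M}$ and choose orthonormal coordinates centered at $\z_t$ in which $T_{\z_t}\mathcal{M}$ is spanned by the first $m$ axes and the normal space $N_{\z_t}\mathcal{M} = (T_{\z_t}\mathcal{M})^{\perp}$ by the remaining $d-m$ axes. By the smoothness hypothesis on $\mathcal{M}$ and the existence of a well-defined tangent space, a neighborhood of $\z_t$ in $\mathcal{M}$ is the graph of a smooth map $\phi:\R^m \to \R^{d-m}$ with $\phi(0)=0$ and $D\phi(0)=0$, so that $\phi(u) = O(\|u\|^2)$. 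This local graph representation is the geometric object on which everything rests.

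The key lemma I would establish is a distance estimate: writing any small displacement $\bm{v}\in\R^d$ as $\bm{v} = \bm{v}^{\parallel} + \bm{v}^{\perp}$ with $\bm{v}^{\parallel}\in T_{\z_t}\mathcal{M}$ and $\bm{v}^{\perp}\in N_{\z_t}\mathcal{M}$, one has $\text{dist}(\z_t + \bm{v},\mathcal{M}) = \|\bm{v}^{\perp}\| + O(\|\bm{v}\|^2)$. The upper bound follows by taking the candidate foot point on the graph with parameter $u=\bm{v}^{\parallel}$ and using $\phi(\bm{v}^{\parallel}) = O(\|\bm{v}^{\parallel}\|^2)$; the matching lower bound follows because any near-optimal foot point has parameter within $O(\|\bm{v}\|)$ of $\bm{v}^{\parallel}$, where again $\phi = O(\|\bm{v}\|^2)$. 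Thus the leading-order distance equals the length of the normal component of the displacement.

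Next I would apply this estimate to each update. Letting $\Pi_t$ denote orthogonal projection onto $N_{\z_t}\mathcal{M}$, the standard displacement $-\eta\g_t$ has normal part $-\eta\,\Pi_t\g_t$, so $\text{dist}(\z_{t-1},\mathcal{M}) = \eta\|\Pi_t\g_t\| + O(\eta^2)$, while the projected displacement $-\eta\,\mathcal{P}_{\mathcal{S}_{\z_t}}(\g_t)$ gives $\text{dist}(\z_{t-1}',\mathcal{M}) = \eta\|\Pi_t\mathcal{P}_{\mathcal{S}_{\z_t}}(\g_t)\| + O(\eta^2)$. When $\mathcal{S}_{\z_t}=T_{\z_t}\mathcal{M}$ exactly, $\mathcal{P}_{\mathcal{S}_{\z_t}}(\g_t)$ is purely tangential, so $\Pi_t\mathcal{P}_{\mathcal{S}_{\z_t}}(\g_t)=0$ and the projected distance is only $O(\eta^2)$, strictly below the $O(\eta)$ standard distance whenever $\Pi_t\g_t\neq 0$. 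For the realistic approximate subspace I would turn the phrase ``$\mathcal{S}_{\z_t}$ closely approximates $T_{\z_t}\mathcal{M}$'' into the quantitative bound $\|\Pi_t\mathcal{P}_{\mathcal{S}_{\z_t}}(\g_t)\| \le \delta < \|\Pi_t\g_t\|$, from which the claimed strict inequality follows for all sufficiently small $\eta$.

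The hard part is twofold. The geometric step — the lower bound in the distance lemma — needs the second-order flatness of the graph together with a continuity argument controlling the foot point, and to chain it across the estimate I must make the $O(\|\bm{v}\|^2)$ remainders uniform on a fixed neighborhood; this is routine but is the technical heart. The more delicate, conceptual obstacle is the approximate-subspace hypothesis: the strict inequality genuinely requires both that the gradient has a nonzero normal component ($\Pi_t\g_t\neq 0$) and that the subspace's normal leakage $\delta$ be smaller than that component. I would therefore state these conditions explicitly rather than leave ``closely approximates'' informal, since without them the strict inequality can fail.
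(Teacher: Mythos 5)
Your proposal is correct and follows essentially the same route as the paper's proof: both reduce the comparison to the normal components of the two displacement vectors, using the fact that, to first order in $\eta$, the distance to $\mathcal{M}$ equals the norm of the normal part of the displacement. The paper decomposes $\g_t = \g_t^\parallel + \g_t^\perp$, writes $\mathcal{P}_{\mathcal{S}_{\z_t}}(\g_t) = \g_t^\parallel + \bm{\epsilon}$, and derives $\mathrm{dist}(\z_{t-1},\mathcal{M}) = \eta\|\g_t^\perp\|$ and $\mathrm{dist}(\z_{t-1}',\mathcal{M}) = \eta\|\bm{\epsilon}^\perp\|$ up to $\mathcal{O}(\eta^2)$, then invokes the bound $\|\bm{\epsilon}^\perp\| \le c'\|\g_t^\perp\|$ with $c' < 1$, which is exactly your hypothesis $\|\Pi_t\mathcal{P}_{\mathcal{S}_{\z_t}}(\g_t)\| \le \delta < \|\Pi_t\g_t\|$ in different notation. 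Where you differ is in rigor, in two ways that work in your favor. First, the paper simply asserts the first-order approximation $\pi(\z) \approx \z_t + \mathcal{P}_{T_{\z_t}\mathcal{M}}(\z - \z_t)$ of the foot-point map and absorbs the rest into $\mathcal{O}(\eta^2)$; your graph-coordinate lemma, giving both the upper and the lower bound on $\mathrm{dist}(\z_t + \bm{v}, \mathcal{M})$, is precisely the justification that step is missing, and your remark about making the $\mathcal{O}(\|\bm{v}\|^2)$ remainders uniform on a neighborhood is the right technical concern. Second, you state explicitly that the strict inequality requires $\Pi_t\g_t \neq 0$ together with $\delta < \|\Pi_t\g_t\|$; the paper leaves both conditions implicit in the informal phrase ``closely approximates'' and in the constant $c' < 1$, and indeed its final estimate $\mathrm{dist}(\z_{t-1}',\mathcal{M}) \le \mathrm{dist}(\z_{t-1},\mathcal{M}) - c\eta\|\g_t^\perp\| + \mathcal{O}(\eta^2)$ becomes vacuous when $\g_t^\perp = 0$ --- in that case both distances are $\mathcal{O}(\eta^2)$ and the projected update can even end up \emph{farther} from $\mathcal{M}$, since the leakage $\bm{\epsilon}$ may have a normal component while $\g_t$ does not. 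So your version is the same argument, carried out with the nondegeneracy hypotheses the statement actually needs made explicit.
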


%

The remainder of this section provides intuition on how DiffStateGrad improves solving inverse problems in the presence of a suitable learned prior. Let the initial latent state $\z_t$ be on the manifold $\mathcal{M}_t$ (e.g., being artifact-free). The term ``artifact-free'' refers to the generation process of an unconditional diffusion model that is trained on clean data samples and provides an artifact-free trajectory from $\mathcal{M}_T$ to $\mathcal{M}_0$. We observe that pushing away from the manifold process (e.g., introducing artifacts) can only be introduced via the guidance by the data-consistency gradient step, as this is the sole mechanism by which information from the measurement process enters the latent space. Consider the manifold $\mathcal{M}$ of artifact-free latent representations. Each $\z_t$ lies on this manifold, and the tangent space $T_{\z_t} \mathcal{M}$ represents the directions of ``allowable'' updates that maintain the artifact-free property staying on the current manifold. Finally, we note that this motivates to project the gradient onto the tangent space of the data manifold where the guidance is applied. Alternatively, when the measurement gradient guidance is applied to $\z_{0|t}$, we define the projection step based on $\mathcal{S}_0$ (see also~\citep{hemanifold}).

Our DiffStateGrad method, through the projection operator $\mathcal{P}_{\mathcal{S}_{t}}$, approximates this tangent space. The effectiveness of DiffStateGrad depends on how well $\mathcal{P}_{\mathcal{S}_{t}}$ approximates the projection onto $T_{\z_t}\mathcal{M}$. Hence, we discuss a rationale on how the approximated projection is sufficient for performance; we, accordingly, support this by experimental results in~\Cref{sec:results}. First, the SVD captures the principal directions of variation in $\z_t$, which are likely to align with the local structure of the manifold when the data is high-dimensional. Second, by adaptively choosing the rank based on a variance retention threshold, we ensure that the projection preserves the most significant state-related structural information while filtering out potential noise or artifact-inducing components from the measurement gradient. Finally, the low-rank nature of our approximation aligns with the assumption that the manifold of representations has a lower intrinsic dimensionality than the ambient space.

Hence, by projecting the measurement gradient onto this subspace defined by the current latent state $\z_t$, we effectively filter the directions orthogonal to the local manifold structure, and hence, remove artifacts-inducing components. This projection ensures that updates to $\z_t$ remain closer to the manifold $\mathcal{M}$ than unprojected updates would, as stated in~\Cref{prop:plorg}. Consequently, DiffStateGrad relies on the reliability of the learned prior and helps to provide high-probability posterior samples. This creates an inductive process: if $\z_t$ is artifact-free, and we only allow updates that align with its structure (i.e., updates that stay close to the manifold $\mathcal{M}$), subsequent latent representation $\z_t$ will likely be samples from the high-probability regions of the posterior.

\Cref{fig:vis_largestep} demonstrates the effectiveness of DiffStateGrad in removing artifacts when the MG step size is large; artifacts are introduced onto the measurement gradient and stay within the latent representation in PSLD~\citep{rout2023solving}. On the other hand, the reverse process via DiffStateGrad-PSLD (our method applied to PSLD) stays artifact-free, consistent with the mathematical analysis in~\Cref{prop:plorg} and the practical efficacy of the proposed SVD-based subspace projection. Finally, we note that the most significant improvements appear in challenging tasks such as phase retrieval, HDR, and inpainting. We attribute the effectiveness of DiffStateGrad, particularly in challenging tasks, to a reduced rate of failure cases (\Cref{fig:combined_robustness_psnr}). By constraining solutions closer to the data manifold, DiffStateGrad minimizes extreme failures, enhances consistency in reconstruction quality.

\begin{wrapfigure}[14]{r}{0.55\textwidth}
    \begin{center}
    \vspace{-6mm}
    \centering
    \begin{subfigure}[b]{0.999\linewidth}
    \centering
    \includegraphics[width=\linewidth]{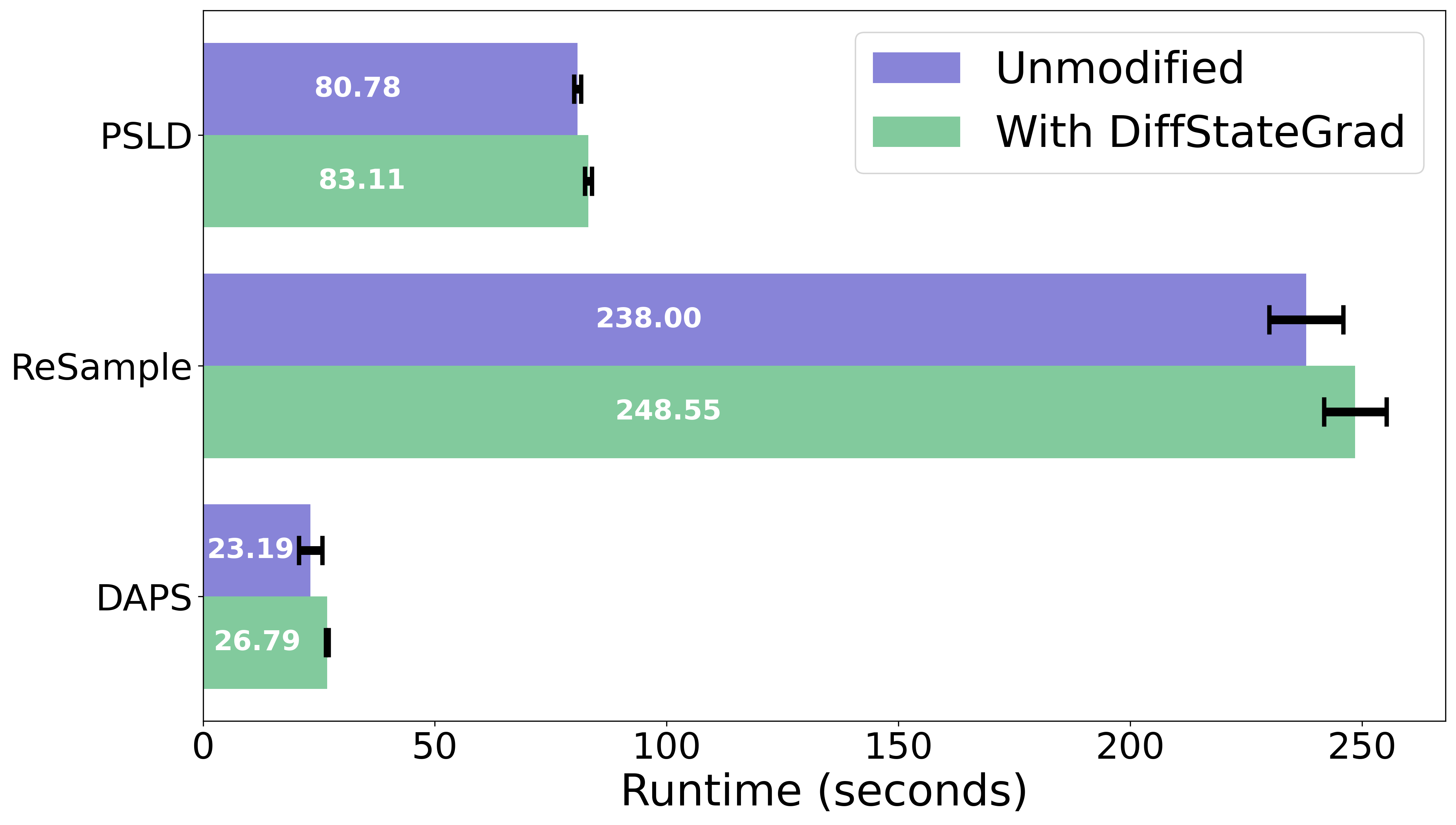}
    \vspace{-5.5mm}
    \end{subfigure}
    \caption{\textbf{Runtime complexity of DiffStateGrad.} The increase of runtime with DiffStateGrad is minimal.}
    \label{fig:runtime}
    \end{center}
\end{wrapfigure}

\textbf{Efficiency.}\quad DiffStateGrad introduces minimal computational overhead. We perform SVD \textit{at most} once per iteration, and for latent diffusion solvers, this occurs in the latent space on $64 \times 64$ matrices. By selecting a low rank based on a variance threshold, subsequent projection and reconstruction operations are performed on reduced matrices, further decreasing computational complexity. \Cref{fig:runtime} illustrates the runtime of PSLD~\citep{rout2023solving}, ReSample~\citep{song2023solving}, and DAPS~\citep{zhang2024daps} with and without DiffStateGrad. The figure shows that the additional computational cost of incorporating DiffStateGrad is marginal, typically adding only a few seconds to the total runtime (see~\ref{subsec:efficiency_experiment} for further details). We note that our method is not intended to \textit{improve} efficiency, but rather to enhance performance and robustness.


\section{Results}\label{sec:results}
This section provides extensive experimental results on the effectiveness of DiffStateGrad for image-based inverse problems. We show that DiffStateGrad significantly improves (1) the robustness of diffusion-based methods to the choice of measurement gradient step size and measurement noise, and (2) the overall posterior sampling performance of diffusion. 

%

\textbf{Experimental setup.}\quad We evaluate the performance of DiffStateGrad applied to four SOTA diffusion methods of PSLD~\citep{rout2023solving}, ReSample~\citep{song2023solving}, DPS~\citep{chung2023diffusion}, and DAPS~\citep{zhang2024daps}. These methods span both latent solvers (PSLD and ReSample) and pixel-based solvers (DPS and DAPS). We also directly compare against other methods including DDNM~\citep{wang2023zero}, DDRM~\citep{kawar2022denoising}, MCG~\citep{chung2022improving}, and MPGD-AE~\citep{hemanifold}. We evaluate performance based on key quantitative metrics, including LPIPS (Learned Perceptual Image Patch Similarity), PSNR (Peak Signal-to-Noise Ratio), and SSIM (Structural Similarity Index)~\citep{wang2004image}. We demonstrate the effectiveness of DiffStateGrad on two datasets: a) the FFHQ $256 \times 256$ validation dataset~\citep{karras2021stylebased}, and b) the ImageNet $256 \times 256$ validation dataset~\citep{deng2009imagenet}. For pixel-based experiments, we use (i) the pre-trained diffusion model from \citep{chung2023diffusion} for the FFHQ dataset, and (ii) the pre-trained model from \citep{dhariwal2021diffusion} for the ImageNet dataset. For latent diffusion experiments, we use (i) the unconditional LDM-VQ-4 model trained on FFHQ~\citep{rombach2022high} for the FFHQ dataset, and (ii) the Stable Diffusion v1.5~\citep{rombach2022high} model for the ImageNet dataset.

\begin{figure}[t]
    \centering
    \begin{subfigure}[b]{\textwidth}
        \centering
        \begin{subfigure}[b]{0.49\textwidth}
            \centering
            \includegraphics[width=0.7\textwidth]{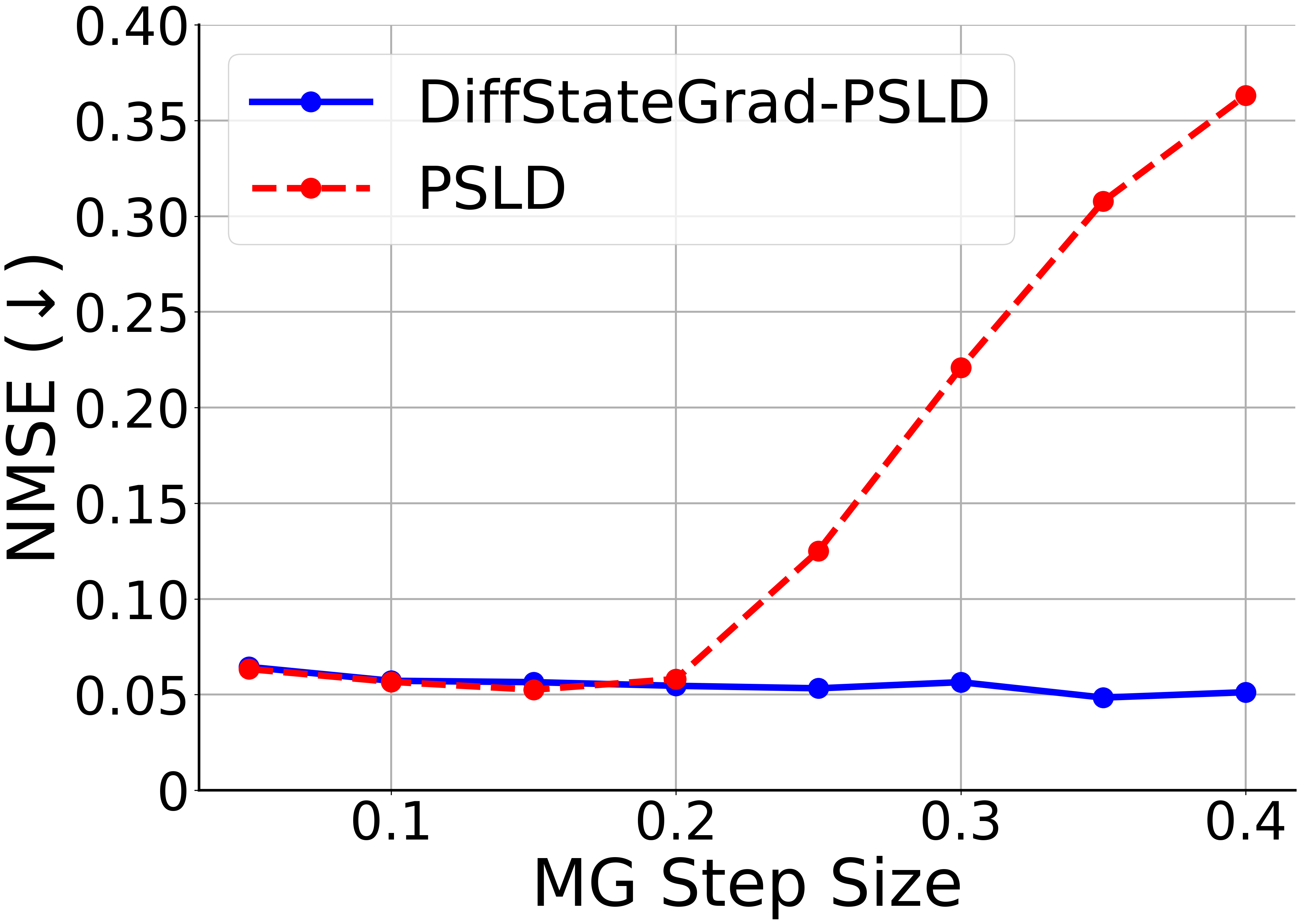}
            \vspace{-2mm}
            \caption{Box Inpainting (NMSE).}
        \end{subfigure}
        \hfill
        \begin{subfigure}[b]{0.49\textwidth}
            \centering
            \includegraphics[width=0.7\textwidth]{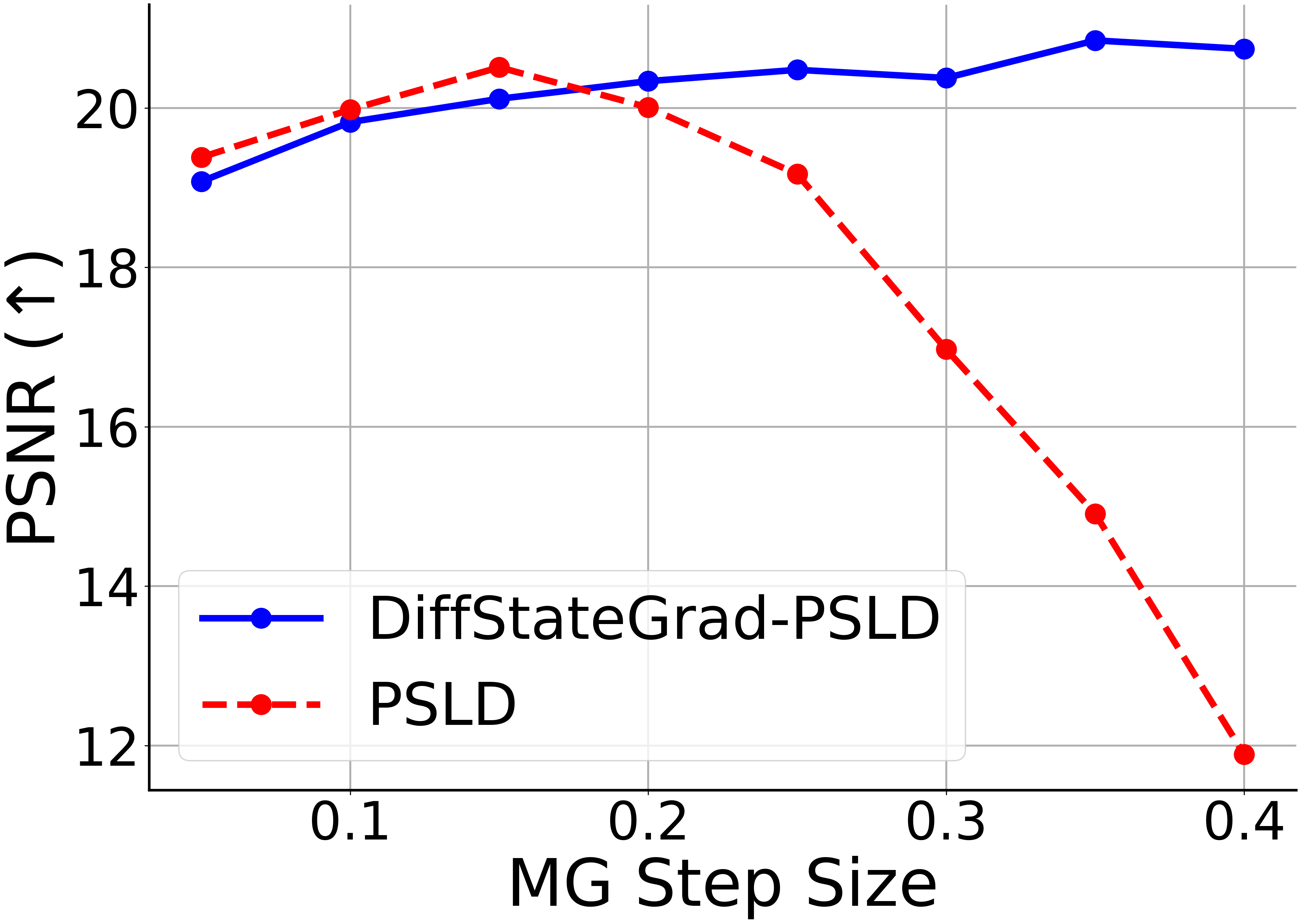}
            \vspace{-2mm}
            \caption{Box Inpainting (PSNR).}
        \end{subfigure}
        \label{fig:robustness}
    \end{subfigure}
    \begin{subfigure}[b]{\textwidth}
        \centering
        \begin{subfigure}[b]{0.32\textwidth}
            \centering
            \includegraphics[width=\textwidth]{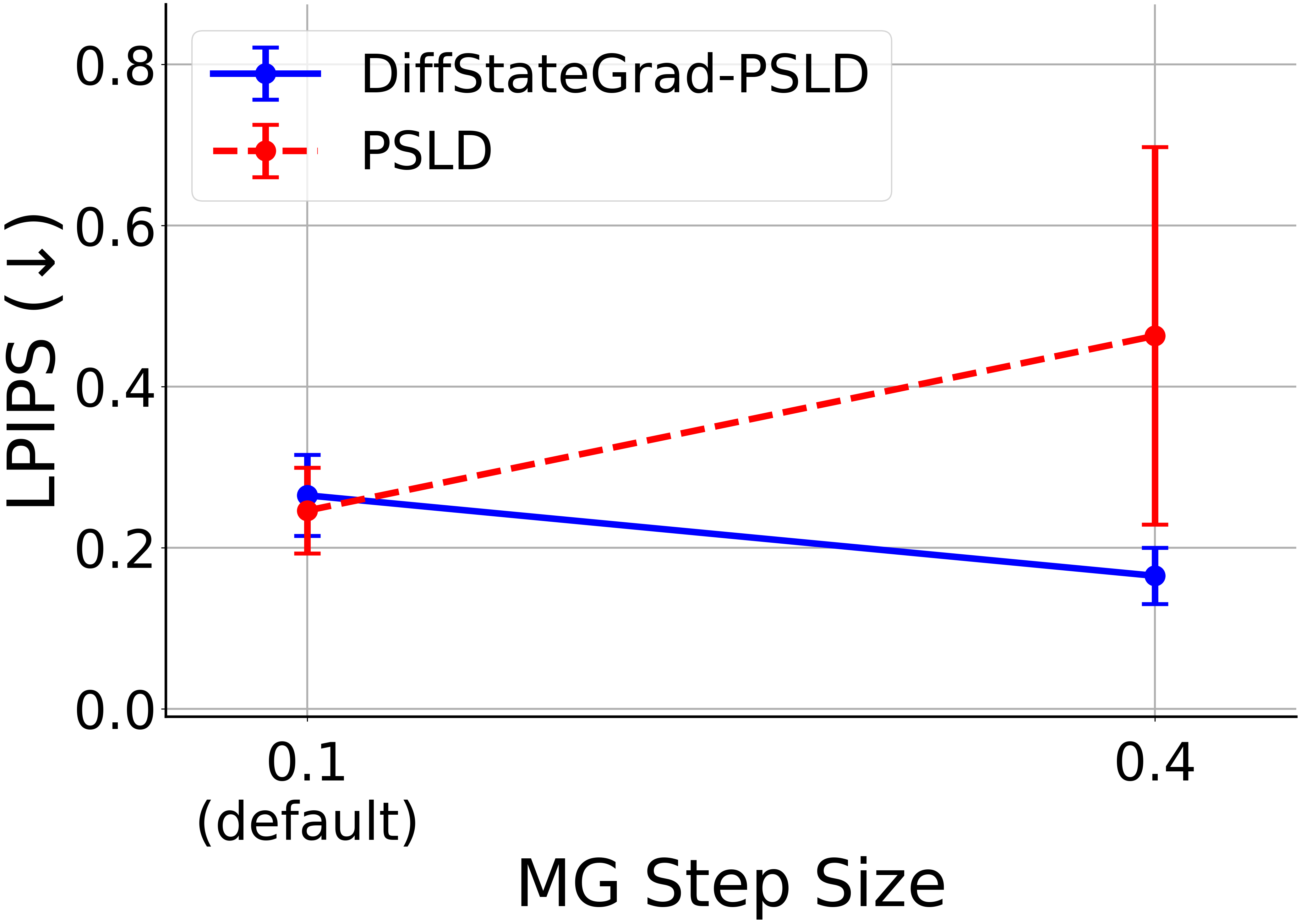}
            \vspace{-4mm}
            \caption{Random Inpainting (LPIPS).}
            \label{fig:random_inpainting}
        \end{subfigure}
        \hfill
        \begin{subfigure}[b]{0.32\textwidth}
            \centering
            \includegraphics[width=\textwidth]{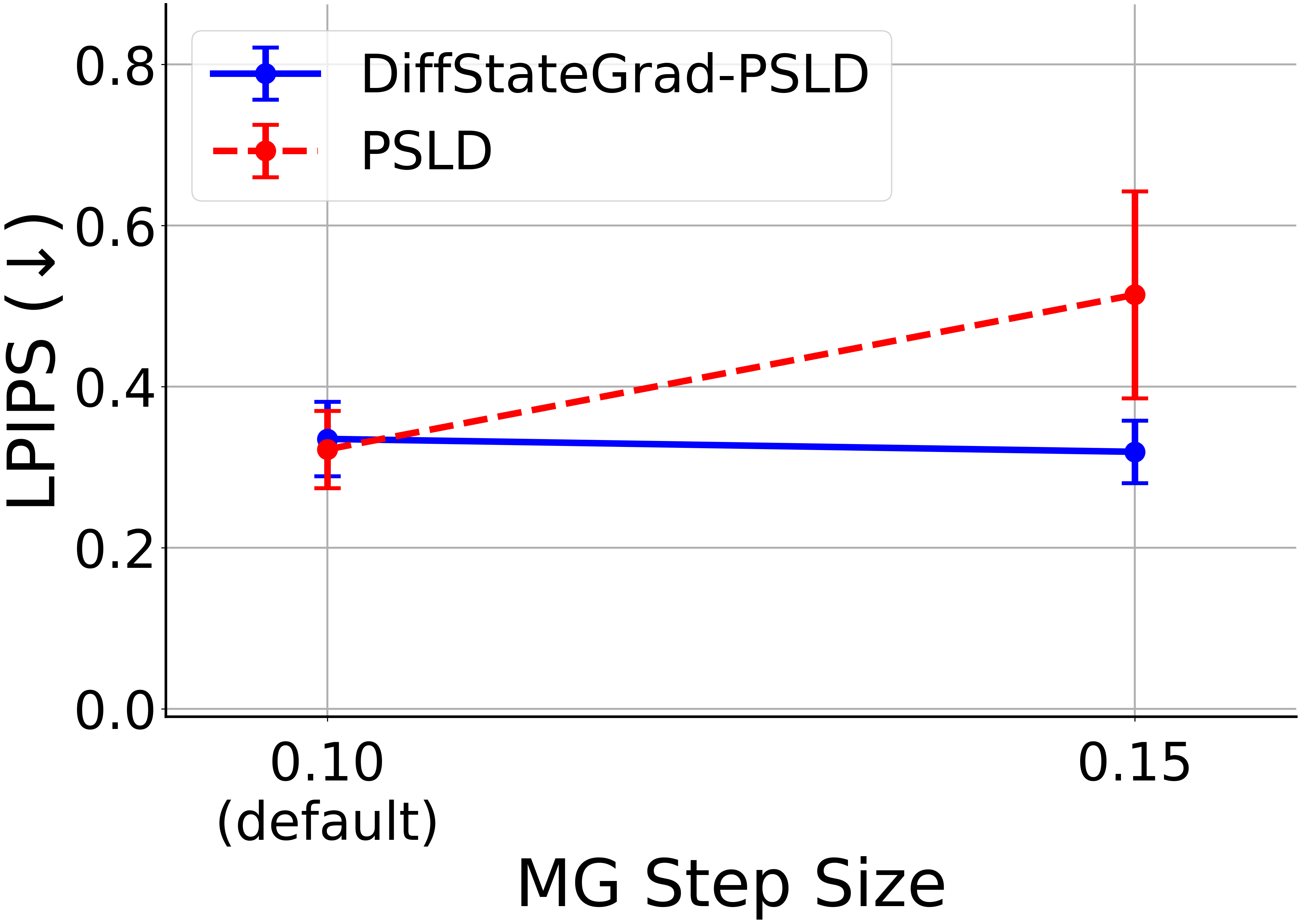}
            \vspace{-4mm}
            \caption{Motion Deblur (LPIPS).}
            \label{fig:motion_deblur}
        \end{subfigure}
        \hfill
        \begin{subfigure}[b]{0.32\textwidth}
            \centering
            \includegraphics[width=\textwidth]{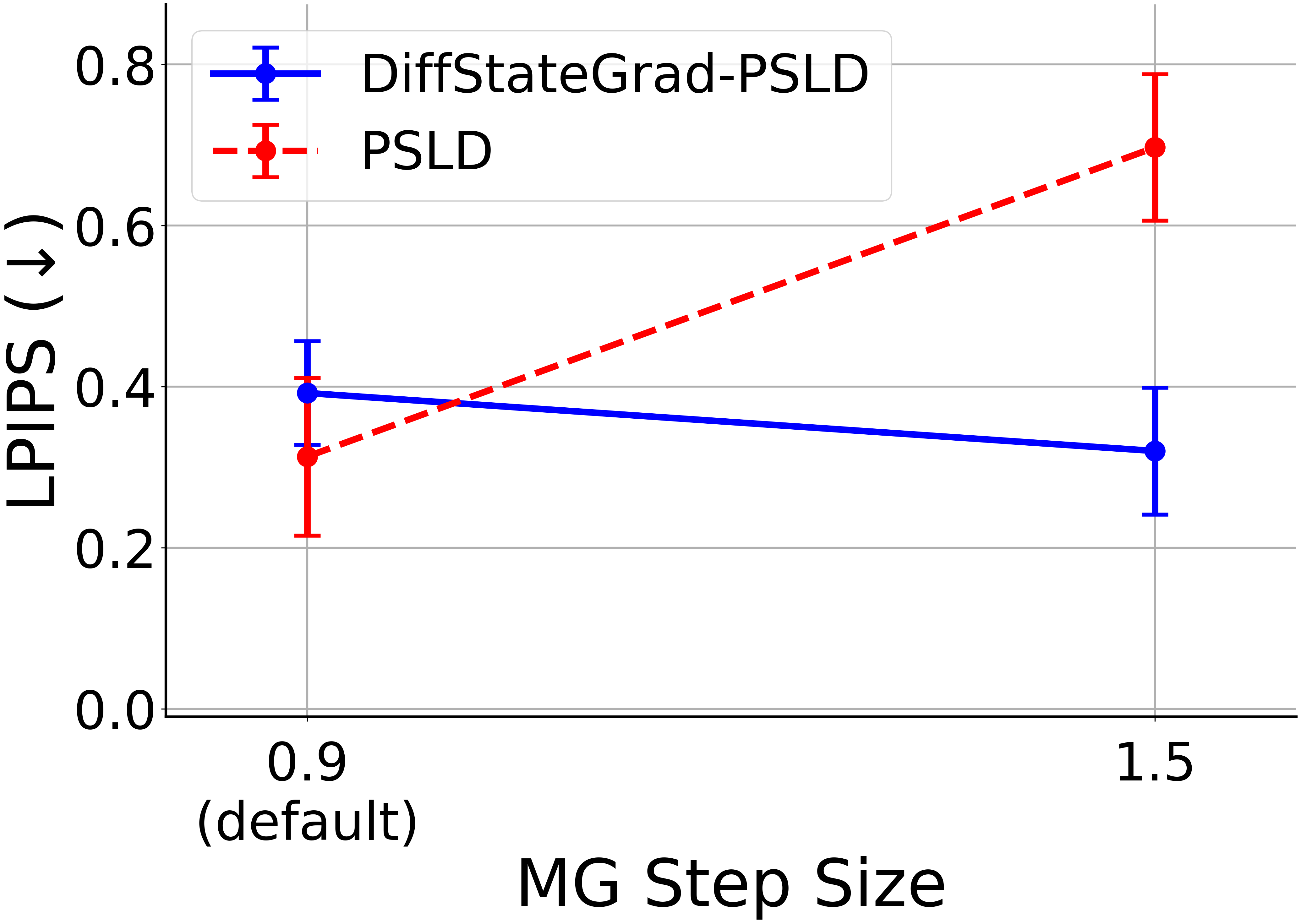}
            \vspace{-4mm}
            \caption{Super-Resolution (LPIPS).}
            \label{fig:super_resolution}
        \end{subfigure}
        \label{fig:three_task_comparison}
    \end{subfigure}
    \vspace{-4mm}
    \caption{\textbf{Robustness of DiffStateGrad to MG step size}. (a-b) Performance on box inpainting across various MG step sizes. (c-e) Performance on different tasks with default and large step sizes. We evaluate the performance of PSLD and DiffStateGrad-PSLD using FFHQ $256 \times 256$.}
    \label{fig:combined_robustness}
    \vspace{-3mm}
\end{figure}

\begin{figure}[t]
\begin{subfigure}[b]{0.48\linewidth}
	\centering
\includegraphics[width=0.8\linewidth]{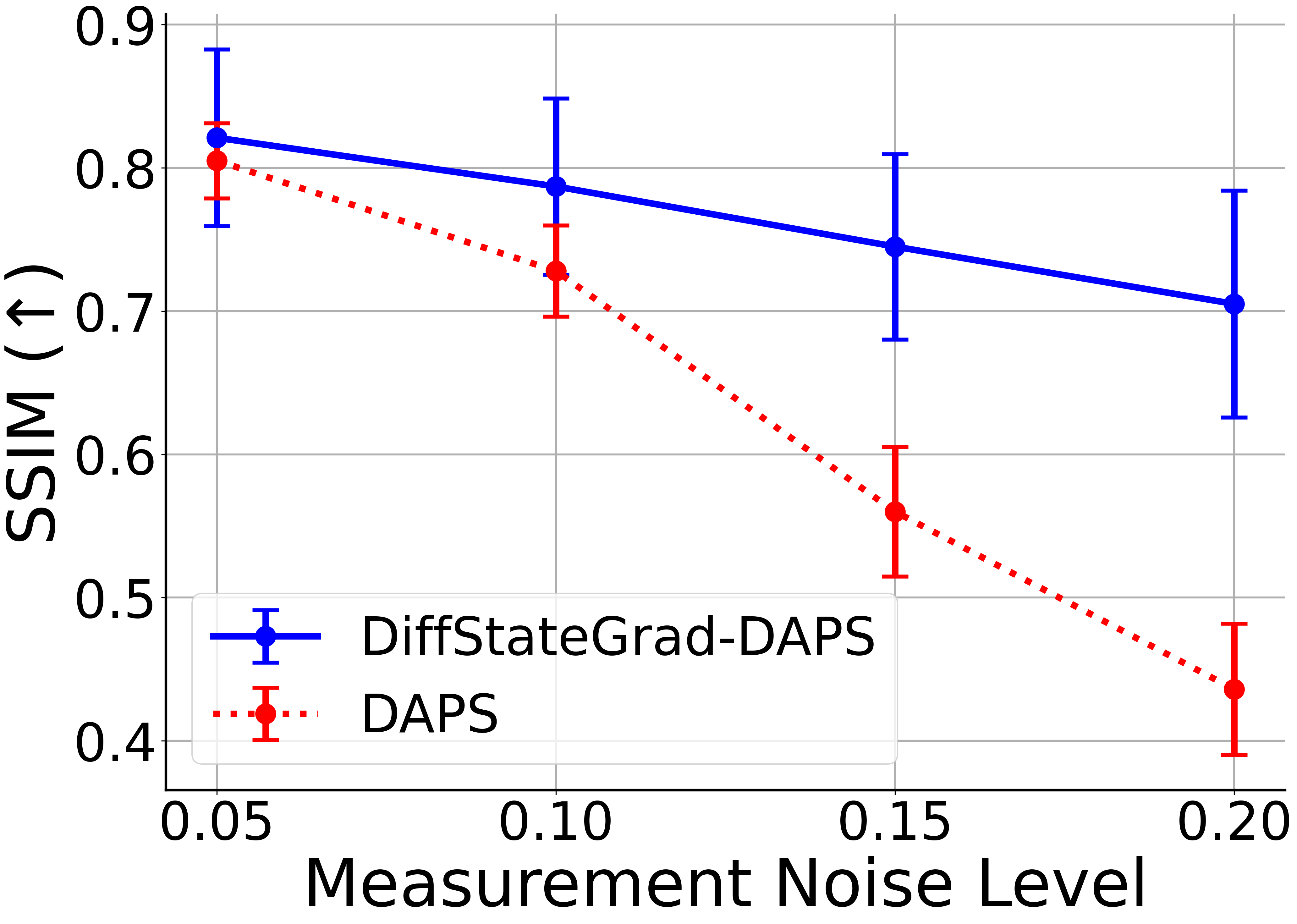}
\vspace{-2mm}
 \caption{Inpainting (Box).}
 \end{subfigure}
\begin{subfigure}[b]{0.48\linewidth}
	\centering
\includegraphics[width=0.8\linewidth]{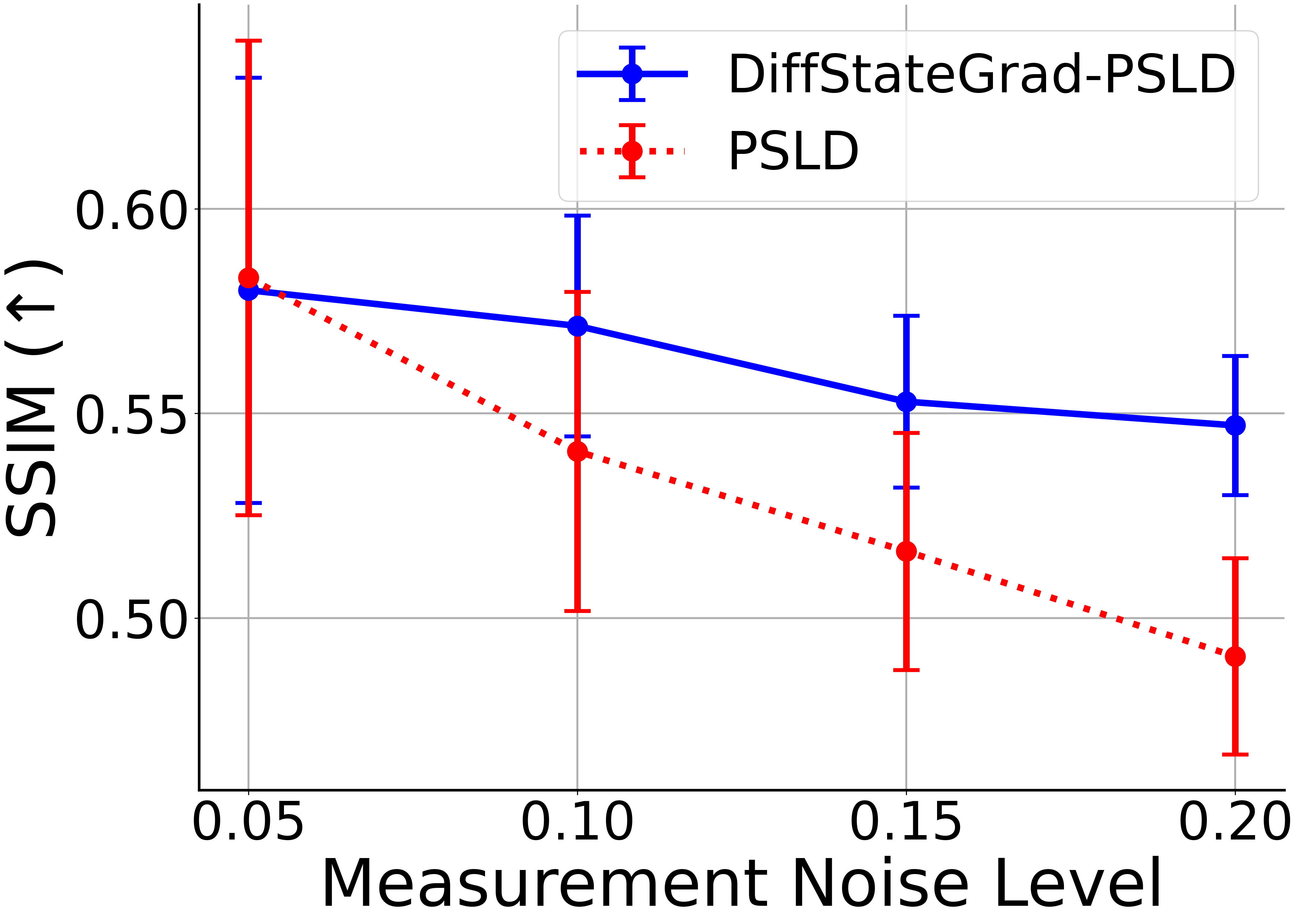}
\vspace{-2mm}
 \caption{Motion Deblur.}
 \end{subfigure}
 \vspace{-3mm}
\caption{\textbf{Robustness of DiffStateGrad to measurement noise.} We evaluate the performance of DiffStateGrad-PSLD and DiffStateGrad-DAPS with their respective counterparts on different tasks across a range of measurement noise levels (std of 0 to 0.2) using FFHQ $256 \times 256$.}
 \label{fig:fig3}
 \vspace{-5mm}
\end{figure}

\begin{table}[t]
\centering
\label{tab:all_tasks_ffhq}
\caption{\textbf{Performance comparison for linear and nonlinear tasks on FFHQ 256 $\times$ 256.}}
\vspace{-2mm}
\begin{subtable}{1.00\textwidth}
\fontsize{8}{10}\selectfont
\setlength{\tabcolsep}{2pt}
\resizebox{\textwidth}{!}{
\begin{tabular}{l*{10}{c}}
\toprule
Method & \multicolumn{2}{c}{Inpaint (Box)} & \multicolumn{2}{c}{Inpaint (Random)} & \multicolumn{2}{c}{Gaussian deblur} & \multicolumn{2}{c}{Motion deblur} & \multicolumn{2}{c}{SR (x4)} \\
\cmidrule(lr){2-3} \cmidrule(lr){4-5} \cmidrule(lr){6-7} \cmidrule(lr){8-9} \cmidrule(lr){10-11}
& LPIPS$\downarrow$ & PSNR$\uparrow$ & LPIPS$\downarrow$ & PSNR$\uparrow$ & LPIPS$\downarrow$ & PSNR$\uparrow$  & LPIPS$\downarrow$ & PSNR$\uparrow$ & LPIPS$\downarrow$ & PSNR$\uparrow$ \\
\midrule
\multicolumn{11}{l}{\textit{Pixel-based}} \\
DAPS & 0.136 & \underline{24.57} & 0.130 & \underline{30.79} & 0.216 & 27.92 & 0.154 & \underline{30.13} & 0.197 & \underline{28.64} \\
DiffStateGrad-DAPS (ours) & \textbf{0.113} & \textbf{24.78} & \textbf{0.099} & \textbf{32.04} & 0.180 & \textbf{29.02} & \underline{0.119} & \textbf{31.74} & \underline{0.181} & \textbf{29.35} \\
DPS &  0.127 &  23.91 &  0.130 & 28.67 &   \underline{0.145}  & 25.48 & 0.132 & 26.75 & 0.191 & 24.38
\\
DiffStateGrad-DPS (ours) &  \underline{0.114} & 24.10 & \underline{0.107} &  30.15 &   \textbf{0.128}  & 26.29 & \textbf{0.118} & 27.61 & 0.186 & 24.65
\\
DDNM & 0.235 & 24.47 & 0.121 & 29.91 & 0.216 & \underline{28.20} & - & - & 0.197 & 28.03
\\
DDRM & 0.159 & 22.37 & 0.218 & 25.75 & 0.236 & 23.36 & - & - & 0.210 & 27.65
\\
MCG & 0.309 & 19.97 & 0.286 & 21.57 & 0.340 & 6.72 & 0.702 & 6.72 & 0.520 & 20.05
\\
MPGD-AE & 0.138 & 21.59 & 0.172 & 25.22 & 0.150 & 24.42 & 0.120 & 25.72 & \textbf{0.168} & 24.01
\\
\midrule
\multicolumn{11}{l}{\textit{Latent}} \\
PSLD & 0.158 & \underline{24.22} & 0.246 & 29.05  & 0.357 & 22.87 & 0.322 & 24.25 & 0.313 & 24.51 \\
DiffStateGrad-PSLD (ours) & \textbf{0.092} & \textbf{24.32} & 0.165 & \underline{31.68} & 0.355 & 22.95 & 0.319 & 24.31  & 0.320 & 24.56 \\
ReSample & 0.198 & 19.91 & \underline{0.115} & 31.27 & \underline{0.253} & \underline{27.78} & \underline{0.160} & \underline{30.55} & \underline{0.204} & \underline{28.02} \\
DiffStateGrad-ReSample (ours) & \underline{0.156} & 23.59 & \textbf{0.106} & \textbf{31.91} & \textbf{0.245} & \textbf{28.04} & \textbf{0.153} & \textbf{30.82} & \textbf{0.200} & \textbf{28.27} \\
\bottomrule
\end{tabular}
}
\label{tab:linear_tasks_ffhq}
\vspace{-1mm}
\caption{Linear inverse problems.}
\end{subtable}
\begin{subtable}{1.00\textwidth}
\centering
\fontsize{8}{10}\selectfont
\setlength{\tabcolsep}{2.5pt}
\resizebox{\textwidth}{!}{
\begin{tabular}{l*{6}{c}}
\toprule
Method & \multicolumn{2}{c}{Phase retrieval} & \multicolumn{2}{c}{Nonlinear deblur} & \multicolumn{2}{c}{High dynamic range} \\
\cmidrule(lr){2-3} \cmidrule(lr){4-5} \cmidrule(lr){6-7}
& LPIPS$\downarrow$ & PSNR$\uparrow$ & LPIPS$\downarrow$ & PSNR$\uparrow$ & LPIPS$\downarrow$ & PSNR$\uparrow$ \\
\midrule
\multicolumn{7}{l}{\textit{Pixel-based}} \\
DAPS & \underline{0.139} {\scriptsize(0.026)} & \underline{30.52} {\scriptsize(2.61)} & \underline{0.184} {\scriptsize(0.032)} & \underline{27.80} {\scriptsize(1.97)} & \underline{0.170} {\scriptsize(0.075)} & \underline{26.91} {\scriptsize(3.94)} \\
DiffStateGrad-DAPS (ours) & \textbf{0.105} {\scriptsize(0.023)} & \textbf{32.25} {\scriptsize(1.34)} & \textbf{0.145} {\scriptsize(0.027)} & \textbf{29.51} {\scriptsize(2.16)} & \textbf{0.143} {\scriptsize(0.070)} & \textbf{27.76} {\scriptsize(3.18)}\\
\midrule
\multicolumn{7}{l}{\textit{Latent}} \\
ReSample & \underline{0.237} {\scriptsize(0.189)} & \underline{27.61} {\scriptsize(8.07)} & \underline{0.188} {\scriptsize(0.037)} & \underline{29.54} {\scriptsize(1.89)} & \underline{0.190} {\scriptsize(0.067)} & \underline{24.88} {\scriptsize(3.46)} \\
DiffStateGrad-ReSample (ours) & \textbf{0.154} {\scriptsize(0.104)} & \textbf{31.19} {\scriptsize(4.33)} & \textbf{0.185} {\scriptsize(0.035)} & \textbf{29.91} {\scriptsize(1.60)} & \textbf{0.164} {\scriptsize(0.041)} & \textbf{25.50} {\scriptsize(3.07)} \\
\bottomrule
\end{tabular}%
}
\label{tab:nonlinear_tasks_ffhq}
\vspace{-1mm}
\caption{Nonlinear inverse problems.}
\end{subtable}
\vspace{-5mm}
\end{table}

\begin{figure}[t]
\begin{subfigure}[b]{0.47\textwidth}
    \centering
    \includegraphics[width=\textwidth]{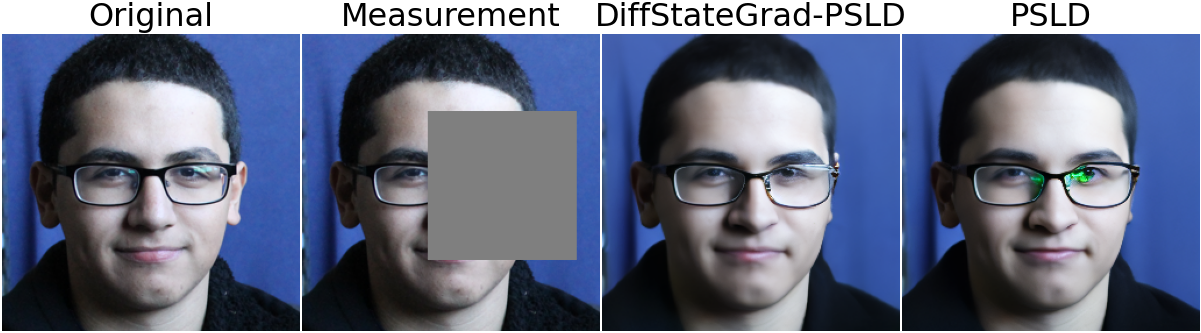}
    \includegraphics[width=\textwidth]{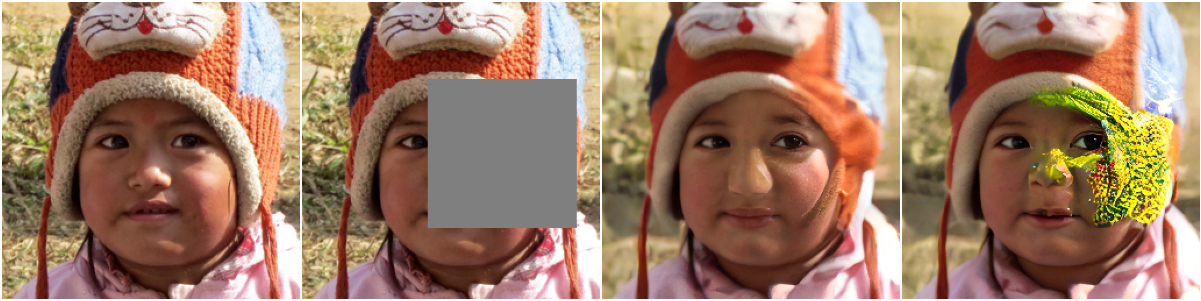}
    \vspace{-4mm}
    \caption{Box inpainting.}
\end{subfigure}
\hfill
\begin{subfigure}[b]{0.47\textwidth}
    \centering
    \includegraphics[width=\textwidth]{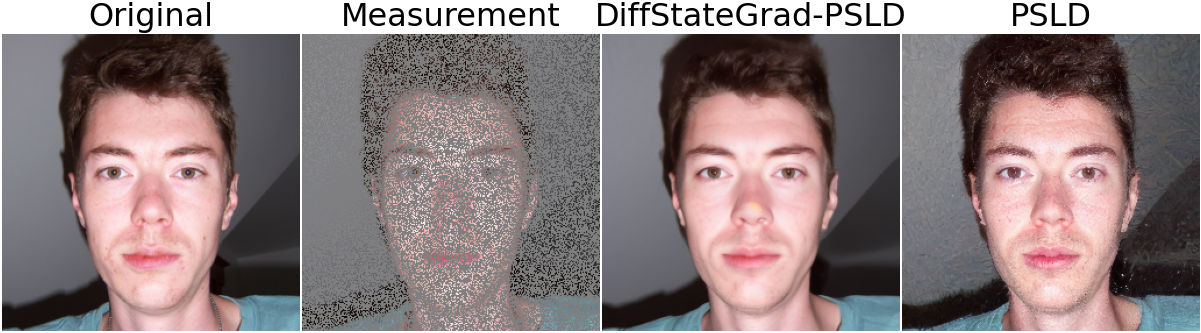}
    \includegraphics[width=\textwidth]{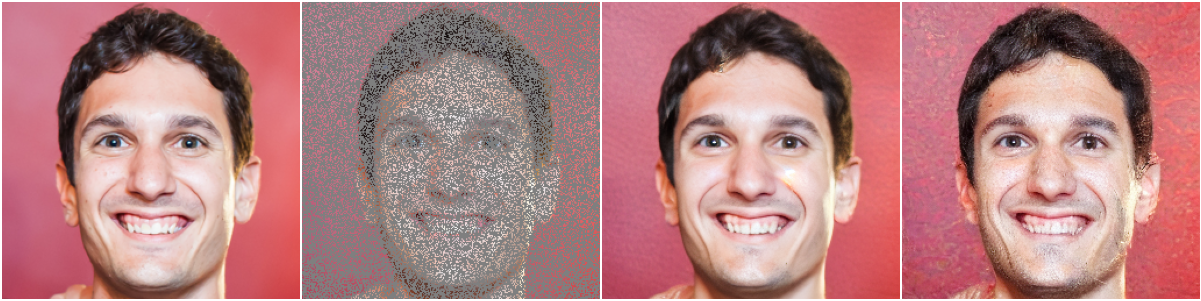}
    \vspace{-4mm}
    \caption{Random inpainting.}
\end{subfigure}
\vspace{-3mm}
\caption{\textbf{Qualitative comparison of DiffStateGrad-PSLD and PSLD for their best-performing MG step size}. DiffStateGrad-PSLD can remove artifacts and reduce failure cases, producing more reliable reconstructions. Images are chosen at random for visualization.}
\vspace{-7mm}
\label{fig:example_PLoRGpsld_vs_psld_large_stepsize_v2}
\end{figure}

We consider both linear and nonlinear inverse problems for natural images. For evaluation, we sample a fixed set of $100$ images from the FFHQ and ImageNet validation sets. Images are normalized to the range $[0, 1]$. We use the default settings for all experiments (see~\Cref{subsec:implement} for more details). For linear inverse problems, we consider (1) box inpainting, (2) random inpainting, (3) Gaussian deblur, (4) motion deblur, and (5) super-resolution. In the box inpainting task, a random $128 \times 128$ box is used, while the random inpainting task employs a $70\%$ random mask. Gaussian and motion deblurring tasks utilize kernels of size $61 \times 61$, with standard deviations of $3.0$ and $0.5$, respectively. For super-resolution, images are downscaled by a factor of $4$ using a bicubic resizer. For nonlinear inverse problems, we consider (1) phase retrieval, (2) nonlinear deblur, and (3) high dynamic range (HDR). For phase retrieval, we use an oversampling rate of 2.0, and due to the instability and non-uniqueness of reconstruction, we adopt the strategy from DPS \citep{chung2023diffusion} and DAPS \citep{zhang2024daps}, generating four separate reconstructions and reporting the best result. Like DAPS \citep{zhang2024daps}, we normalize the data to lie in the range [0, 1] before applying the discrete Fourier transform. For nonlinear deblur, we use the default setting from \citep{tran2021explore}. For HDR, we use a scale factor of 2. We note that PSLD is not designed to handle nonlinear inverse problems. We also conduct an additional experiment for Magnetic Resonance Imaging (MRI) (see~\Cref{sec:mri}).

%
\textbf{Robustness.}\quad \Cref{fig:combined_robustness} exhibits the sensitivity of PSLD to the choice of MG step size; the performance of PSLD significantly deteriorates when a relatively large MG step size is used, leading to poor results across all tasks. In contrast, DiffStateGrad-PSLD shows superior robustness, maintaining high performance over a wide range of MG step sizes. \Cref{fig:fig3} demonstrates the robustness of DiffStateGrad methods compared to their non-DiffStateGrad counterparts when faced with increasing measurement noise. For both inpainting and Gaussian deblur tasks, the performance of DAPS and PSLD deteriorates significantly as noise levels rise. In contrast, DiffStateGrad-DAPS and DiffStateGrad-PSLD exhibit superior resilience across the range of noise levels tested.

%
\textbf{Performance.}\quad We provide quantitative results in \Cref{tab:linear_tasks_ffhq,tab:nonlinear_tasks_ffhq,tab:linear_tasks_imagenet}, and qualitative results in \Cref{fig:example_PLoRGpsld_vs_psld_large_stepsize_v2}. These results demonstrate the substantial improvement in the performance of the methods with DiffStateGrad against their respective SOTA counterparts across a wide variety of linear and nonlinear tasks on both FFHQ and ImageNet. For example, DiffStateGrad significantly improves performance and increases reconstruction consistency in both pixel-based and latent solvers for phase retrieval (\Cref{tab:nonlinear_tasks_ffhq}). Our results also highlight the superiority of our choice of subspace compared to prior works. For example, DiffStateGrad-DPS is superior to MCG, which is the special case of DPS for manifold constrained diffusion. Furthermore, the outperformance of DiffStateGrad-DAPS against DAPS and MPGD-AE emphasizes the effectiveness of the proposed SVD-based subspace projection. We additionally show that DiffStateGrad does not impact the diversity of the posterior sampling (\Cref{fig:diverse}). Finally, we note that DiffStateGrad is robust to the choice of subspace rank (\Cref{fig:three_task_comparison_2,fig:tau_compare}). We refer the reader to~\Cref{subsec:visuals} for the extensive qualitative performance of DiffStateGrad.

\begin{table}[t]
\centering
\caption{\textbf{Performance comparison for linear tasks on ImageNet 256 $\times$ 256.}}
\vspace{-2mm}
\begin{subtable}{1.00\textwidth}
\fontsize{8}{10}\selectfont
\setlength{\tabcolsep}{2.5pt}
\resizebox{\textwidth}{!}{
\begin{tabular}{l*{6}{c}}
\toprule
Method & \multicolumn{3}{c}{Inpaint (Box)} & \multicolumn{3}{c}{Inpaint (Random)} \\
\cmidrule(lr){2-4} \cmidrule(lr){5-7}
& LPIPS$\downarrow$ & SSIM$\uparrow$ & PSNR$\uparrow$ & LPIPS$\downarrow$ & SSIM$\uparrow$ & PSNR$\uparrow$ \\
\midrule
\multicolumn{7}{l}{\textit{Pixel-based}} \\
 DAPS &   \underline{0.217} {\scriptsize(0.043)} &   \underline{0.762} {\scriptsize(0.041)} &   \underline{20.90} {\scriptsize(3.69)}&   \underline{0.158} {\scriptsize(0.039)} &   \underline{0.794} {\scriptsize(0.067)} &   \underline{28.34} {\scriptsize(3.65)}\\
 DiffStateGrad-DAPS (ours) &   \textbf{0.191} {\scriptsize(0.044)} &   \textbf{0.801} {\scriptsize(0.056)}&   \textbf{21.07} {\scriptsize(3.77)} &  \textbf{0.107} {\scriptsize(0.037)} &   \textbf{0.856} {\scriptsize(0.067)} &   \textbf{29.78} {\scriptsize(4.17)} \\
 DPS &   0.257 {\scriptsize(0.086)} &   0.718 {\scriptsize(0.097)} &   19.85 {\scriptsize(3.54)}&   0.256 {\scriptsize(0.133)} &   0.728 {\scriptsize(0.143)} &   26.25 {\scriptsize(4.15)}\\
 DiffStateGrad-DPS (ours) &   0.243 {\scriptsize(0.093)} &   0.731 {\scriptsize(0.100)}&   19.87 {\scriptsize(3.61)}&  0.233 {\scriptsize(0.138)} &   0.754 {\scriptsize(0.150)} &   27.28 {\scriptsize(4.88)} \\
 MPGD-AE &   0.295 {\scriptsize(0.057)} &   0.621 {\scriptsize(0.053)}&   16.12 {\scriptsize(2.26)}&  0.554 {\scriptsize(0.148)} &   0.388 {\scriptsize(0.112)} &   17.91 {\scriptsize(3.25)} \\
\midrule
\multicolumn{7}{l}{\textit{Latent}} \\
PSLD & \underline{0.182} {\scriptsize(0.033)} & \underline{0.780} {\scriptsize(0.044)} & \underline{16.28} {\scriptsize(3.49)} & 0.217 {\scriptsize(0.073)} & \underline{0.846} {\scriptsize(0.070)} & \underline{26.56} {\scriptsize(2.98)} \\
DiffStateGrad-PSLD (ours) & \textbf{0.176} {\scriptsize(0.030)} & \textbf{0.803} {\scriptsize(0.045)} & \textbf{18.90} {\scriptsize(3.82)} & \textbf{0.169} {\scriptsize(0.050)} & \textbf{0.878} {\scriptsize(0.051)} & \textbf{28.48} {\scriptsize(4.04)} \\
\bottomrule
\end{tabular}
}
\caption{Inpainting inverse problems.}
\end{subtable}
\begin{subtable}{1.00\textwidth}
\fontsize{8}{10}\selectfont
\setlength{\tabcolsep}{2.5pt}
\resizebox{\textwidth}{!}{
\begin{tabular}{l*{6}{c}}
\toprule
Method & \multicolumn{2}{c}{Gaussian deblur} & \multicolumn{2}{c}{Motion deblur} & \multicolumn{2}{c}{SR (x4)} \\
\cmidrule(lr){2-3} \cmidrule(lr){4-5} \cmidrule(lr){6-7}
& LPIPS$\downarrow$ & PSNR$\uparrow$ & LPIPS$\downarrow$ & PSNR$\uparrow$ & LPIPS$\downarrow$ & PSNR$\uparrow$ \\
\midrule
\multicolumn{7}{l}{\textit{Pixel-based}} \\
DAPS & \underline{0.266} {\scriptsize(0.087)}  & \underline{25.27} {\scriptsize(3.56)} & \underline{0.166} {\scriptsize(0.058)}  & \underline{28.85} {\scriptsize(3.64)} & \underline{0.259} {\scriptsize(0.073)} & \underline{25.67} {\scriptsize(3.40)}  \\
DiffStateGrad-DAPS (ours) & \textbf{0.243} {\scriptsize(0.075)} & \textbf{25.87} {\scriptsize(3.56)} & \textbf{0.143} {\scriptsize(0.050)} & \textbf{29.71} {\scriptsize(3.54)} & \textbf{0.229} {\scriptsize(0.057)} & \textbf{26.40} {\scriptsize(3.44)} \\
MCG & 0.550 {\scriptsize(-)} & 16.32 {\scriptsize(-)} & 0.758 {\scriptsize(-)} & 5.89 {\scriptsize(-)} & 0.637 {\scriptsize(-)} & 13.39 {\scriptsize(-)} \\
\midrule
\multicolumn{7}{l}{\textit{Latent}} \\
PSLD & \underline{0.466} {\scriptsize(0.085)} & \underline{20.70} {\scriptsize(3.01)} & \underline{0.435} {\scriptsize(0.102)} & \underline{21.26} {\scriptsize(3.44)} & \underline{0.416} {\scriptsize(0.063)} & \underline{22.29} {\scriptsize(3.08)} \\
DiffStateGrad-PSLD (ours) & \textbf{0.446} {\scriptsize(0.076)} & \textbf{22.34} {\scriptsize(3.19)} & \textbf{0.399} {\scriptsize(0.060)} & \textbf{23.80} {\scriptsize(3.27)} & \textbf{0.370} {\scriptsize(0.081)} & \textbf{23.53} {\scriptsize(3.52)} \\
\bottomrule
\end{tabular}
}
\caption{Deblurring and super-resolution inverse problems.}
\end{subtable}
\label{tab:linear_tasks_imagenet}
\vspace{-8mm}
\end{table}

\section{Conclusion} 

We introduce a \textit{Diffusion State-Guided Projected Gradient} (DiffStateGrad) to enhance the performance and robustness of diffusion models in solving inverse problems. DiffStateGrad addresses the introduction of artifacts and deviations from the data manifold by constraining gradient updates to a subspace approximating the manifold. DiffStateGrad is versatile, applicable across various diffusion models and sampling algorithms, and includes an adaptive rank that dynamically adjusts to the gradient's complexity. Overall, DiffStateGrad reduces the need for excessive tuning of hyperparameters and significantly boosts performance for more challenging inverse problems. We note that DiffStateGrad assumes that the learned prior is a relatively good prior for the task at hand. Since DiffStateGrad encourages the process to stay close to the manifold structure captured by the generative prior, it may introduce the prior's biases into image restoration tasks. Hence, DiffStateGrad may not be recommended for certain inverse problems such as black hole imaging~\citep{feng2024event}. We finally note that DiffStateGrad can be combined with prior works that adopt initialization strategies for the diffusion process to further accelerate and improve performance~\citep{fabian2024adapt, chung2022come}. We leave this for future work.

\newpage

\section{Reproducibility Statement}

To ensure the reproducibility of our results, we thoroughly detail the hyperparameters used in our experiments in~\ref{subsec:hyper}. We also provide specific implementation and configuration details of all the baselines used in~\ref{subsec:psld},~\ref{subsec:resample},~\ref{subsec:dps}, and~\ref{subsec:daps}. Moreover, we use easily accessible pre-trained diffusion models throughout our experiments. PSLD (\url{https://github.com/LituRout/PSLD}), ReSample (\url{https://github.com/soominkwon/resample}), DPS (\url{https://github.com/DPS2022/diffusion-posterior-sampling}), and DAPS (\url{https://github.com/zhangbingliang2019/DAPS}) all have publicly available code. We also make our code available at \url{https://github.com/Anima-Lab/DiffStateGrad}.

\section{Acknowledgments}

R.Z. did the project under the Summer Undergraduate Research Fellowships (SURF) program at Caltech. R.Z. thanks the SURF program for funding this project. B.T. was supported by the Swartz Foundation Fellowship for Postdoctoral Research in Theoretical Neuroscience at Caltech. A.A. was supported by Bren Chair Professor and AI2050 Senior Fellow.

B.T. proposed and designed the project. R.Z. implemented the method and executed all the experiments, excluding the MRI experiment. B.T. ran the MRI experiment. R.Z and B.T. wrote the paper. A.A. supervised the project, provided valuable feedback, and offered editorial comments. R.Z. and B.T. prepared the rebuttal together. B.T. led the discussion during the review process.

\newpage
\bibliography{iclr2025_conference}
\bibliographystyle{iclr2025_conference}

\newpage

\appendix

\section{Additional Results}

\begin{table}[H]
\centering
\caption{\textbf{Robustness comparison of PSLD and DiffStateGrad-PSLD on linear tasks under different MG step sizes on FFHQ 256 $\times$ 256}.}
\fontsize{7}{8}\selectfont  
\setlength{\tabcolsep}{2pt}  
\resizebox{\textwidth}{!}{%
\begin{tabular}{l*{10}{c}}
\toprule
Method & \multicolumn{2}{c}{Inpaint (Box)} & \multicolumn{2}{c}{Inpaint (Random)} & \multicolumn{2}{c}{Gaussian deblur} & \multicolumn{2}{c}{Motion deblur} & \multicolumn{2}{c}{SR (×4)} \\
\cmidrule(lr){2-3} \cmidrule(lr){4-5} \cmidrule(lr){6-7} \cmidrule(lr){8-9} \cmidrule(lr){10-11}
& LPIPS$\downarrow$ & PSNR$\uparrow$ & LPIPS$\downarrow$ & PSNR$\uparrow$ & LPIPS$\downarrow$ & PSNR$\uparrow$ & LPIPS$\downarrow$ & PSNR$\uparrow$ & LPIPS$\downarrow$ & PSNR$\uparrow$ \\
\midrule
\multicolumn{11}{l}{\textit{Default MG step size}} \\
PSLD & 0.158 & \underline{24.22} & \underline{0.246} & \underline{29.05} & \underline{0.357} & \underline{22.87} & \underline{0.322} & \underline{24.25} & \textbf{0.313} & \underline{24.51} \\
DiffStateGrad-PSLD (ours) & \underline{0.095} & 23.76 & 0.265 & 28.14 & 0.366 & 22.24 & 0.335 & 23.34 & 0.392 & 22.12 \\
\midrule
\multicolumn{11}{l}{\textit{Large MG step size}} \\
PSLD & 0.252 & 11.99 & 0.463 & 20.62 & 0.549 & 17.47 & 0.514 & 18.81 & 0.697 & 7.700 \\
DiffStateGrad-PSLD (ours) & \textbf{0.092} & \textbf{24.32} & \textbf{0.165} & \textbf{31.68} & \textbf{0.355} & \textbf{22.95} & \textbf{0.319} & \textbf{24.31} & \underline{0.320} & \textbf{24.56} \\
\bottomrule
\end{tabular}
}
\label{tab:psld_default_large_mg_step}
\vspace{-3mm}
\end{table}

\begin{figure}[H]
    \centering
    \begin{subfigure}[b]{0.32\textwidth}
        \centering
        \includegraphics[width=\textwidth]{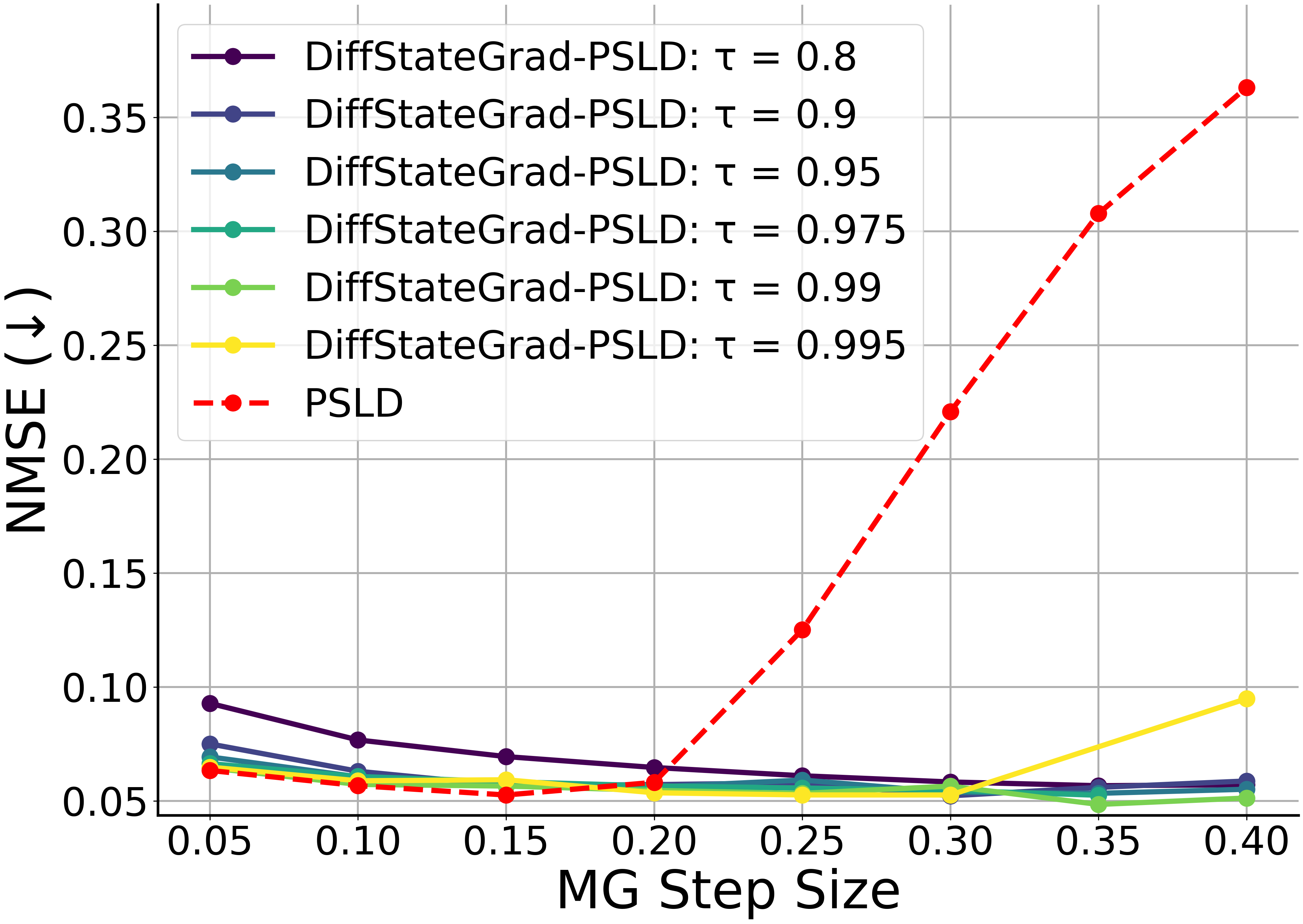}
        \caption{NMSE}
        \label{fig:random_inpainting_ablation}
    \end{subfigure}
    \hfill
    \begin{subfigure}[b]{0.32\textwidth}
        \centering
        \includegraphics[width=\textwidth]{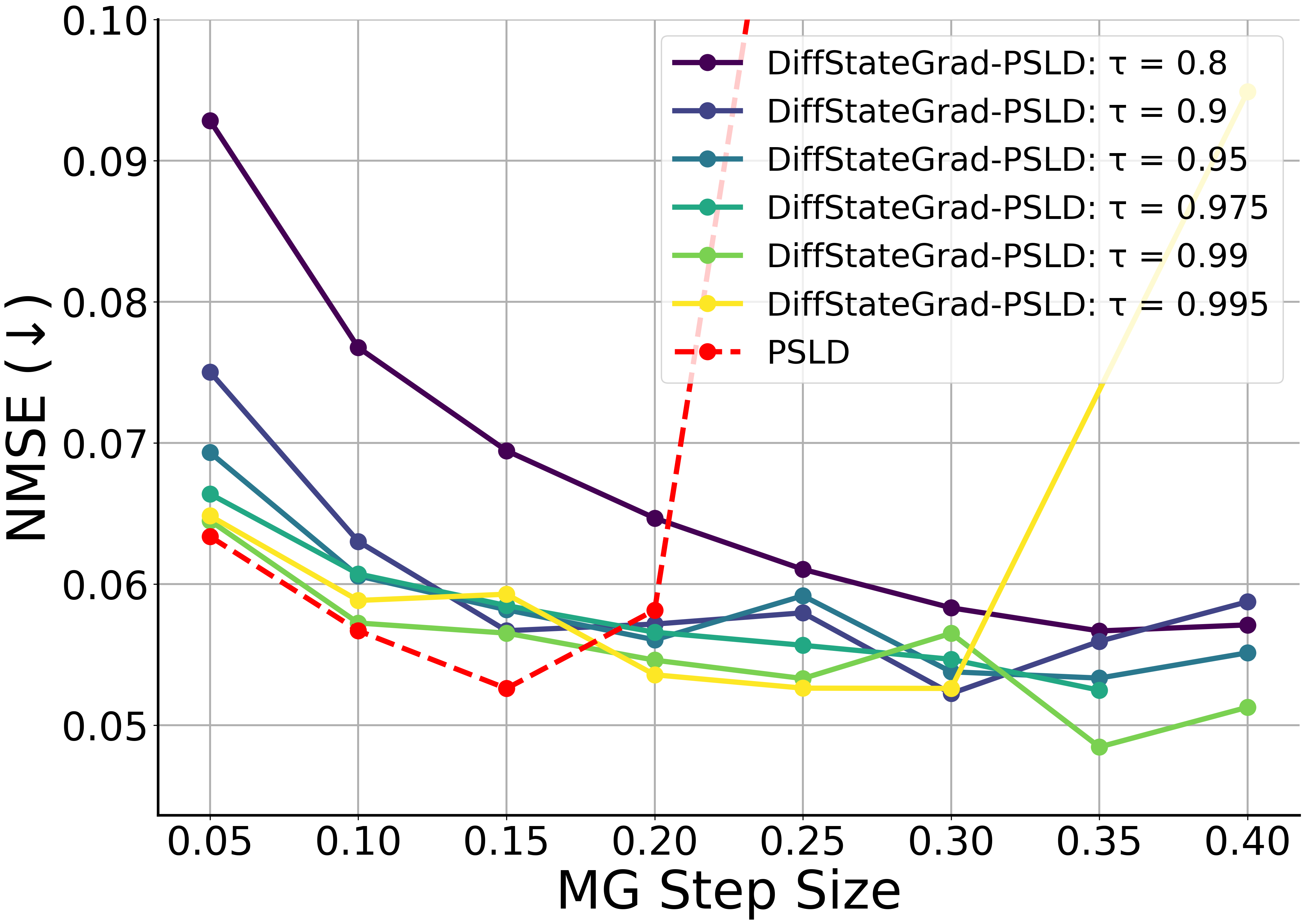}
        \caption{NMSE (zoomed)}
        \label{fig:motion_deblur_ablation}
    \end{subfigure}
    \hfill
    \begin{subfigure}[b]{0.32\textwidth}
        \centering
        \includegraphics[width=\textwidth]{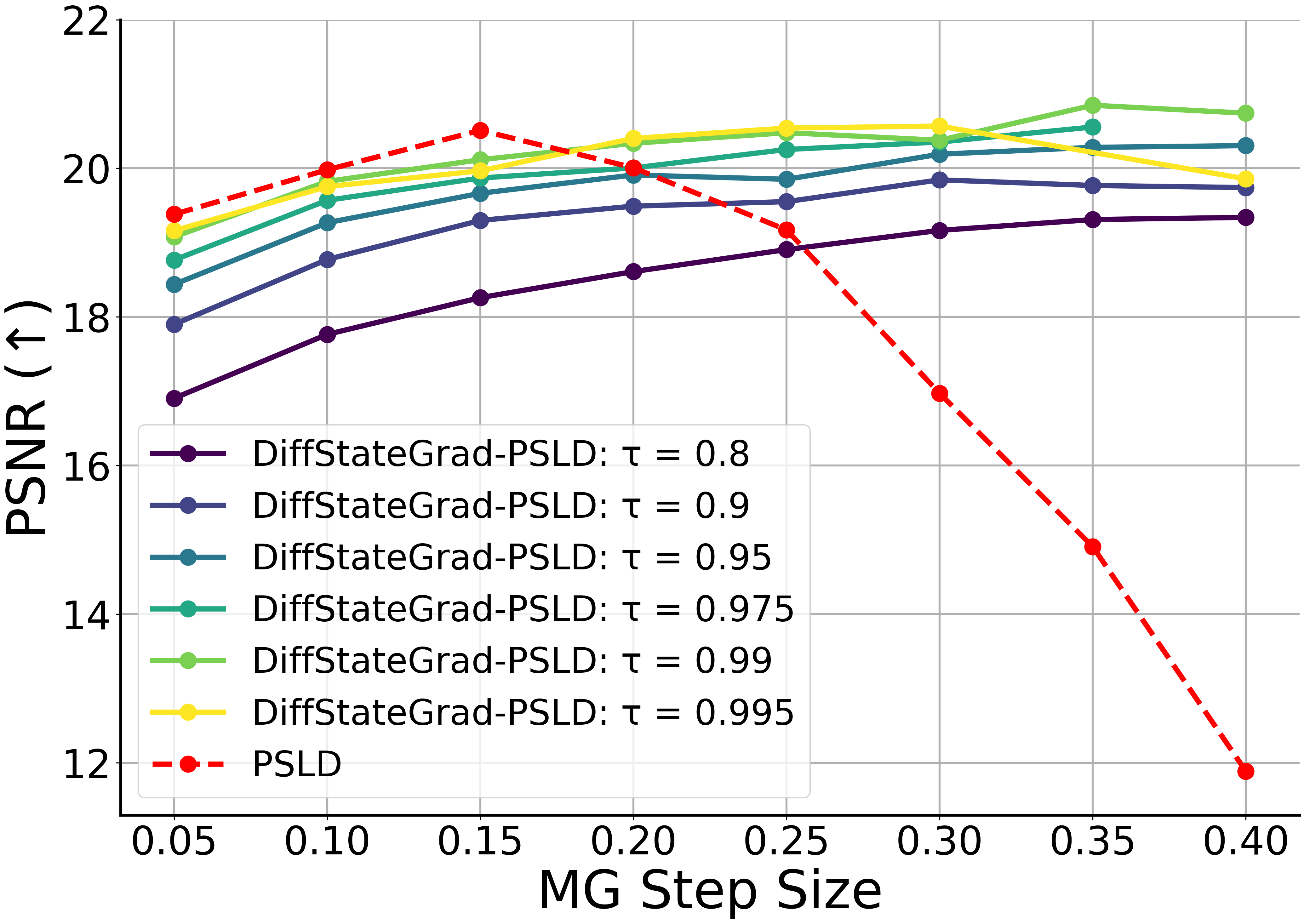}
        \caption{PSNR}
        \label{fig:super_resolution_ablation}
    \end{subfigure}
    \caption{\textbf{Robustness comparison of PSLD and DiffStateGrad-PSLD for different variance retention thresholds $\tau$}. We evaluate images from FFHQ $256 \times 256$ on box inpainting.}
    \label{fig:three_task_comparison_2}
\end{figure}

\begin{figure}[H]
    \centering
    \begin{subfigure}[b]{0.32\textwidth}
        \centering
        \includegraphics[width=\textwidth]{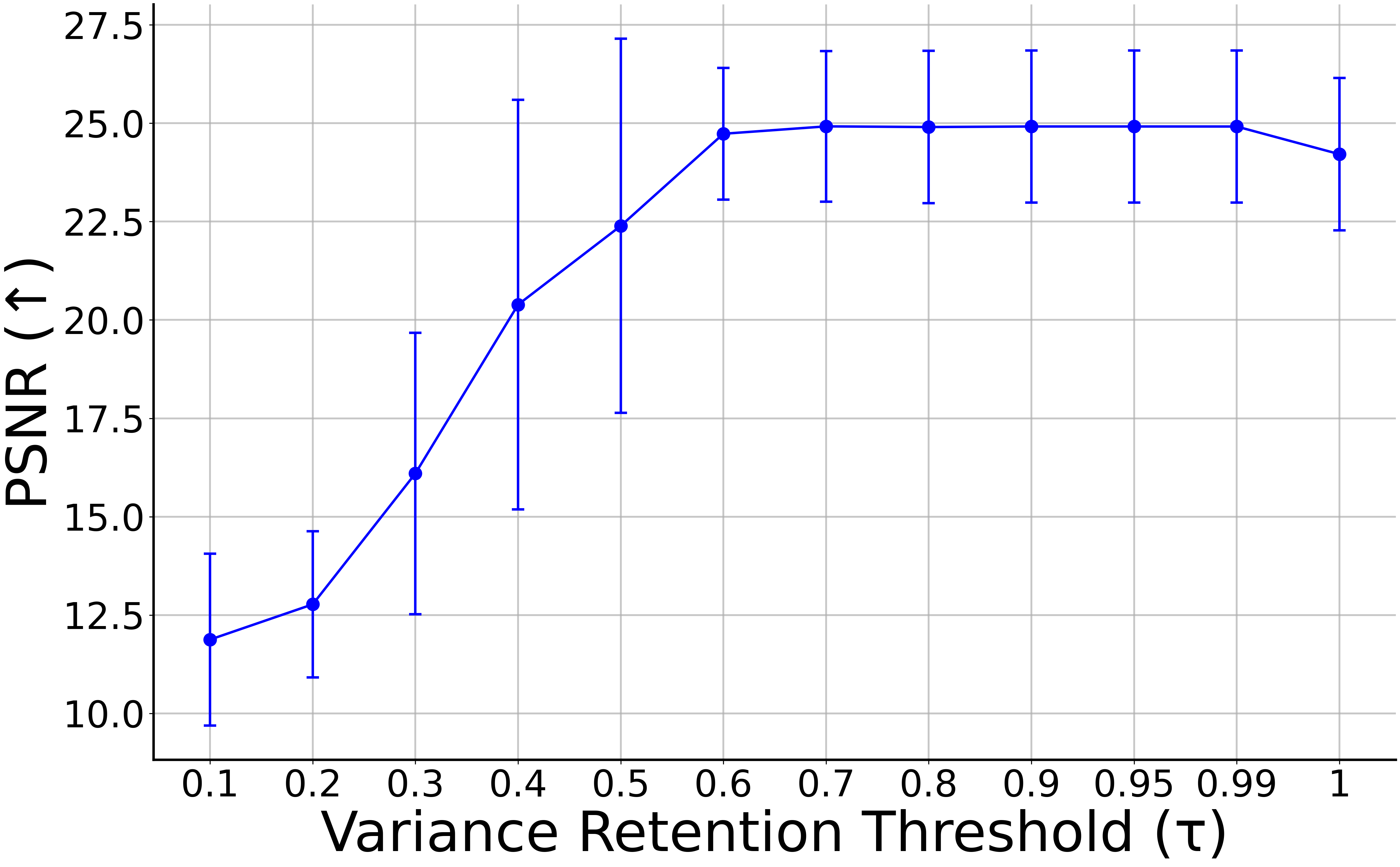}
        \caption{PSNR}
        \label{fig:tau_1}
    \end{subfigure}
    \hfill
    \begin{subfigure}[b]{0.32\textwidth}
        \centering
        \includegraphics[width=\textwidth]{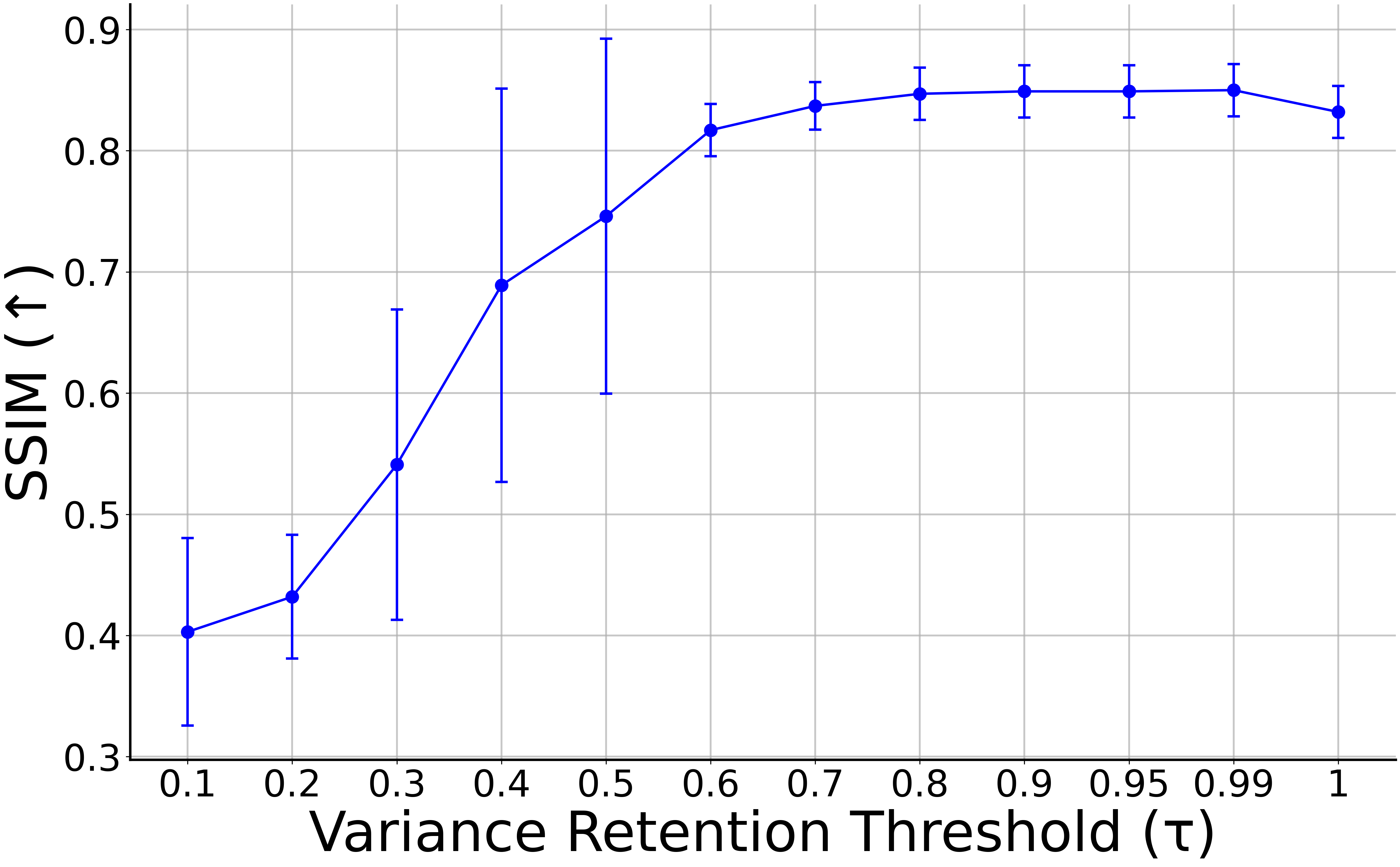}
        \caption{SSIM}
        \label{fig:tau_2}
    \end{subfigure}
    \hfill
    \begin{subfigure}[b]{0.32\textwidth}
        \centering
        \includegraphics[width=\textwidth]{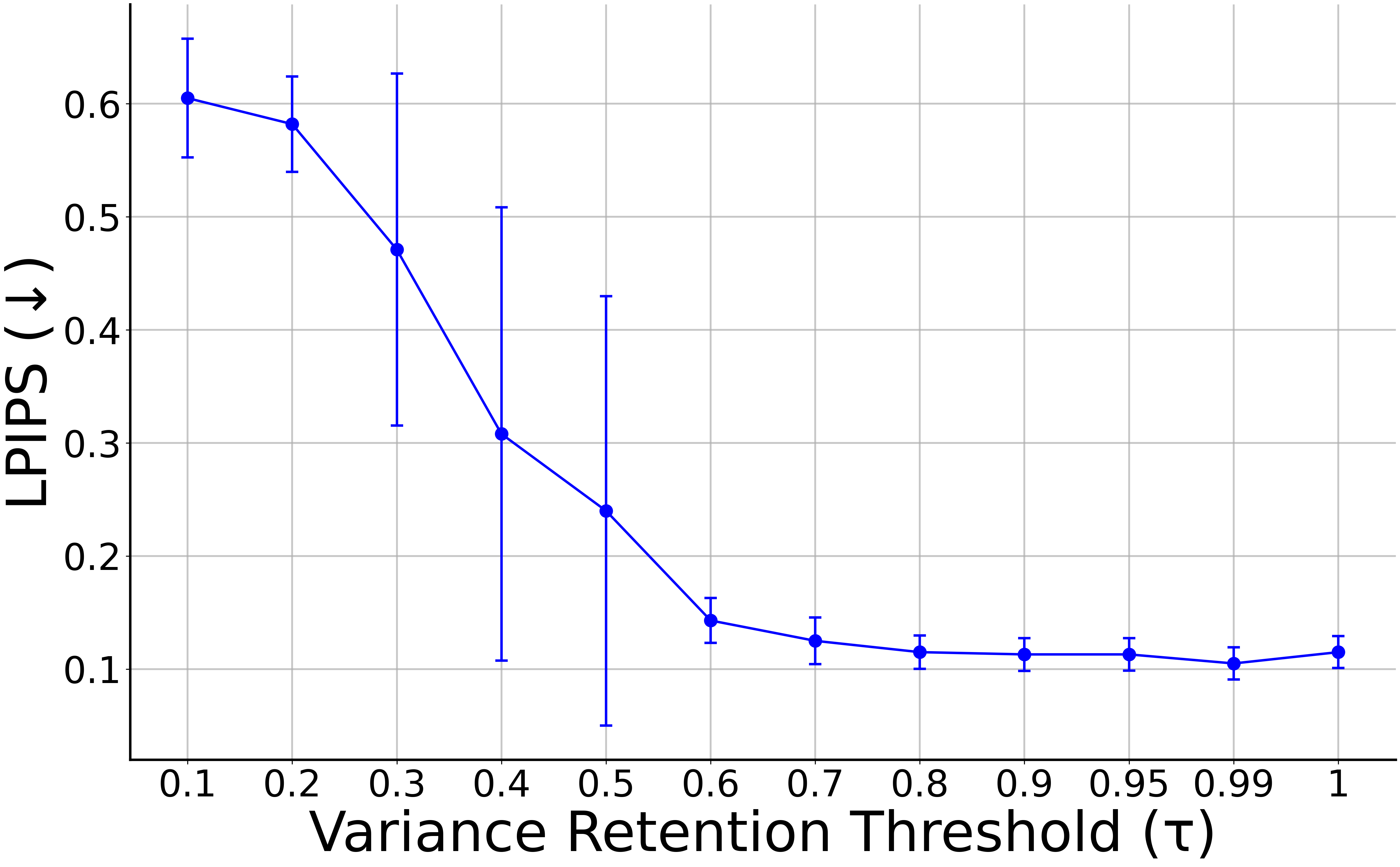}
        \caption{LPIPS}
        \label{fig:tau_3}
    \end{subfigure}
    \caption{\textbf{Performance of DiffStateGrad-DAPS for different variance retention thresholds $\tau$}. DiffStateGrad is robust to the choice of $\tau$, as values $\geq 0.6$ show similar performance in this figure. In our main experiments, we use $\tau = 0.99$ (see~\ref{subsec:hyper} for further details). We evaluate images from FFHQ $256 \times 256$ on box inpainting.}
    \label{fig:tau_compare}
\end{figure}

\begin{figure}[H]
    \centering
    \begin{subfigure}[b]{0.32\textwidth}
        \centering
        \includegraphics[width=\textwidth]{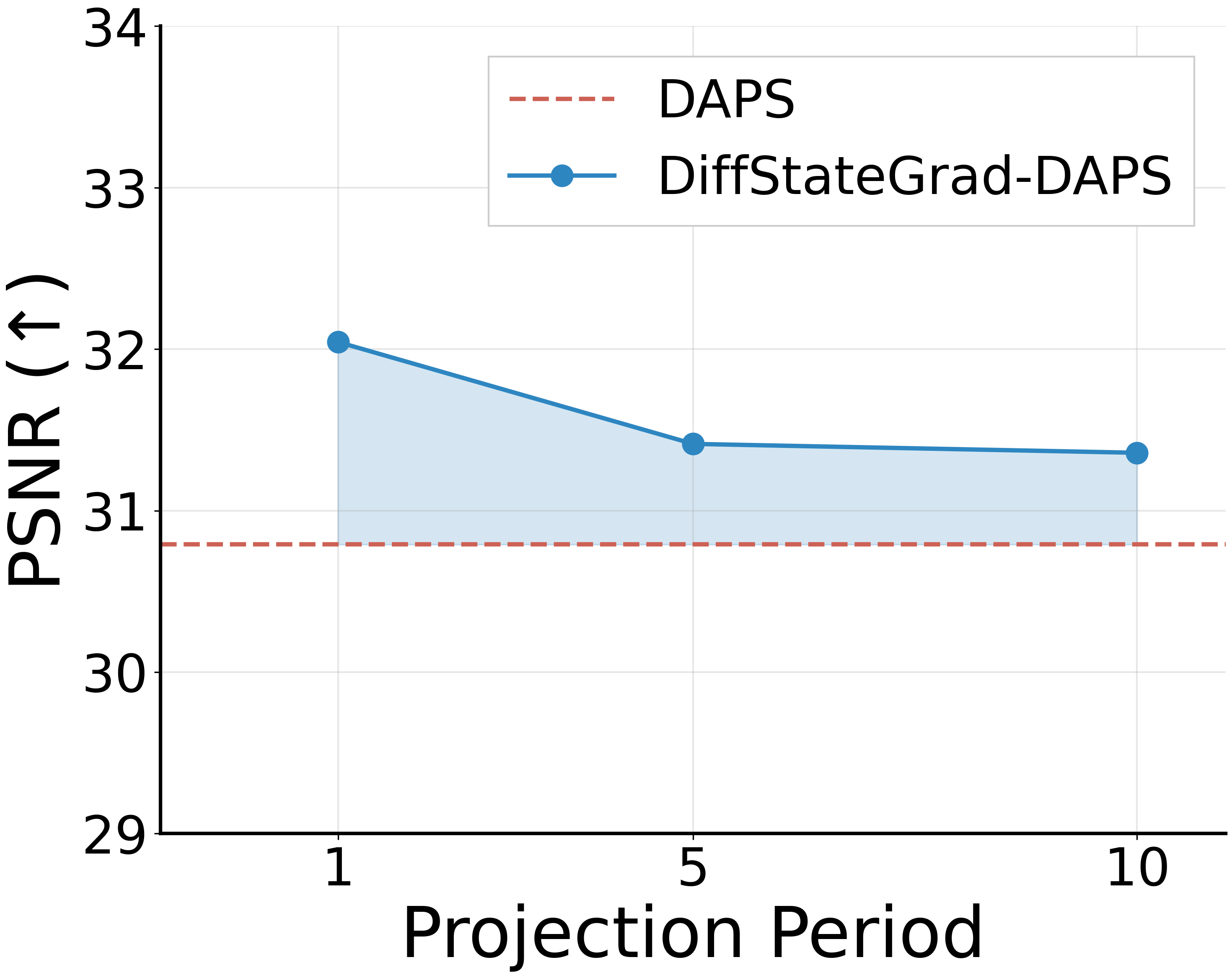}
        \caption{PSNR}
        \label{fig:freq_1}
    \end{subfigure}
    \hfill
    \begin{subfigure}[b]{0.32\textwidth}
        \centering
        \includegraphics[width=\textwidth]{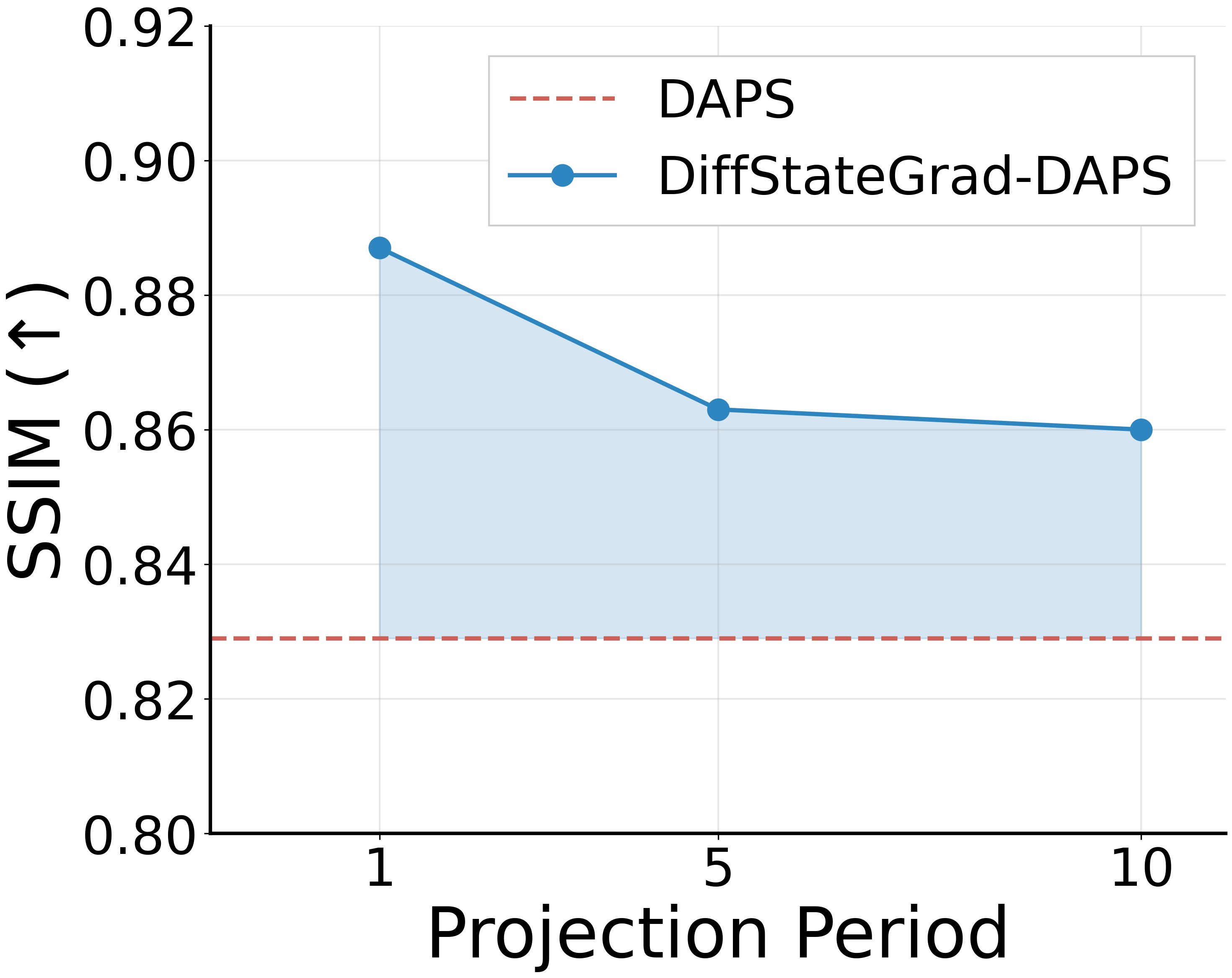}
        \caption{SSIM}
        \label{fig:freq_2}
    \end{subfigure}
    \hfill
    \begin{subfigure}[b]{0.32\textwidth}
        \centering
        \includegraphics[width=\textwidth]{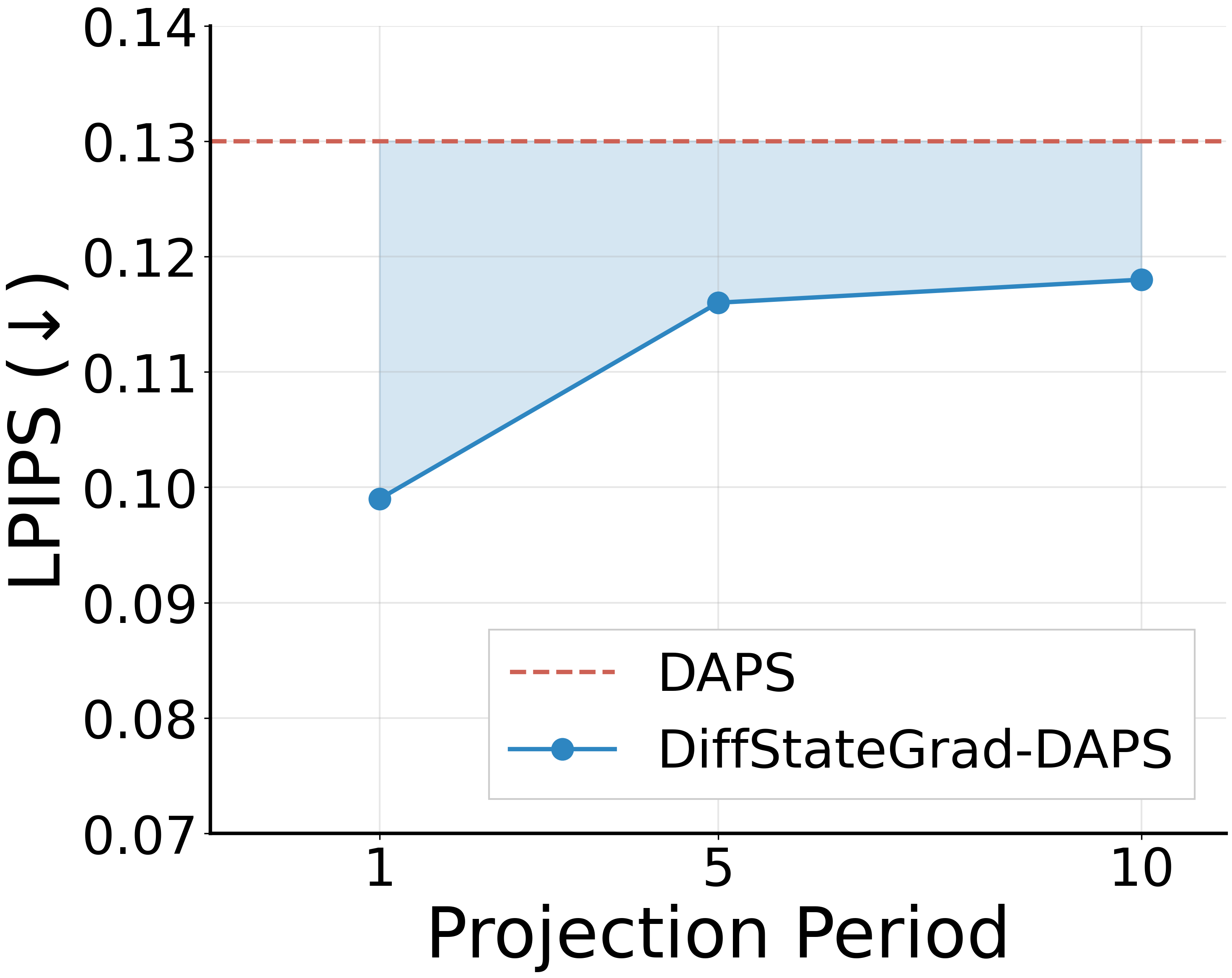}
        \caption{LPIPS}
        \label{fig:freq_3}
    \end{subfigure}
    \caption{\textbf{Performance of DiffStateGrad-DAPS for different projection periods $P$}. As period of projection increases, DiffStateGrad still outperforms DAPS without projection, which is the SOTA. See~\ref{subsec:hyper} for details of $P$ values used in our experiments. We evaluate images from FFHQ $256 \times 256$ on random inpainting.}
    \label{fig:freq_compare}
\end{figure}

\begin{table}[H]
\centering
\caption{\textbf{SSIM comparison on FFHQ 256 $\times$ 256.}}
\begin{subtable}{1.00\textwidth}
\centering
\vspace{-2mm}
\caption{Linear tasks.}
\fontsize{8}{10}\selectfont
\setlength{\tabcolsep}{2.5pt}
\resizebox{\textwidth}{!}{
\begin{tabular}{l*{5}{c}}
\toprule
Method & \multicolumn{1}{c}{Inpaint (Box)} & \multicolumn{1}{c}{Inpaint (Random)} & Gaussian deblur & Motion deblur & SR (x4) \\
\cmidrule(lr){2-2} \cmidrule(lr){3-3} \cmidrule(lr){4-4} \cmidrule(lr){5-5} \cmidrule(lr){6-6}
& SSIM$\uparrow$ & SSIM$\uparrow$ & SSIM$\uparrow$ & SSIM$\uparrow$ & SSIM$\uparrow$ \\
\midrule
\multicolumn{6}{l}{\textit{Pixel-based}} \\
DAPS & 0.806 {\scriptsize(0.028)} & 0.829 {\scriptsize(0.022)} & \underline{0.786} {\scriptsize(0.051)} & \underline{0.837} {\scriptsize(0.040)} & \underline{0.797} {\scriptsize(0.044)} \\
DiffStateGrad-DAPS (ours) & \textbf{0.849} {\scriptsize(0.029)} & \textbf{0.887} {\scriptsize(0.023)} & \textbf{0.803} {\scriptsize(0.044)} & \textbf{0.853} {\scriptsize(0.028)} & \textbf{0.801} {\scriptsize(0.039)} \\
DPS & 0.810 {\scriptsize(0.039)} & 0.815 {\scriptsize(0.045)} & 0.709 {\scriptsize(0.062)} & 0.754 {\scriptsize(0.056)} & 0.675 {\scriptsize(0.071)}
\\
DiffStateGrad-DPS (ours) & \underline{0.831} {\scriptsize(0.039)} & \underline{0.852} {\scriptsize(0.046)} & 0.739 {\scriptsize(0.062)} & 0.782 {\scriptsize(0.056)} & 0.683 {\scriptsize(0.073)}
\\
MPGD-AE & 0.753 {\scriptsize(0.029)} & 0.731 {\scriptsize(0.050)} & 0.664 {\scriptsize(0.071)} & 0.723 {\scriptsize(0.061)} & 0.670 {\scriptsize(0.070)}
\\
\midrule
\multicolumn{6}{l}{\textit{Latent}} \\
PSLD & 0.819 {\scriptsize(0.031)} & 0.809 {\scriptsize(0.049)} & 0.537 {\scriptsize(0.094)} & 0.615 {\scriptsize(0.075)} & 0.650 {\scriptsize(0.140)} \\
DiffStateGrad-PSLD (ours) & \textbf{0.880} {\scriptsize(0.028)} & \underline{0.898} {\scriptsize(0.024)} & 0.542 {\scriptsize(0.077)} & 0.620 {\scriptsize(0.065)} & 0.640 {\scriptsize(0.123)} \\
ReSample & 0.807 {\scriptsize(0.036)} & 0.892 {\scriptsize(0.030)} & \underline{0.757} {\scriptsize(0.049)} & \underline{0.854} {\scriptsize(0.034)} & \underline{0.790} {\scriptsize(0.048)} \\
DiffStateGrad-ReSample (ours) & \underline{0.841} {\scriptsize(0.032)} & \textbf{0.913} {\scriptsize(0.023)} & \textbf{0.767} {\scriptsize(0.041)} & \textbf{0.860} {\scriptsize(0.031)} & \textbf{0.795} {\scriptsize(0.044)} \\
\bottomrule
\end{tabular}
}
\label{tab:ssim_comparison}
\end{subtable}
\newline
\begin{subtable}{1.00\textwidth}
\centering
\caption{Nonlinear tasks.}
\fontsize{8}{10}\selectfont
\setlength{\tabcolsep}{2.5pt}
\resizebox{.7\textwidth}{!}{
\begin{tabular}{l*{3}{c}}
\toprule
Method & Phase retrieval & Nonlinear deblur & High dynamic range \\
\cmidrule(lr){2-2} \cmidrule(lr){3-3} \cmidrule(lr){4-4}
& SSIM$\uparrow$ & SSIM$\uparrow$ & SSIM$\uparrow$ \\
\midrule
\multicolumn{4}{l}{\textit{Pixel-based}} \\
DAPS & \underline{0.823} {\scriptsize(0.033)} & \underline{0.723} {\scriptsize(0.034)} & \underline{0.817} {\scriptsize(0.109)} \\
DiffStateGrad-DAPS (ours) & \textbf{0.868} {\scriptsize(0.026)} & \textbf{0.818} {\scriptsize(0.035)} & \textbf{0.852} {\scriptsize(0.098)} \\
\midrule
\multicolumn{4}{l}{\textit{Latent}} \\
ReSample & \underline{0.750} {\scriptsize(0.246)} & \underline{0.842} {\scriptsize(0.038)} & \underline{0.819} {\scriptsize(0.109)} \\
DiffStateGrad-ReSample (ours) & \textbf{0.855} {\scriptsize(0.130)} & \textbf{0.847} {\scriptsize(0.035)} & \textbf{0.857} {\scriptsize(0.059)} \\
\bottomrule
\end{tabular}%
}
\label{tab:ssim_comparison_nonlinear}
\vspace{-3mm}
\end{subtable}
\end{table}

\section{Visualizations}\label{subsec:visuals}

\begin{figure}[H]
\begin{subfigure}[b]{0.49\textwidth}
    \centering
    \includegraphics[width=\textwidth]{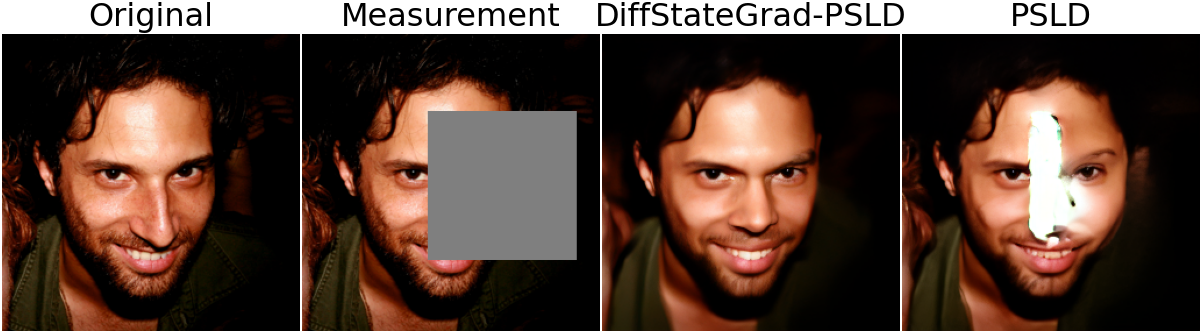}
    \includegraphics[width=\textwidth]{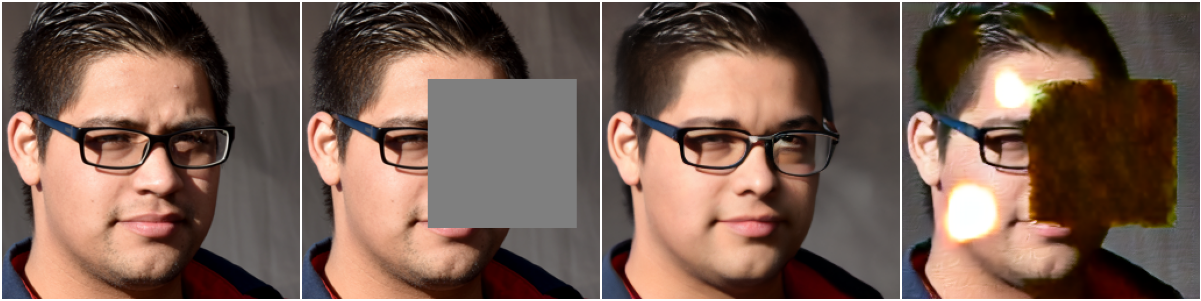}
    \vspace{-4mm}
    \caption{Box inpainting.}
\end{subfigure}
\hfill
\begin{subfigure}[b]{0.49\textwidth}
    \centering
    \includegraphics[width=\textwidth]{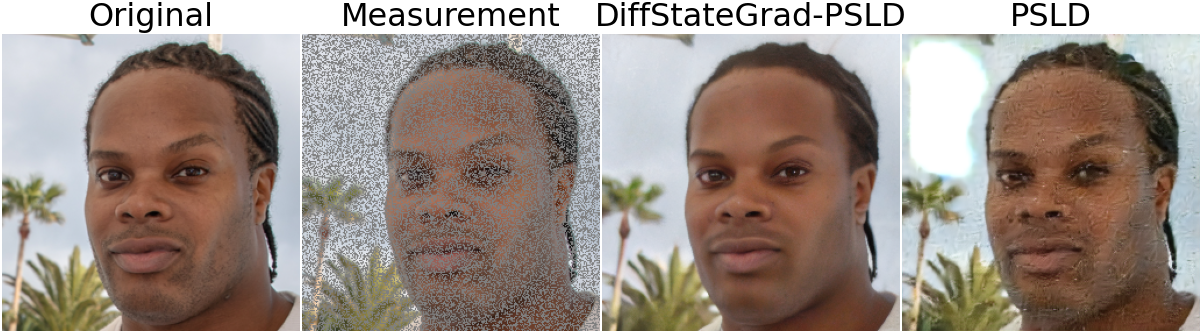}
    \includegraphics[width=\textwidth]{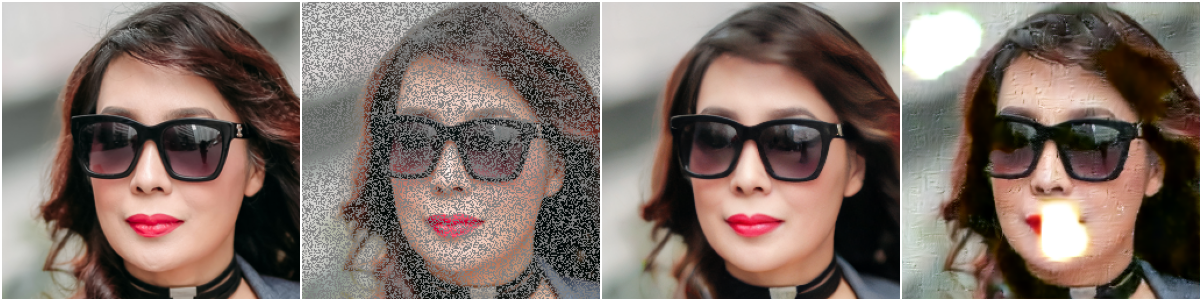}
    \vspace{-4mm}
    \caption{Random inpainting.}
\end{subfigure}

\vspace{2mm}

\begin{subfigure}[b]{0.49\textwidth}
    \centering
    \includegraphics[width=\textwidth]{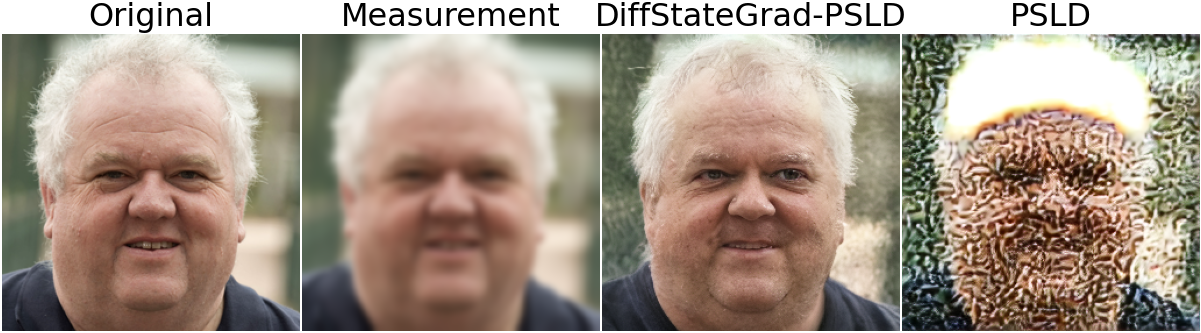}
    \includegraphics[width=\textwidth]{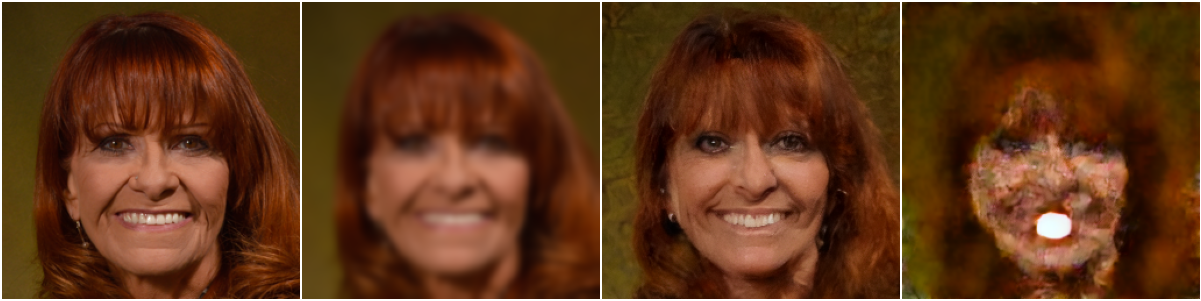}
    \vspace{-4mm}
    \caption{Gaussian deblur.}
\end{subfigure}
\hfill
\begin{subfigure}[b]{0.49\textwidth}
    \centering
    \includegraphics[width=\textwidth]{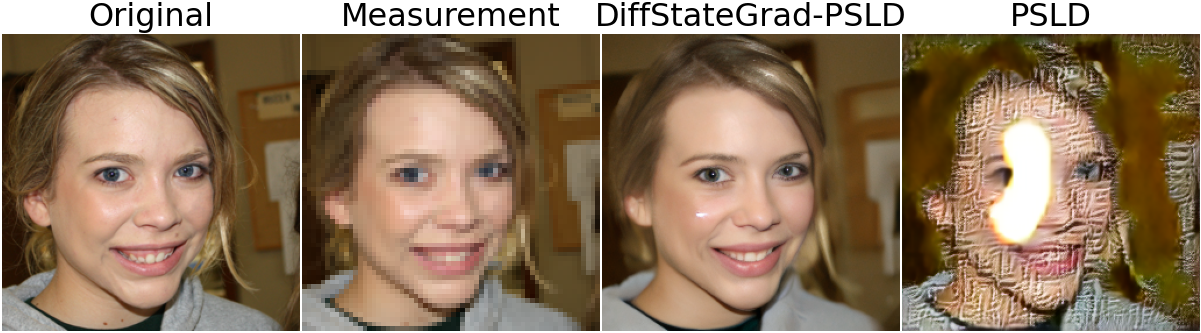}
    \includegraphics[width=\textwidth]{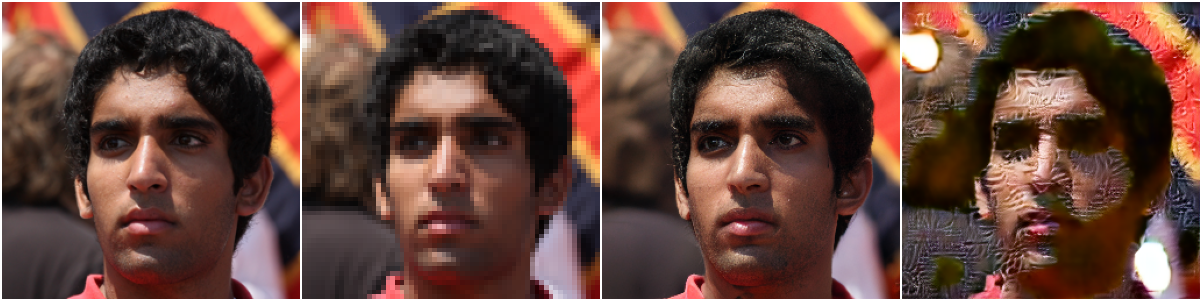}
    \vspace{-4mm}
    \caption{Super-resolution ($\times4$).}
\end{subfigure}

\caption{\textbf{Qualitative comparison of DiffStateGrad-PSLD and PSLD for a large MG step size}. Images are chosen at random for visualization.}
\label{fig:example_PLoRGpsld_vs_psld_large_stepsize_v1}
\end{figure}

\begin{figure}[H]
\begin{subfigure}[b]{0.49\textwidth}
    \centering
    \includegraphics[width=\textwidth]{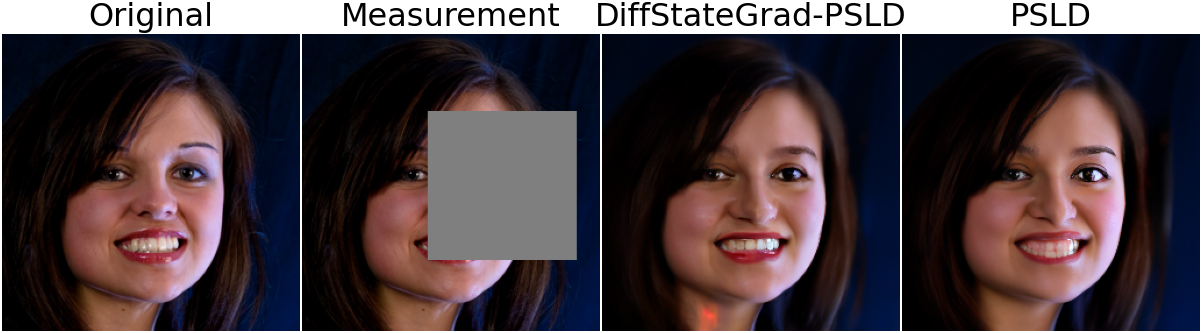}
    \includegraphics[width=\textwidth]{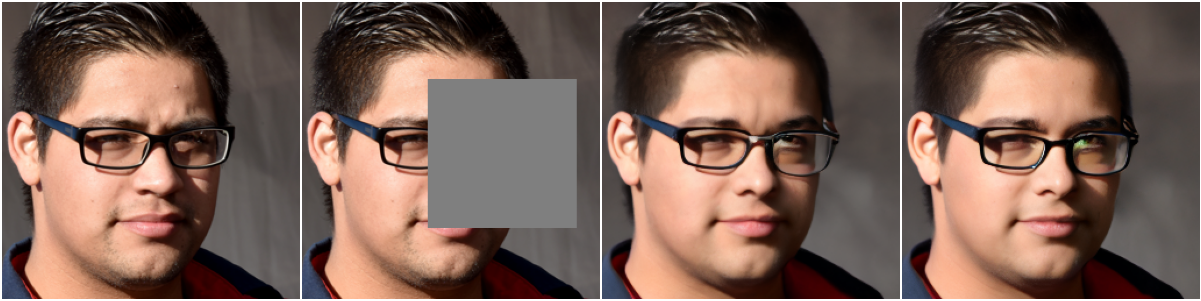}
    \vspace{-4mm}
    \caption{Box inpainting.}
\end{subfigure}
\hfill
\begin{subfigure}[b]{0.49\textwidth}
    \centering
    \includegraphics[width=\textwidth]{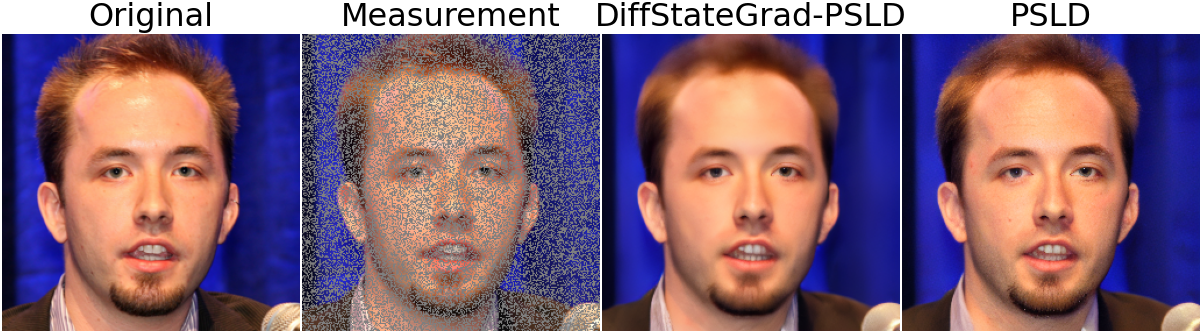}
    \includegraphics[width=\textwidth]{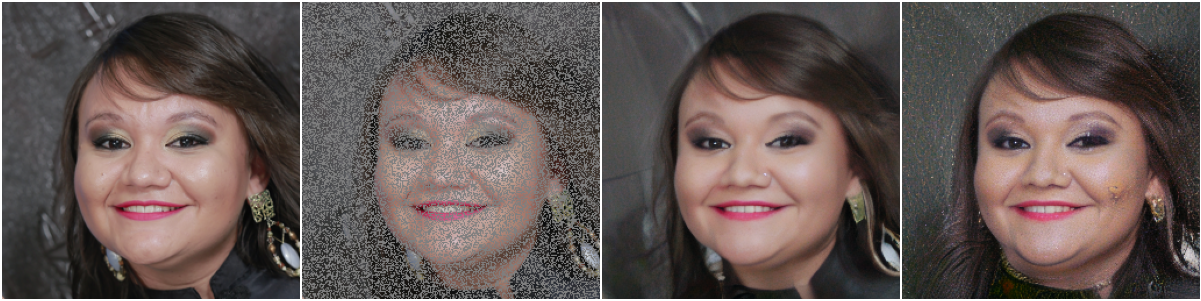}
    \vspace{-4mm}
    \caption{Random inpainting.}
\end{subfigure}

\vspace{2mm}

\begin{subfigure}[b]{0.49\textwidth}
    \centering
    \includegraphics[width=\textwidth]{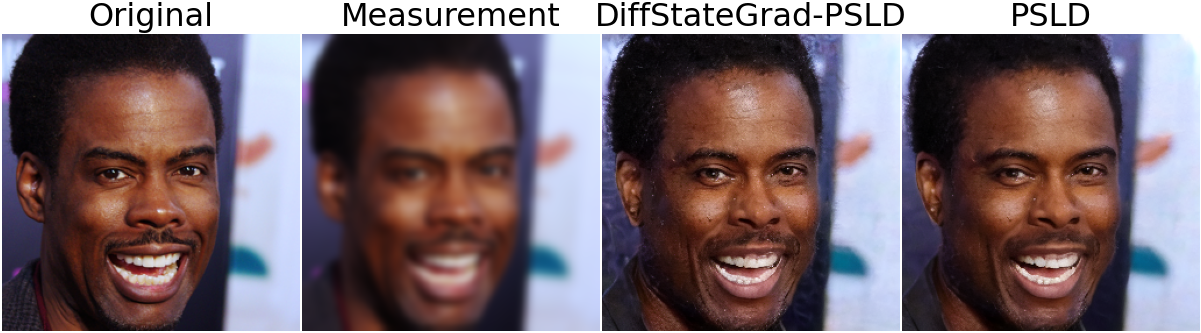}
    \includegraphics[width=\textwidth]{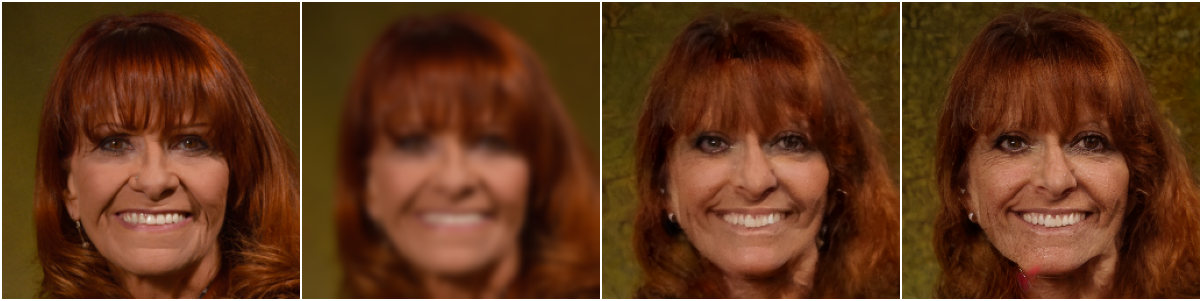}
    \vspace{-4mm}
    \caption{Gaussian deblur.}
\end{subfigure}
\hfill
\begin{subfigure}[b]{0.49\textwidth}
    \centering
    \includegraphics[width=\textwidth]{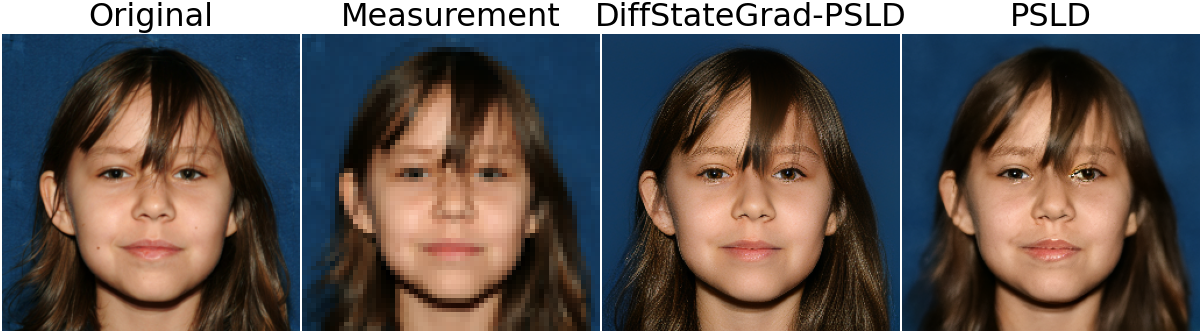}
    \includegraphics[width=\textwidth]{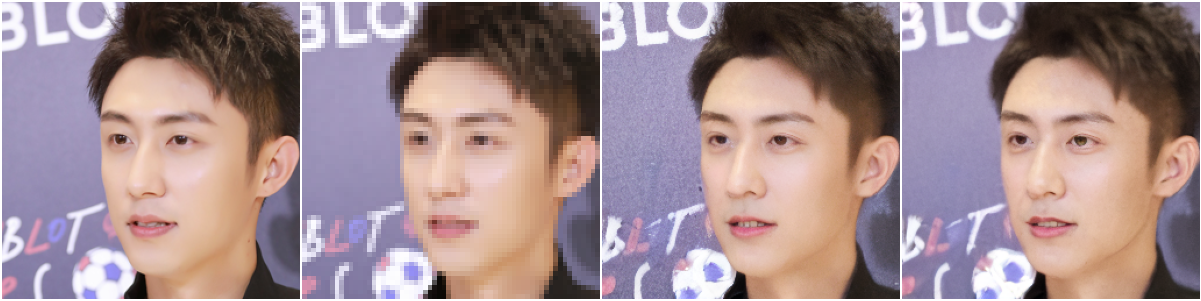}
    \vspace{-4mm}
    \caption{Super-resolution ($\times4$).}
\end{subfigure}

\caption{\textbf{Qualitative comparison of DiffStateGrad-PSLD and PSLD for their best-performing MG step size}. Images are chosen at random for visualization.}
\label{fig:example_PLoRGpsld_vs_psld_small_stepsize}
\end{figure}

\begin{figure}[H]
    \centering
    \begin{minipage}[b]{0.49\textwidth}
        \centering
        \includegraphics[width=\textwidth]{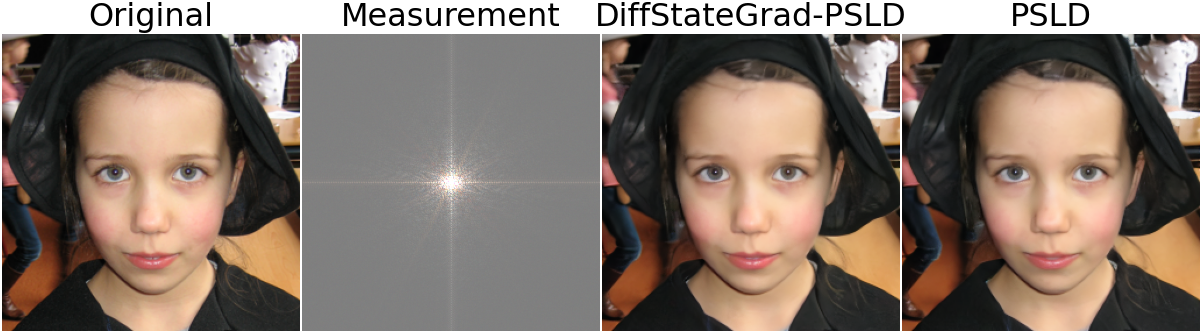}
        \vspace{2mm}
        \includegraphics[width=\textwidth]{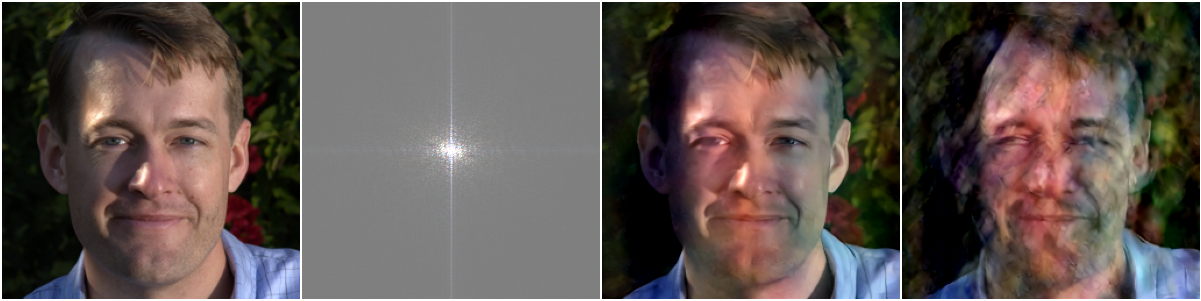}
        \vspace{2mm}
        \includegraphics[width=\textwidth]{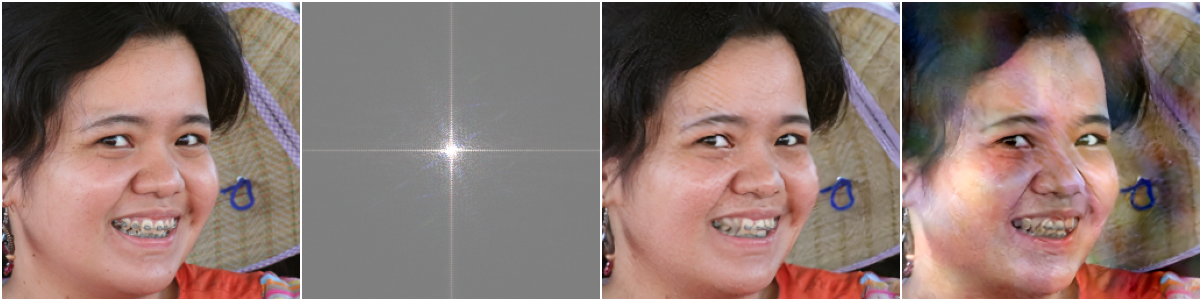}
        \vspace{2mm}
        \includegraphics[width=\textwidth]{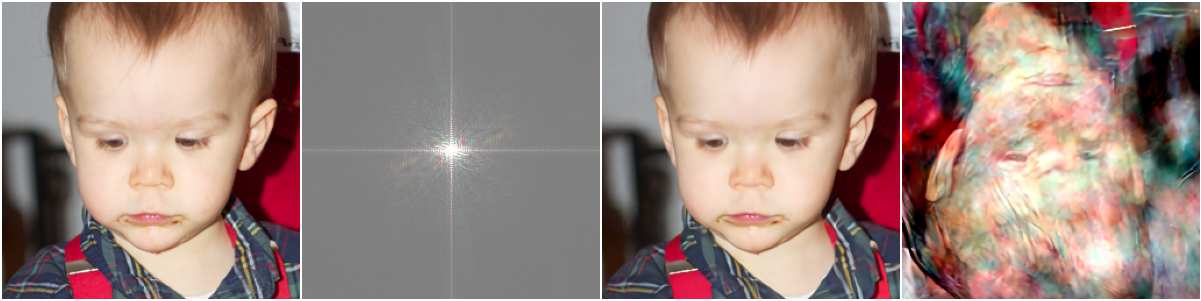}
    \end{minipage}
    \hfill
    \begin{minipage}[b]{0.49\textwidth}
        \centering
        \includegraphics[width=\textwidth]{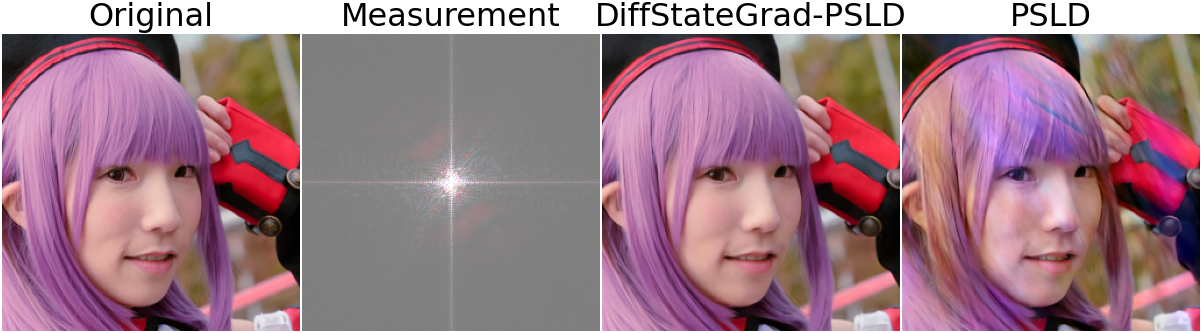}
        \vspace{2mm}
        \includegraphics[width=\textwidth]{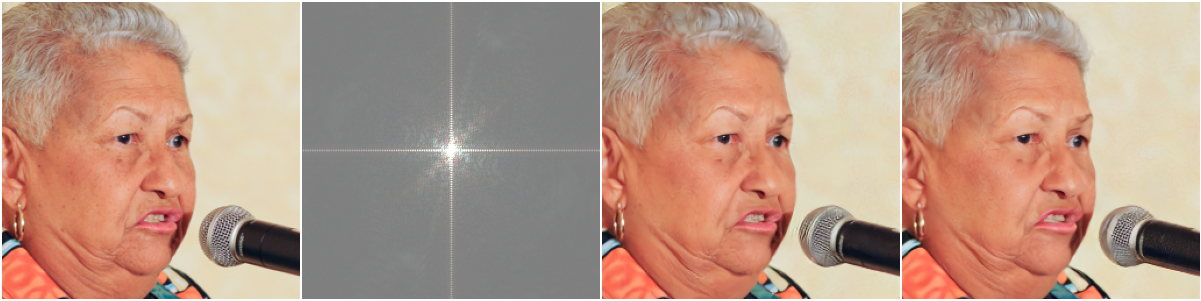}
        \vspace{2mm}
        \includegraphics[width=\textwidth]{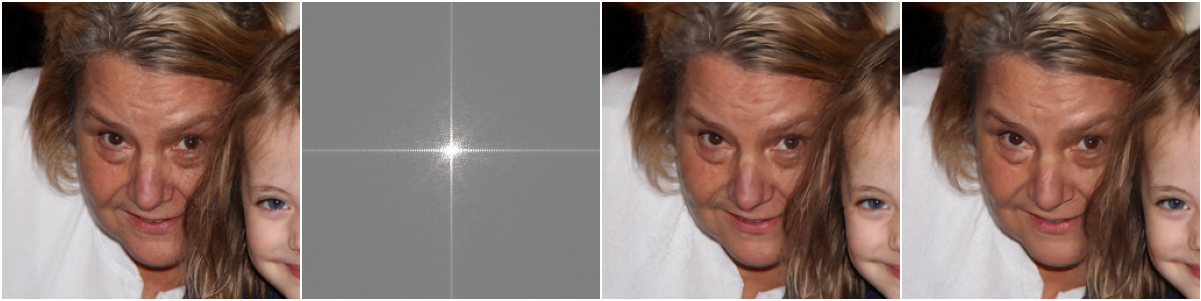}
        \vspace{2mm}
        \includegraphics[width=\textwidth]{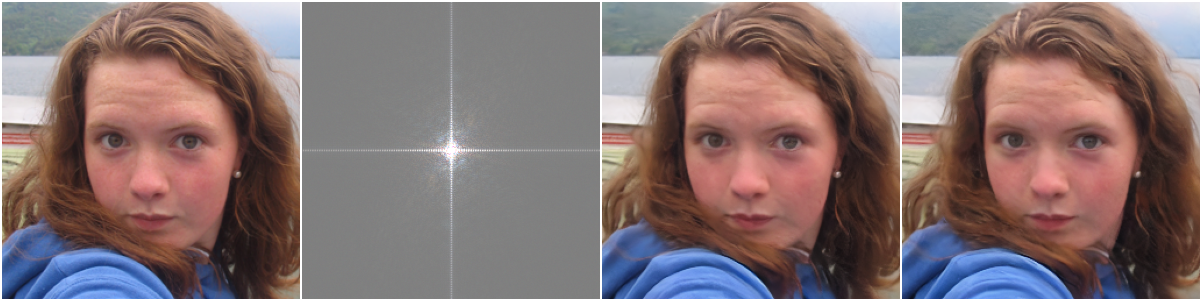}
    \end{minipage}
    \caption{\textbf{Qualitative comparison of DiffStateGrad-ReSample and ReSample for phase retrieval}. Whereas the performance of ReSample is inconsistent, DiffStateGrad-ReSample consistently produces accurate reconstructions. Images are chosen at random for visualization.}
    \label{fig:example_phase_retrieval}
\end{figure}

\begin{figure}[H]
    \centering
    \begin{minipage}[b]{0.49\textwidth}
        \centering
        \includegraphics[width=\textwidth]{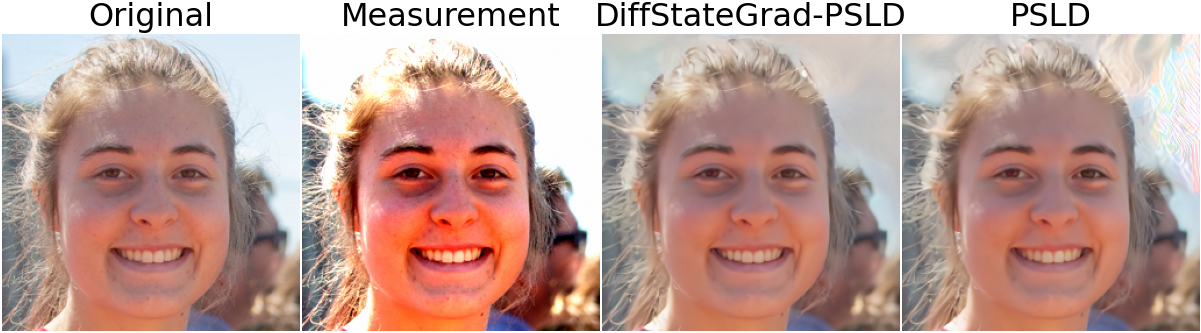}
        \vspace{2mm}
        \includegraphics[width=\textwidth]{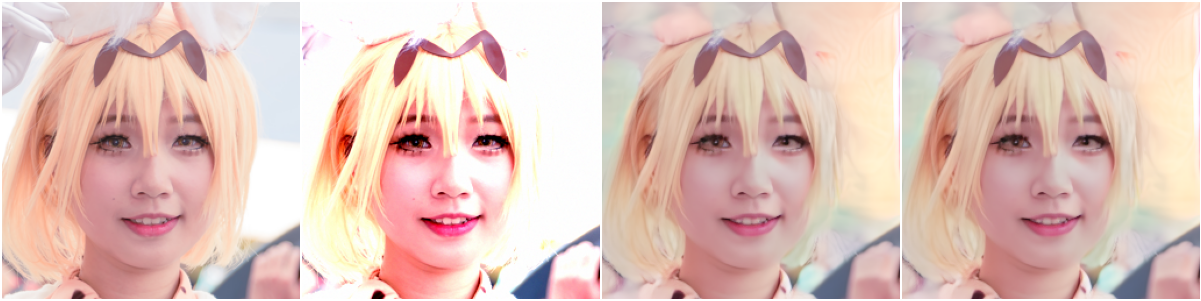}
        \vspace{2mm}
        \includegraphics[width=\textwidth]{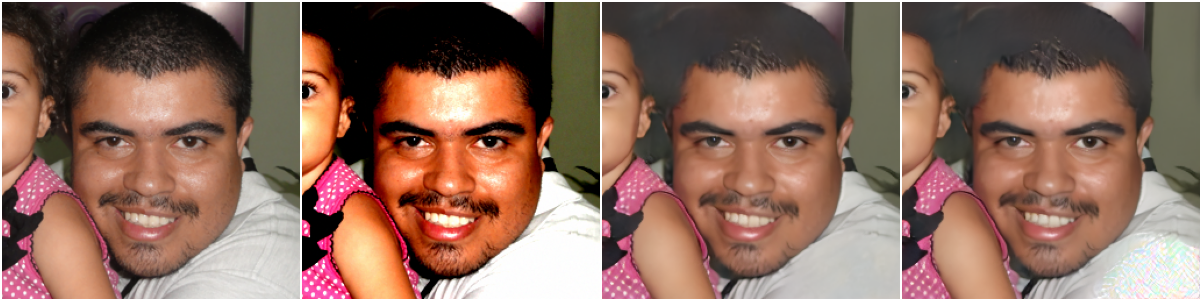}
        \vspace{2mm}
        \includegraphics[width=\textwidth]{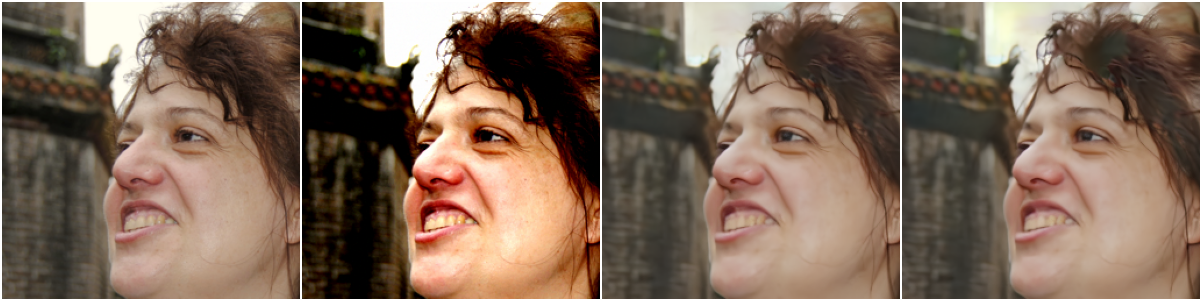}
    \end{minipage}
    \hfill
    \begin{minipage}[b]{0.49\textwidth}
        \centering
        \includegraphics[width=\textwidth]{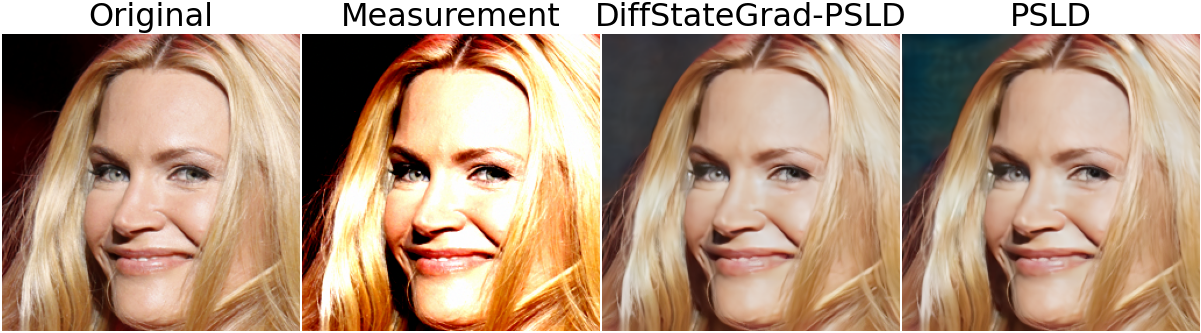}
        \vspace{2mm}
        \includegraphics[width=\textwidth]{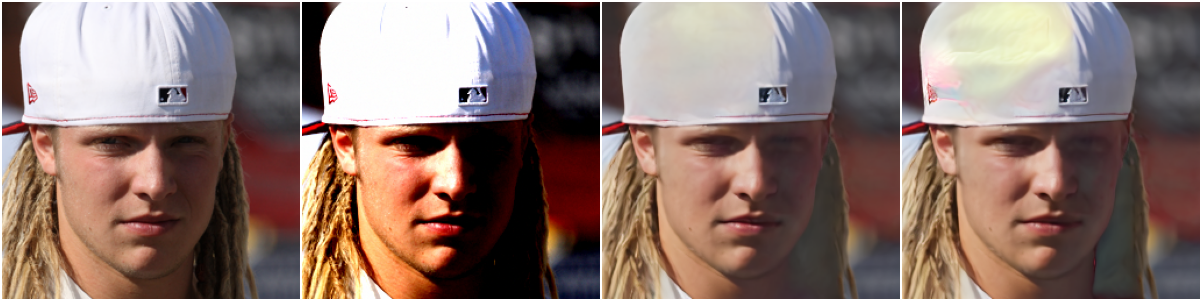}
        \vspace{2mm}
        \includegraphics[width=\textwidth]{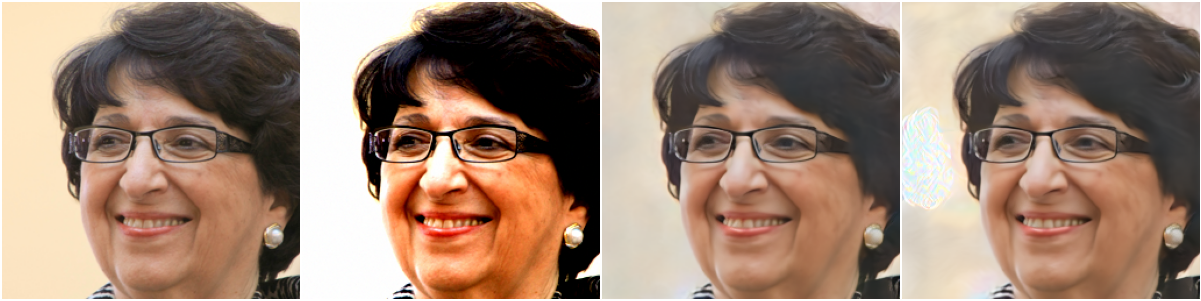}
        \vspace{2mm}
        \includegraphics[width=\textwidth]{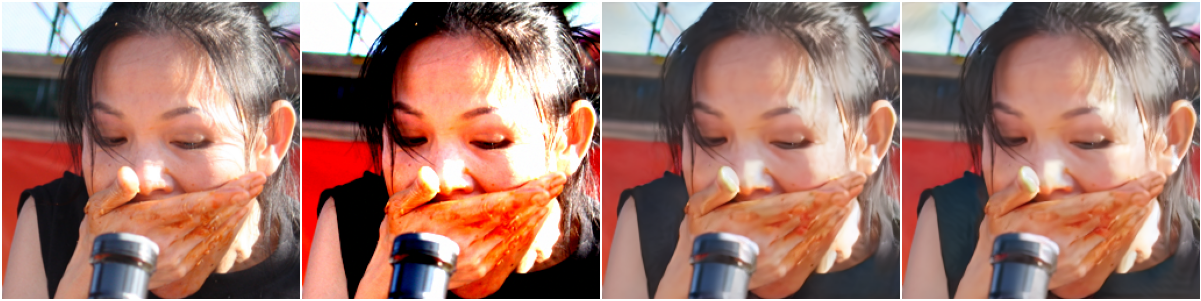}
    \end{minipage}
    \caption{\textbf{Qualitative comparison of DiffStateGrad-ReSample and ReSample for high dynamic range (HDR)}. Images are chosen at random for visualization.}
    \label{fig:example_hdr}
\end{figure}

\begin{figure}[H]
\begin{subfigure}[b]{0.47\textwidth}
    \centering
    \includegraphics[width=\textwidth]{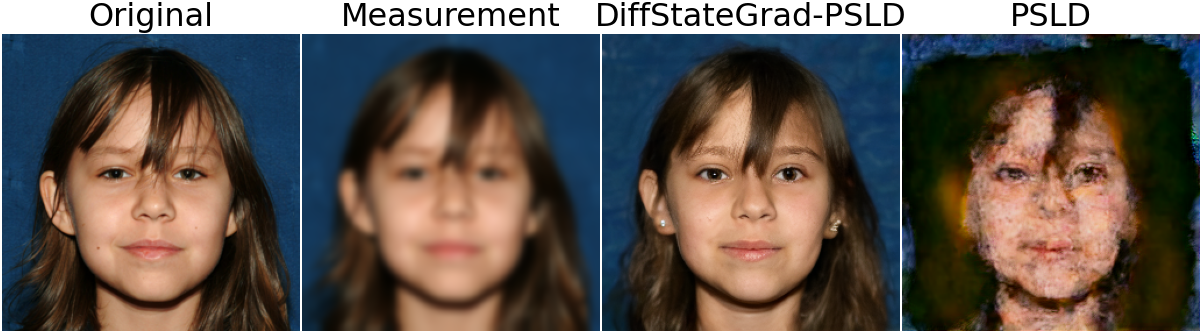}
    \includegraphics[width=\textwidth]{figures/rec_examples_best_fixed_title/best_performing_task_chosen-gb-images_id_60031_notitle.png}
    \vspace{-4mm}
    \caption{Gaussian deblur.}
\end{subfigure}
\hfill
\begin{subfigure}[b]{0.47\textwidth}
    \centering
    \includegraphics[width=\textwidth]{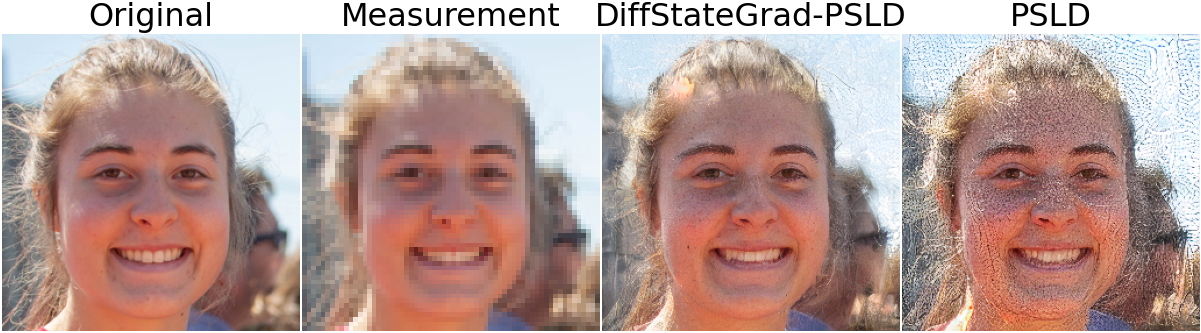}
    \includegraphics[width=\textwidth]{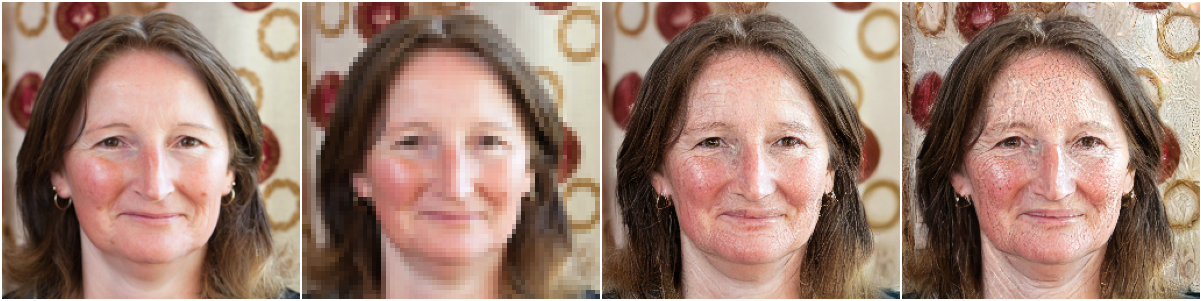}
    \vspace{-4mm}
    \caption{Super-resolution ($\times4$).}
\end{subfigure}
\vspace{2mm}
\caption{\textbf{Qualitative comparison of DiffStateGrad-PSLD and PSLD for their best-performing MG step size}. DiffStateGrad-PSLD can remove artifacts and reduce failure cases, producing more reliable reconstructions. Images are chosen at random for visualization.}
\label{fig:example_PLoRGpsld_vs_psld_large_stepsize_v5}
\end{figure}

\begin{figure}[H]
\begin{subfigure}[b]{0.5\linewidth}
	\centering
\includegraphics[width=0.999\linewidth]{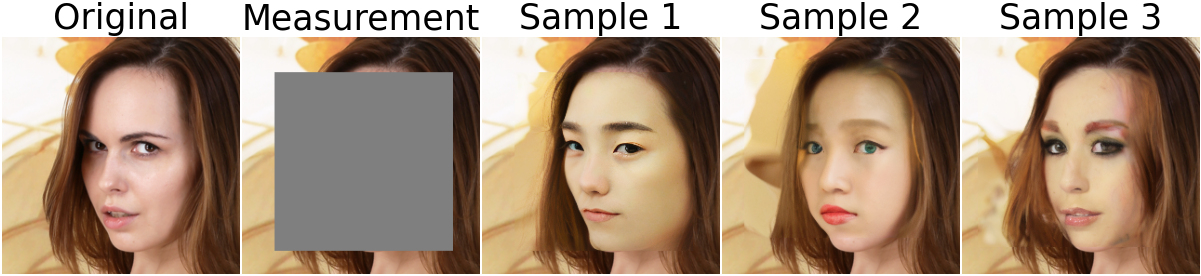}
\includegraphics[width=0.999\linewidth]{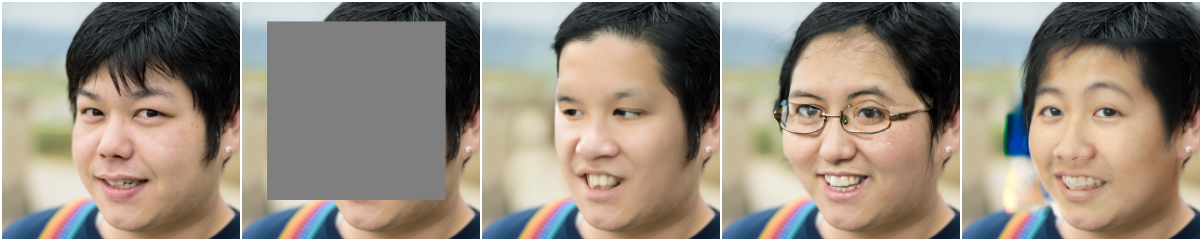}
\vspace{-4mm}
 \label{fig:diverse_bip}
 \caption{Box inpainting ($192 \times 192$ box).}
 \end{subfigure}
 \begin{subfigure}[b]{0.5\linewidth}
	\centering
\includegraphics[width=0.999\linewidth]{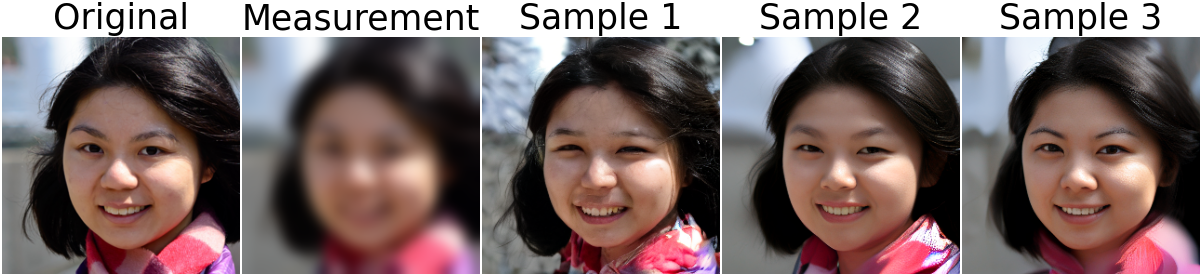}
\includegraphics[width=0.999\linewidth]{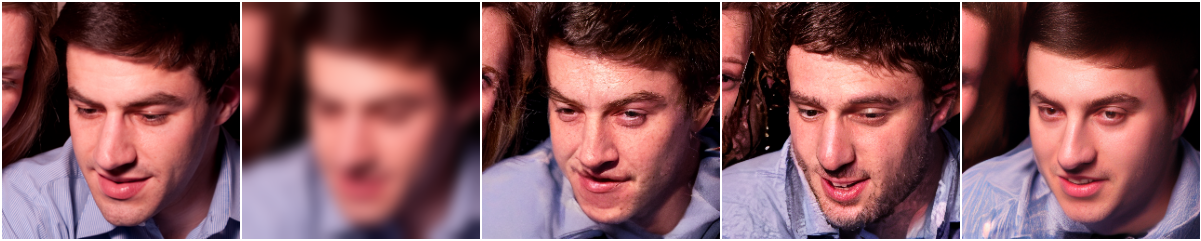}
\vspace{-4mm}
 \label{fig:diverse_gb}
 \caption{Gaussian deblur ($81 \times 81$ kernel, SD of 7).}
 \end{subfigure}
\caption{\textbf{Reconstruction diversity of DiffStateGrad.} DiffStateGrad-PSLD with a large MG step size can produce a diverse range of images from multimodal posteriors. Generated images have distinctive facial features.}
 \label{fig:diverse}
\end{figure}

\section{Implementation Details}\label{subsec:implement}

\subsection{Hyperparameters}\label{subsec:hyper}

For all main experiments across all four methods, we use the variance retention threshold $\tau = 0.99$. For all experiments involving PSLD, DPS, and DAPS, we perform the DiffStateGrad projection step every iteration ($P = 1$). For all experiments involving ReSample, we perform the step every five iterations ($P = 5$). See the sections dedicated to each method for further implementation details. We reiterate that various values of $\tau$ and $P$ are reasonable options for optimal performance (see \Cref{fig:three_task_comparison_2,fig:tau_compare,fig:freq_compare}). 

\subsection{Efficiency Experiment}\label{subsec:efficiency_experiment}

We evaluate the computational overhead introduced by DiffStateGrad across three diffusion-based methods: PSLD, ReSample, and DAPS. We conduct these experiments on the box inpainting task using an NVIDIA GeForce RTX 4090 GPU with 24GB of VRAM. Each method is run with its default settings on a set of 100 images from FFHQ $256 \times 256$, and we measure the average runtime in seconds per image.

\subsection{PSLD}\label{subsec:psld}

Our DiffStateGrad-PSLD algorithm integrates the state-guided projected gradient directly into the PSLD update process. For each iteration (i.e., period $P=1$) of the main loop, after computing the standard PSLD update $\z'_{t-1}$, we introduce our DiffStateGrad method. First, we calculate the full gradient $\bm{G}_t$ according to PSLD, combining both the measurement consistency term and the fixed-point constraint. We then perform SVD on the current latent representation (or diffusion state) $\Z_t$ ($\z_t$ in image matrix form). Using the variance retention threshold $\tau$, we determine the appropriate rank for our projection. We construct projection matrices from the truncated singular vectors and use these to approximate the gradient. This approximated gradient $\bm{G}'_t$ is then used for the final update step, replacing the separate gradient updates in standard PSLD. This process is repeated at every iteration, allowing for adaptive, low-rank updates throughout the entire diffusion process.

For experiments, we use the official implementation of PSLD \citep{rout2023solving} with default configurations (i.e., noise, step size, etc.) for reproducing baselines.

\begin{figure}[H]
    \begin{algorithm}[H]
        \caption{DiffStateGrad-PSLD for Image Restoration Tasks}
        \begin{algorithmic}[1]
            \Require $T, \bm{y}, \{ \eta_t \}_{t=1}^T, \{ \gamma_t \}_{t=1}^T, \{ \tilde{\sigma}_t \}_{t=1}^T$
            \Require $\mathcal{E}, \mathcal{D},  \mathcal{A} \x_0^\ast, \mathcal{A}, \bm{s}_\theta, $\textcolor{blue}{variance retention threshold $\tau$}
            \State $\z_T \sim \mathcal{N}(\bm{0}, \bm{I})$
            \For {$t = T-1$ \textbf{to} $0$}
                \State $\hat{\bm{s}} \gets \bm{s}_\theta(\z_t, t)$
                \State $\hat{\z}_0 \gets \frac{1}{\sqrt{\Bar{\alpha_t}}}(\z_t + (1 - \Bar{\alpha_t})\hat{\bm{s}})$
                \State $\bm{\epsilon} \sim \mathcal{N}(\bm{0}, \bm{I})$
                \State $\z'_{t-1} \gets \frac{\sqrt{\alpha_t}(1-\Bar{\alpha}_{t-1})}{1-\Bar{\alpha}_t} \z_t + \frac{\sqrt{\Bar{\alpha}_{t-1}}\beta_t}{1-\Bar{\alpha}_t} \hat{\z}_0 + \Tilde{\sigma}_t \bm{\epsilon}$
                \textcolor{blue} {
                \State $\bm{g}_t \gets \eta_t \nabla_{\z_t} \|\bm{y} - \mathcal{A}(\mathcal{D}(\hat{\z}_0))\|^2_2 + \gamma_t \nabla_{\z_t} \|\hat{\z}_0 - \mathcal{E}(\mathcal{A}^T \mathcal{A} \x_0^\ast + (\bm{I} - \mathcal{A}^T \mathcal{A})\mathcal{D}(\hat{\z}_0))\|^2_2 $
                \State $\bm{U}, \bm{S}, \bm{V} \gets \text{SVD}(\bm{Z}_{t})$\
                \State $\lambda_j \gets s_j^2$ ($s_j$ are the singular values of $\bm{S}$)
                \State $c_k \gets \frac{\sum_{j=1}^k \lambda_j}{\sum_{j} \lambda_j}$
                \State $r \gets \underset{k}{\argmin} \{c_k \geq \tau\}$
                \State $\bm{A}_t \gets \bm{U}_r$ 
                \State $\bm{B}_t \gets \bm{V}_r$
                \State $\bm{R}_t \gets \bm{A}_t^T \bm{G}_t \bm{B}_t^T$
                \State $\bm{G}'_t \gets \bm{A}_t \bm{R}_t \bm{B}_t $
                \State $\z_{t-1} \gets \z'_{t-1} - \bm{g}'_t$}
            \EndFor
            \State \Return $\mathcal{D}(\hat{\z}_0)$
        \end{algorithmic}
    \end{algorithm}
\end{figure}

\subsection{ReSample}\label{subsec:resample}

Our DiffStateGrad-ReSample algorithm integrates the state-guided projected gradient into the optimization process of ReSample~\citep{song2023solving}. We introduce two new hyperparameters: the variance retention threshold $\tau$ and a period $P$ for applying our DiffStateGrad step. During each ReSample step, we first perform SVD on the current latent representation (or diffusion state) $\Z'_t$ ($\z'_t$ in image matrix form). Note that we do not perform SVD within the gradient descent loop itself, meaning that we only perform SVD at most once per iteration of the sampling algorithm. We then determine the appropriate rank based on $\tau$ and construct projection matrices. Then, within the gradient descent loop for solving $\hat{\z}_0(\y)$, we approximate the gradient in the diffusion state subspace using our projection matrices every $P=5$ steps. On steps where DiffStateGrad is not applied, we use the standard gradient. This adaptive, periodic application of DiffStateGrad allows for a balance between the benefits of low-rank approximation and the potential need for full gradient information. The rest of the ReSample algorithm, including the stochastic resampling step, remains unchanged.

We note that the ReSample algorithm employs a two-stage approach for its hard data consistency step. Initially, it performs pixel-space optimization. This step is computationally efficient and produces smoother, albeit potentially blurrier, results with high-level semantic information. As the diffusion process approaches $t = 0$, ReSample transitions to latent-space optimization to refine the image with finer details. Our DiffStateGrad method is specifically integrated into this latter, latent-space optimization stage. By applying DiffStateGrad to the latent optimization, we aim to mitigate the potential introduction of artifacts and off-manifold deviations that can occur due to the direct manipulation of latent variables. This application of DiffStateGrad allows us to benefit from the computational efficiency of initial pixel-space optimization while enhancing the robustness and quality of the final latent-space refinement. Importantly, DiffStateGrad is not applied during the pixel-space optimization phase, as this stage already tends to produce smoother results and is less prone to artifact introduction.

For experiments, we use the official implementation of ReSample \citep{song2023solving} with default configurations (i.e., noise, step size, etc.) for reproducing baselines.

\begin{algorithm}[H]
    \caption{DiffStateGrad-ReSample for Image Restoration Tasks}
    \begin{algorithmic}[1]
        \Require Measurements $\y$, $\mathcal{A}(\cdot)$, Encoder $\mathcal{E}(\cdot)$, Decoder $\mathcal{D}(\cdot)$, Score function $\s_\theta(\cdot, t)$, Pretrained LDM Parameters $\beta_t$, $\bar{\alpha}_t$, $\eta$, $\delta$, Hyperparameter $\gamma$ to control $\sigma_t^2$, Time steps to perform resample $C$, {\color{blue} Variance retention threshold $\tau$, Period $P$}
        \State $\z_T \sim \mathcal{N}(\mathbf{0}, \boldsymbol{I})$ \Comment{Initial noise vector}
        \For{$t = T - 1, \ldots, 0$}
            \State $\boldsymbol{\epsilon}_1 \sim \mathcal{N}(\mathbf{0}, \boldsymbol{I})$
            \State $\hat{\boldsymbol{\epsilon}}_{t+1} = \s_\theta(\z_{t+1}, t + 1)$ \Comment{Compute the score}
            \State $\hat{\z}_0(\z_{t+1}) = \frac{1}{\sqrt{\bar{\alpha}_{t+1}}}(\z_{t+1} - \sqrt{1 - \bar{\alpha}_{t+1}}\hat{\boldsymbol{\epsilon}}_{t+1})$ \Comment{Predict $\hat{\z}_0$ using Tweedie's formula}
            \State $\z'_t = \sqrt{\bar{\alpha}_t}\hat{\z}_0(\z_{t+1}) + \sqrt{1 - \bar{\alpha}_t - \eta\delta^2}\hat{\boldsymbol{\epsilon}}_{t+1} + \eta\delta\boldsymbol{\epsilon}_1$ \Comment{Unconditional DDIM step}
            \If{$t \in C$} \Comment{ReSample time step}
                \State {\color{blue} Initialize $\hat{\z}_0(\y)$ with $\hat{\z}_0(\z_{t+1})$}
                \State {\color{blue} $\bm{U}, \bm{S}, \bm{V} \gets \text{SVD}(\bm{Z}'_t)$} \Comment{Perform SVD on current diffusion state}
                \State {\color{blue} $\lambda_j \gets s_j^2$ (where $s_j$ are the singular values of $\bm{S}$)} \Comment{Calculate eigenvalues}
                \State {\color{blue} $c_k \gets \frac{\sum_{j=1}^k \lambda_j}{\sum_{j} \lambda_j}$} \Comment{Cumulative sum of eigenvalues}
                \State {\color{blue} $r \gets \underset{k}{\argmin} \{c_k \geq \tau\}$} \Comment{Determine rank $r$ based on threshold $\tau$}
                \State {\color{blue} $\bm{A} \gets \bm{U}_r$} \Comment{Get first $r$ left singular vectors}
                \State {\color{blue} $\bm{B} \gets \bm{V}_r$} \Comment{Get first $r$ right singular vectors}
                \For{{\color{blue} each step in gradient descent}}
                    \If{{\color{blue} step number $\mod P = 0$}}
                        \State {\color{blue} $\bm{g} \gets \nabla_{\hat{\z}_0(\y)} \frac{1}{2}\|\y - \mathcal{A}(\mathcal{D}(\hat{\z}_0(\y)))\|_2^2$} \Comment{Compute gradient}
                        \State {\color{blue} $\bm{R} \gets \bm{A}^T \bm{G} \bm{B}^T$} \Comment{Project gradient}
                        \State {\color{blue} $\bm{G}' \gets \bm{A} \bm{R} \bm{B}$} \Comment{Reconstruct approximated gradient}
                    \Else
                        \State {\color{blue} $\bm{g}' \gets \nabla_{\hat{\z}_0(\y)} \frac{1}{2}\|\y - \mathcal{A}(\mathcal{D}(\hat{\z}_0(\y)))\|_2^2$} \Comment{Compute gradient without modification}
                    \EndIf
                    \State {\color{blue} Update $\hat{\z}_0(\y)$ using gradient $\bm{g}'$}
                \EndFor
                \State $\z_t = \text{StochasticResample}(\hat{\z}_0(\y), \z'_t, \gamma)$ \Comment{Map back to $t$}
            \Else
                \State $\z_t = \z'_t$ \Comment{Unconditional sampling if not resampling}
            \EndIf
        \EndFor
        \State $\x_0 = \mathcal{D}(\z_0)$ \Comment{Output reconstructed image}
        \State \Return $\x_0$
    \end{algorithmic}
\end{algorithm}

\subsection{DPS}\label{subsec:dps}

Like DiffStateGrad-PSLD, our DiffStateGrad-DPS algorithm integrates the state-guided projected gradient into each iteration of the DPS process. The key difference lies in operating directly on pixel-based diffusion states rather than latent representations. After computing the standard DPS update $\x'_{t-1}$, we follow the same projection strategy: performing SVD on the current diffusion state $\X_t$, determining projection rank using threshold $\tau$, and constructing an approximated gradient $\bm{G}'_t$ that replaces the standard DPS gradient update.

For experiments, we use the official implementation of DPS \citep{chung2023diffusion} with default configurations (i.e., noise, step size, etc.) for reproducing baselines.

\begin{figure}[H]
    \begin{algorithm}[H]
        \caption{DiffStateGrad-DPS for Image Restoration Tasks}
        \begin{algorithmic}[1]
            \Require $T, \bm{y}, \{\zeta_t\}_{t=1}^T, \{\tilde{\sigma}_t\}_{t=1}^T, \bm{s}_\theta,$\textcolor{blue}{variance retention threshold $\tau$}
            \State $\x_T \sim \mathcal{N}(\bm{0}, \bm{I})$
            \For {$t = T-1$ \textbf{to} $0$}
                \State $\hat{\bm{s}} \gets \bm{s}_\theta(\x_t, t)$
                \State $\hat{\x}_0 \gets \frac{1}{\sqrt{\Bar{\alpha_t}}}(\x_t + (1 - \Bar{\alpha_t})\hat{\bm{s}})$
                \State $\z \sim \mathcal{N}(\bm{0}, \bm{I})$
                \State $\x'_{t-1} \gets \frac{\sqrt{\alpha_t}(1-\Bar{\alpha}_{t-1})}{1-\Bar{\alpha}_t} \x_t + \frac{\sqrt{\Bar{\alpha}_{t-1}}\beta_t}{1-\Bar{\alpha}_t} \hat{\x}_0 + \tilde{\sigma}_t \z$
                \textcolor{blue} {
                \State $\bm{g}_t \gets \zeta_t \nabla_{\x_t} \|\bm{y} - \mathcal{A}(\hat{\x}_0)\|^2_2$
                \State $\bm{U}, \bm{S}, \bm{V} \gets \text{SVD}(\bm{X}_{t})$
                \State $\lambda_j \gets s_j^2$ ($s_j$ are the singular values of $\bm{S}$)
                \State $c_k \gets \frac{\sum_{j=1}^k \lambda_j}{\sum_{j} \lambda_j}$
                \State $r \gets \underset{k}{\argmin} \{c_k \geq \tau\}$
                \State $\bm{A}_t \gets \bm{U}_r$ 
                \State $\bm{B}_t \gets \bm{V}_r$
                \State $\bm{R}_t \gets \bm{A}_t^T \bm{G}_t \bm{B}_t^T$
                \State $\bm{G}'_t \gets \bm{A}_t \bm{R}_t \bm{B}_t $
                \State $\x_{t-1} \gets \x'_{t-1} - \bm{g}'_t$}
            \EndFor
            \State \Return $\hat{\x}_0$
        \end{algorithmic}
    \end{algorithm}
\end{figure}

\subsection{DAPS}\label{subsec:daps}

We improve upon DAPS by incorporating a state-guided projected gradient. We introduce a variance retention threshold $\tau$ to determine the projection rank. For each noise level in the annealing loop, DAPS computes the initial estimate $\hat{\x}_0^{(0)}$ by solving the probability flow ODE using the score model $\s_\theta$. This estimate represents a guess of the clean image given the current noisy sample $\x_{t_i}$. We then perform SVD on this estimate in image matrix form $\hat{\bm{X}}_0^{(0)}$ (using it as our diffusion state), determine the appropriate rank based on $\tau$, and construct projection matrices. Within the Langevin dynamics loop, we calculate the full gradient, combining both the prior term $\log p(\hat{\x}_0^{(j)}|\x_{t_i})$ and the likelihood term $\log p(\y|\hat{\x}_0^{(j)})$. For each step (i.e., period $P=1$), we project this gradient using our pre-computed matrices and use this projected gradient for the update step. This process is repeated for each noise level, progressively refining our estimate of the clean image.

For experiments, we use the official implementation of DAPS \citep{zhang2024daps} with default configurations (i.e., noise, step size, etc.) for reproducing baselines.

\begin{figure}[H]
    \begin{algorithm}[H]
        \caption{DiffStateGrad-DAPS for Image Restoration Tasks}
        \begin{algorithmic}[1]
            \Require Score model $\s_\theta$, measurement $\y$, noise schedule $\sigma_t$, $\{t_i\}_{i\in\{0,\ldots,N_A\}}$, {\color{blue} variance retention threshold $\tau$}
            \State Sample $\x_T \sim \mathcal{N}(\mathbf{0}, \sigma_T^2 \boldsymbol{I})$
            \For {$i = N_A, N_A - 1, \ldots, 1$}
                \State Compute $\hat{\x}_0^{(0)} = \hat{\x}_0(\x_{t_i})$ by solving the probability flow ODE in Eq. (39) with $\s_\theta$
                \State {\color{blue} $\bm{U}, \bm{S}, \bm{V} \gets \text{SVD}(\hat{\bm{X}}_0^{(0)})$} \Comment{{\color{blue} Perform SVD on initial estimate}}
                \State {\color{blue} $\lambda_j \gets s_j^2$ (where $s_j$ are the singular values of $\bm{S}$)} \Comment{{\color{blue} Calculate eigenvalues}}
                \State {\color{blue} $c_k \gets \frac{\sum_{j=1}^k \lambda_j}{\sum_{j} \lambda_j}$} \Comment{{\color{blue} Cumulative sum of eigenvalues}}
                \State {\color{blue} $r \gets \underset{k}{\argmin} \{c_k \geq \tau\}$} \Comment{{\color{blue} Determine rank $r$ based on threshold $\tau$}}
                \State {\color{blue} $\bm{A} \gets \bm{U}_r$} \Comment{{\color{blue} Get first $r$ left singular vectors}}
                \State {\color{blue} $\bm{B} \gets \bm{V}_r$} \Comment{{\color{blue} Get first $r$ right singular vectors}}
                \For {$j = 0, \ldots, N - 1$}
                    \State {\color{blue} $\bm{g} \gets \nabla_{\hat{\x}_0} \log p(\hat{\x}_0^{(j)}|\x_{t_i}) + \nabla_{\hat{\x}_0} \log p(\y|\hat{\x}_0^{(j)})$} \Comment{{\color{blue} Compute gradient}}
                    \State {\color{blue} $\bm{R} \gets \bm{A}^T \bm{G} \bm{B}^T$} \Comment{{\color{blue} Project gradient}}
                    \State {\color{blue} $\bm{G}' \gets \bm{A} \bm{R} \bm{B}$} \Comment{{\color{blue} Reconstruct approximated gradient}}
                    \State $\hat{\x}_0^{(j+1)} \gets \hat{\x}_0^{(j)} + \eta_t \bm{g}' + \sqrt{2\eta_t}\boldsymbol{\epsilon}_j$, $\boldsymbol{\epsilon}_j \sim \mathcal{N}(\mathbf{0}, \boldsymbol{I})$
                \EndFor
                \State Sample $\x_{t_{i-1}} \sim \mathcal{N}(\hat{\x}_0^{(N)}, \sigma_{t_{i-1}}^2 \boldsymbol{I})$
            \EndFor
            \State \Return $\x_0$  
        \end{algorithmic}
    \end{algorithm}
\end{figure}

\subsection{DDNM}\label{subsec:ddnm}

We report numbers from DAPS \citep{zhang2024daps}, which conducts experiments under identical settings to ours.

\subsection{DDRM}\label{subsec:ddrm}

We report numbers from DAPS \citep{zhang2024daps}, which conducts experiments under identical settings to ours.

\subsection{MCG}\label{subsec:mcg}

We report numbers from DPS \citep{chung2023diffusion}, which conducts experiments under identical settings to ours.

\subsection{MPGD-AE}\label{subsec:mpgd-ae}

For experiments, we use the official implementation of MPGD-AE \citep{hemanifold}. For accurate comparison, we use 1000 DDIM steps and the guidance weight parameter of 0.5 (which we find through fine-tuning).

\section{Proofs}

\propplog*

\begin{proof}
Let $\z_t \in \mathcal{M}$ and $\g_t \in \mathbb{R}^d$. Decompose the gradient $\g_t$ into components tangent and normal to $\mathcal{M}$ at $\z_t$:
\begin{equation*}
\g_t = \g_t^\parallel + \g_t^\perp,
\end{equation*}
where $\g_t^\parallel \in T_{\z_t} \mathcal{M}$ and $\g_t^\perp \in N_{\z_t} \mathcal{M}$, the normal space at $\z_t$.

We have two projection operators:
\begin{itemize}
\item $\mathcal{P}_{T_{\z_t} \mathcal{M}}$: the exact orthogonal projection onto the tangent space $T_{\z_t} \mathcal{M}$.
\item $\mathcal{P}_{\mathcal{S}_{\z_t}}$: an approximate projection operator onto a subspace $\mathcal{S}_{\z_t}$ that closely approximates $T_{\z_t} \mathcal{M}$.
\end{itemize}

Assuming that $\mathcal{P}_{\mathcal{S}_{\z_t}}$ approximates $\mathcal{P}_{T_{\z_t} \mathcal{M}}$, we have:
\begin{equation*}
\mathcal{P}_{\mathcal{S}_{\z_t}}(\g_t) = \g_t^\parallel + \boldsymbol{\epsilon},
\end{equation*}
where $\boldsymbol{\epsilon} = \mathcal{P}_{\mathcal{S}_{\z_t}}(\g_t) - \g_t^\parallel$ is the approximation error, which is small.

The standard update is:
\begin{equation*}
\z_{t-1} = \z_t - \eta \g_t = \z_t - \eta (\g_t^\parallel + \g_t^\perp).
\end{equation*}

The projected update is:
\begin{equation*}
\z_{t-1}' = \z_t - \eta \mathcal{P}_{\mathcal{S}_{\z_t}}(\g_t) = \z_t - \eta (\g_t^\parallel + \boldsymbol{\epsilon}).
\end{equation*}

Let $\pi(\z)$ denote the orthogonal projection of $\z$ onto $\mathcal{M}$. For points close to $\z_t$, we can approximate $\pi(\z)$ using the tangent space projection, which comes from the first-order Taylor expansion of $\mathcal{M}$ at $\z_t$:
\begin{equation*}
\pi(\z) \approx \z_t + \mathcal{P}_{T_{\z_t} \mathcal{M}}(\z - \z_t).
\end{equation*}

Here, the higher-order terms for $\pi(\z_{t-1})$ are of order $\mathcal{O}(\| \z_{t-1} - \z_t \|^2) = \mathcal{O}((\eta \| \g_t \|)^2) = \mathcal{O}(\eta^2)$. Similarly, the higher-order terms for $\pi(\z_{t-1}')$ are of order $\mathcal{O}(\| \z_{t-1}' - \z_t \|^2) = \mathcal{O}((\eta \| \mathcal{P}_{\mathcal{S}_{\z_t}}(\g_t) \|)^2) = \mathcal{O}(\eta^2)$. We will address these terms at the end of the proof.

First, compute the distance from $\z_{t-1}$ to $\mathcal{M}$:
\begin{align*}
\text{dist}(\z_{t-1}, \mathcal{M}) &= \left\| \z_{t-1} - \pi(\z_{t-1}) \right\| \\
&\approx \left\| \z_t - \eta (\g_t^\parallel + \g_t^\perp) - \left( \z_t + \mathcal{P}_{T_{\z_t} \mathcal{M}}(-\eta (\g_t^\parallel + \g_t^\perp)) \right) \right\| \\
&= \left\| -\eta (\g_t^\parallel + \g_t^\perp) + \eta \mathcal{P}_{T_{\z_t} \mathcal{M}}(\g_t^\parallel + \g_t^\perp) \right\| \\
&= \left\| -\eta \g_t^\perp + \eta \mathcal{P}_{T_{\z_t} \mathcal{M}}(\g_t^\perp) \right\| \\
&= \eta \left\| \g_t^\perp - \mathcal{P}_{T_{\z_t} \mathcal{M}}(\g_t^\perp) \right\|.
\end{align*}

Since $\g_t^\perp \in N_{\z_t} \mathcal{M}$ and $\mathcal{P}_{T_{\z_t} \mathcal{M}}(\g_t^\perp) = \mathbf{0}$, we have:
\begin{equation}
\text{dist}(\z_{t-1}, \mathcal{M}) = \eta \left\| \g_t^\perp \right\|.
\end{equation}

Now, compute the distance from $\z_{t-1}'$ to $\mathcal{M}$:
\begin{align*}
\text{dist}(\z_{t-1}', \mathcal{M}) &= \left\| \z_{t-1}' - \pi(\z_{t-1}') \right\| \\
&\approx \left\| \z_t - \eta (\g_t^\parallel + \boldsymbol{\epsilon}) - \left( \z_t + \mathcal{P}_{T_{\z_t} \mathcal{M}}(-\eta (\g_t^\parallel + \boldsymbol{\epsilon})) \right) \right\| \\
&= \left\| -\eta (\g_t^\parallel + \boldsymbol{\epsilon}) + \eta \mathcal{P}_{T_{\z_t} \mathcal{M}}(\g_t^\parallel + \boldsymbol{\epsilon}) \right\| \\
&= \left\| -\eta \boldsymbol{\epsilon} + \eta \mathcal{P}_{T_{\z_t} \mathcal{M}}(\boldsymbol{\epsilon}) \right\| \\
&= \eta \left\| \boldsymbol{\epsilon} - \mathcal{P}_{T_{\z_t} \mathcal{M}}(\boldsymbol{\epsilon}) \right\| \\
&= \eta \left\| (I - \mathcal{P}_{T_{\z_t} \mathcal{M}}) \boldsymbol{\epsilon} \right\|.
\end{align*}

Since $\left\| \boldsymbol{\epsilon}^\perp \right\| = \left\| (I - \mathcal{P}_{T_{\z_t} \mathcal{M}}) \boldsymbol{\epsilon} \right\|$, we have:
\begin{equation}
\text{dist}(\z_{t-1}', \mathcal{M}) = \eta \left\| \boldsymbol{\epsilon}^\perp \right\|.
\end{equation}

Because $\boldsymbol{\epsilon}$ is small, we can bound $\left\| \boldsymbol{\epsilon}^\perp \right\| \leq c' \left\| \g_t^\perp \right\|$ for some small constant $c' > 0$.

Therefore,
\begin{equation*}
\text{dist}(\z_{t-1}', \mathcal{M}) \leq c' \eta \left\| \g_t^\perp \right\|.
\end{equation*}

Comparing the distances, we have:
\begin{align*}
\text{dist}(\z_{t-1}, \mathcal{M}) - \text{dist}(\z_{t-1}', \mathcal{M}) &\geq \eta \left\| \g_t^\perp \right\| - c' \eta \left\| \g_t^\perp \right\| \\
&= (1 - c') \eta \left\| \g_t^\perp \right\| \\
&= c \eta \left\| \g_t^\perp \right\|,
\end{align*}
where $c = 1 - c' > 0$.

Including higher-order terms, we can write:
\begin{equation}
\text{dist}(\z_{t-1}', \mathcal{M}) \leq \text{dist}(\z_{t-1}, \mathcal{M}) - c \eta \left\| \g_t^\perp \right\| + \mathcal{O}(\eta^2).
\end{equation}

Therefore, for sufficiently small $\eta$, the linear term dominates the higher-order terms, meaning that the projected update $\z_{t-1}'$ stays closer to the manifold $\mathcal{M}$ than the standard update $\z_{t-1}$:
\begin{equation*}
\text{dist}(\z_{t-1}', \mathcal{M}) < \text{dist}(\z_{t-1}, \mathcal{M}).
\end{equation*}
\end{proof}

\section{MRI Experiments}\label{sec:mri}

To demonstrate the applicability of DiffStateGrad beyond natural images and the discussed diffusion-based inverse problems, we conduct an additional experiment on Magnetic Resonance Imaging (MRI);  this represents an inverse problem where undersampled measurements are observed in the frequency domain, and the task is to reconstruct high-quality MRI in the image space. We utilize the Compressed Sensing Generative Model ~\citep{jalal2021robust}, which employs Langevin dynamics for MRI reconstruction (available at {\url{https://github.com/utcsilab/csgm-mri-langevin}). This method, which we refer to as CSGM-Langevin, is built upon prior works~\citep{song2020improved, song2019generative}.

We incorporate our proposed DiffStateGrad into the measurement gradient guidance. Specifically, given the complex nature of MRI data, we apply DiffStateGrad to the magnitude of the complex-valued data while preserving the phase information. This framework uses $\x_{t}$ for both measurement gradient computation and incorporation. The unconditional diffusion model was trained on T2-weighted brain datasets from the NYU fastMRI dataset~\citep{zbontar2018fastMRI, knoll2020fastmri}, and the reported results were averaged over $30$ test examples (reporting avg (std)). The measurement operators give samples vertically equispaced in $k$-space at an undersampling rate of $R=4$.

While the default optimized measurement gradient step size from~\citep{jalal2021robust} is $5.0$, the results confirm that DiffStateGrad enhances the robustness of CSGM-Langevin when the step size deviates from this optimal value. For example, CSGM-Langevin's performance is significantly reduced when the measurement gradient step size increases to $7.5$ or $10$, while DiffStateGrad's performance decline is less. This suggests that DiffStateGrad can be a practical tool to reduce the risk of drastic failure in real-world applications. Finally, given the small sample size and the preliminary nature of this analysis, we leave extensive analysis for future work.

\begin{figure}[t]
    \centering
    \begin{subfigure}[b]{\textwidth}
        \centering
        \begin{subfigure}[b]{0.32\textwidth}
            \centering
            \includegraphics[width=\textwidth]{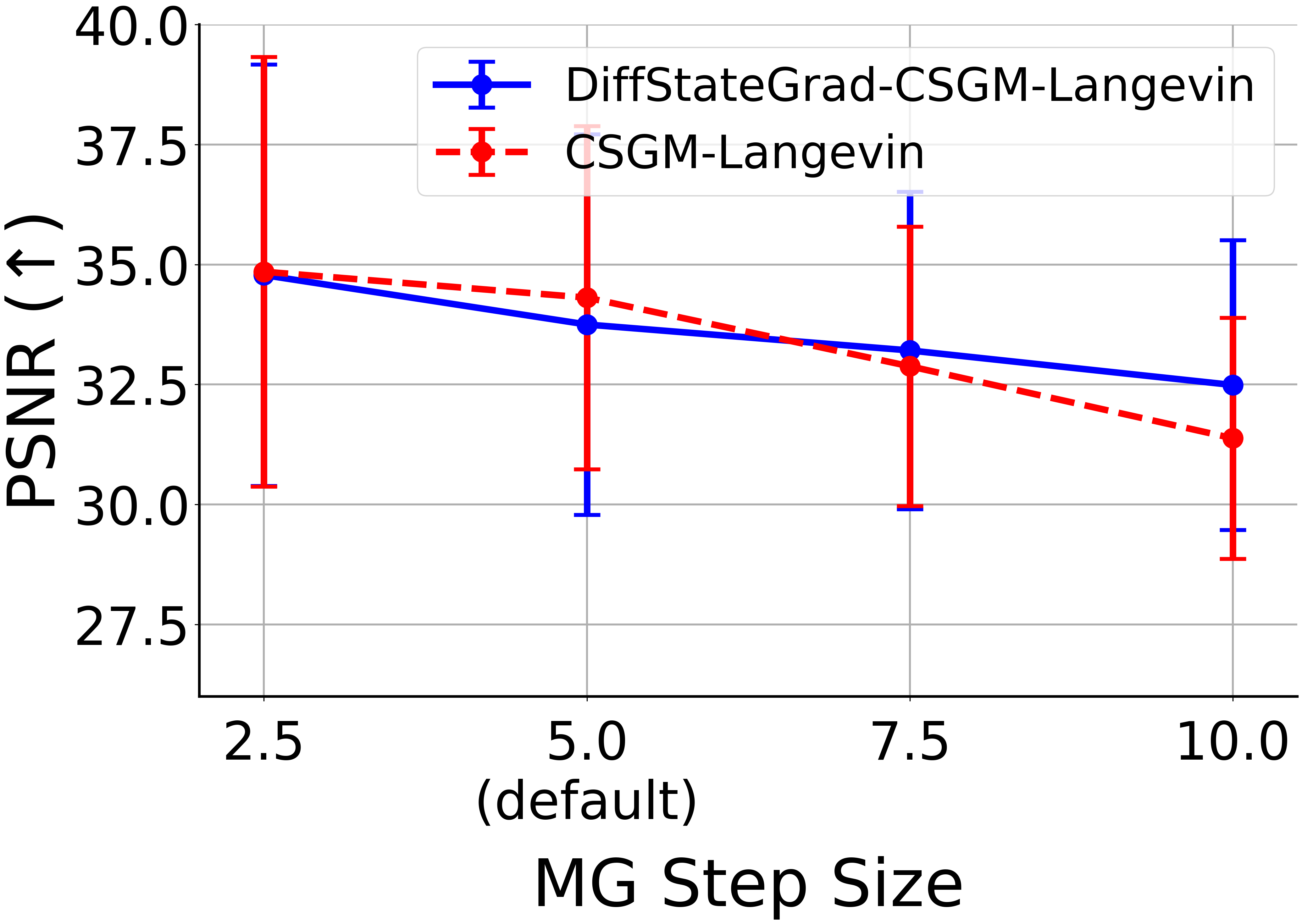}
            \vspace{-4mm}
            \caption{PSNR.}
        \end{subfigure}
        \hfill
        \begin{subfigure}[b]{0.32\textwidth}
            \centering
            \includegraphics[width=\textwidth]{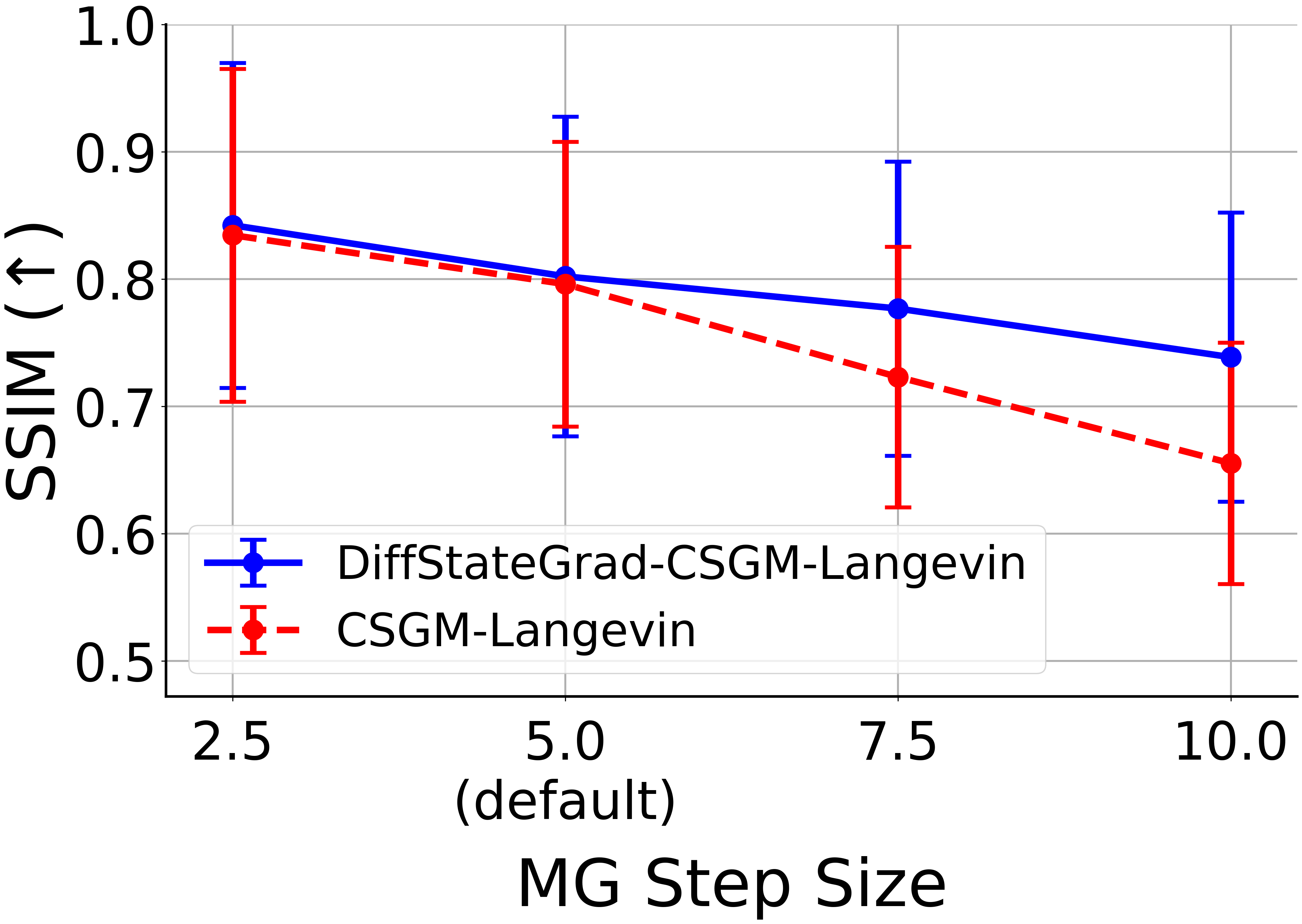}
            \vspace{-4mm}
            \caption{SSIM.}
        \end{subfigure}
        \hfill
        \begin{subfigure}[b]{0.32\textwidth}
            \centering
            \includegraphics[width=\textwidth]{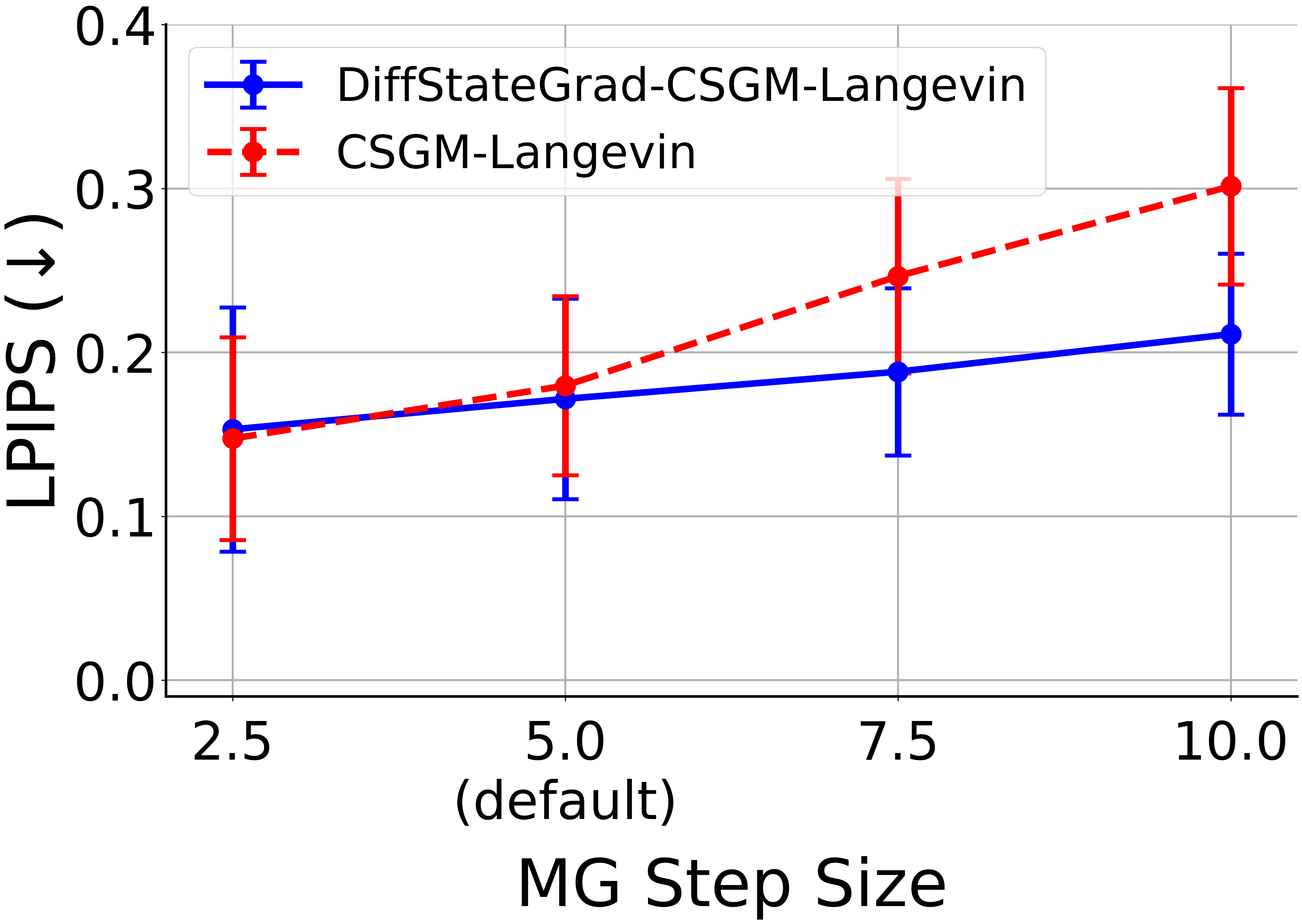}
            \vspace{-4mm}
            \caption{LPIPS.}
        \end{subfigure}
    \end{subfigure}
    \vspace{-4mm}
    \caption{\textbf{Robustness of DiffStateGrad to MG step size in MRI}. Comparing CSGM-Langevin~\citep{jalal2021robust} with DiffStateGrad-CSGM-Langevin on $30$ examples from the test set of the T2-weighted brain fastMRI dataset~\citep{zbontar2018fastMRI, knoll2020fastmri}. The measurement operator gives vertically and equispaced samples in the $k$-space with the undersampling rate of $R=4$.}
    \label{fig:mri}
\end{figure}

\subsection{NYU fastMRI dataset}

The data used in the above experiment was obtained from the NYU fastMRI Initiative database (\url{fastmri.med.nyu.edu})~\citep{zbontar2018fastMRI, knoll2020fastmri}. We note that NYU fastMRI investigators provided data but did not participate in the analysis or writing of this report. See their website for more information on the primary goal of fastMRI to advance machine learning research for the reconstruction of medical images.

\end{document}